\documentclass{article}

\usepackage{microtype}
\usepackage{graphicx}
\usepackage{subcaption}
\usepackage{booktabs} 

\usepackage{hyperref}
\usepackage{fancyhdr}
\usepackage{enumitem}
\usepackage{float} 
\usepackage{amsmath}

\usepackage[accepted]{icml2026}

\usepackage{amsmath}
\usepackage{amssymb}
\usepackage{mathtools}
\usepackage{amsthm}
\usepackage{enumitem}
\usepackage{multirow}
\usepackage{multicol}

\theoremstyle{plain}
\newtheorem{theorem}{Theorem}[section]
\newtheorem{proposition}[theorem]{Proposition}
\newtheorem{lemma}[theorem]{Lemma}

\theoremstyle{definition}

\theoremstyle{remark}

\graphicspath{{figures/}{tables/}}

\icmltitlerunning{SPAR: Support-Preserving Action Rectification}

\begin{document}

\twocolumn[
  \icmltitle{SPAR: Support-Preserving Action Rectification}

  \begin{icmlauthorlist}
    \icmlauthor{Jiaxin Zhao}{inst}
    \icmlauthor{Weihang Pan}{inst}
    \icmlauthor{Xun Liang}{inst}
    \icmlauthor{Binbin Lin}{inst}
  \end{icmlauthorlist}

  \icmlaffiliation{inst}{Zhejiang University}

  \icmlcorrespondingauthor{Weihang Pan}{panweihang@zju.edu.cn}

  \icmlkeywords{Offline Reinforcement Learning, Residual Policy Learning, Self-Imitation Learning}

  \vskip 0.3in
]

\printAffiliationsAndNotice{}  

\begin{abstract} Offline policy improvement faces an inherent conflict between maximizing value and fitting the data distribution. While in-sample weighted regression is stable, it suffers from over-conservatism that suppresses high-value actions in the distribution tail; conversely, gradient-based approaches often exhibit a fitting-optimization conflict of gradients, which drive the policy off the data manifold. To address this, we propose \textbf{S}upport-\textbf{P}reserving \textbf{A}ction \textbf{R}ectification (SPAR), which reframes global learning as a local residual rectification anchored to a frozen pure behavior cloning policy. This framework performs fine-grained fitting and local policy improvement in the residual space, thereby contracting the search space. We further introduce Latent Self-Imitation, utilizing a latent-sampling weighted-regression mechanism to address fitting-improvement gradient conflict in the residual space. Theoretically, we prove this mechanism eliminates the manifold-normal drift of standard value gradients, while extensive D4RL experiments show SPAR extracts significant gains from suboptimal baselines to achieve state-of-the-art performance. \end{abstract}

\section{Introduction}
\begin{figure}[t]
  \centering
  \includegraphics[width=\linewidth]{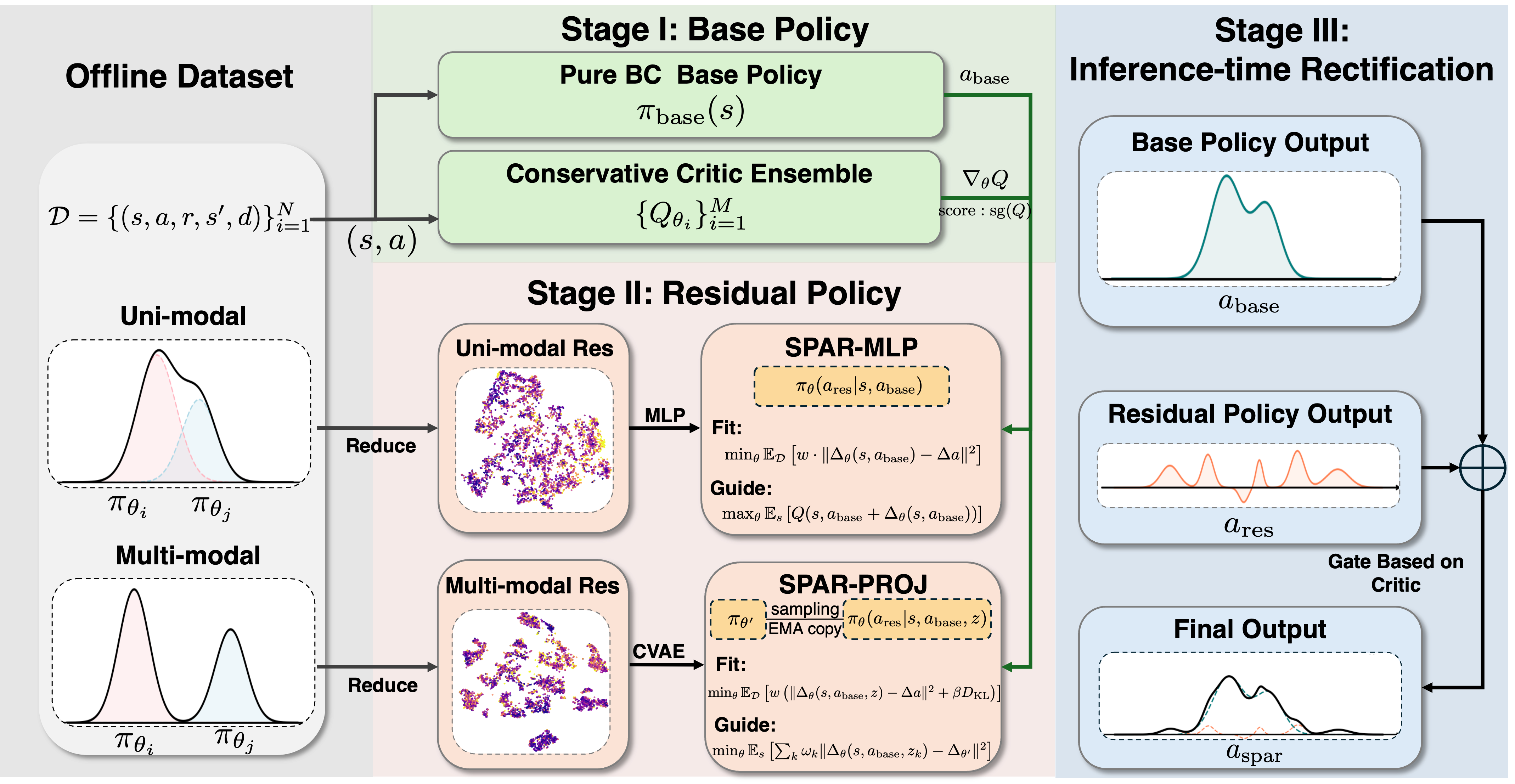} 
  \caption{An illustrative diagram of the Support-Preserving Action Rectification (SPAR) method.}
  \label{fig:SPAR}
  \vspace{-1.5em}
\end{figure}
Offline reinforcement learning (offline RL) aims to learn a decision-making policy purely from a fixed dataset of logged interactions without any additional environment access \citep{fu2020d4rl,levine2020offline,prudencio2023survey}.
A central limitation is finite coverage: the dataset only supports a non-uniform subset of the state--action space, leaving large regions effectively unconstrained by data \citep{fujimoto2019bcq,kumar2019bear,wu2019brac}.
To outperform suboptimal policy, algorithms must perform counterfactual queries—exploring action combinations rarely seen in the dataset to discover potential gains \citep{levine2020offline,kostrikov2022iql}.
As a result, reliable policy improvement in offline RL must reconcile two conflicting objectives: leveraging value estimates to discover potential gains while strictly respecting the dataset support to prevent out-of-distribution (OOD) collapse at deployment \citep{wu2022supported,nakamoto2023calql}.

Existing policy improvement methods broadly fall into two paradigms, each exhibiting a distinct geometric failure mode under limited coverage.
In-support (in-sample) learning treats improvement as weighted regression toward actions present in the dataset \citep{peng2019awr,nair2020awac,kostrikov2022iql}.
While this approach is robust to OOD querying, its maximum-likelihood bias emphasizes high-density regions of the behavior distribution, which can systematically suppress rare-but-high-value actions that lie in the long tail \citep{florence2022implicit}. Geometrically, overfitting discrete support shatters the continuous action manifold, obstructing meaningful interpolation and exploration along tangential directions and trapping the policy in suboptimal regions \citep{wang2022diffusionql,kang2023edp,chen2023srpo}.
In contrast, value-guided optimization updates the policy by ascending the gradient of a learned critic \citep{kumar2020cql,fujimoto2021td3bc,wang2022diffusionql,chen2024dtql}. However, in the offline regime the critic must extrapolate outside the data neighborhood, where approximation artifacts can create sharp spurious peaks that dominate the gradient field, causing the policy to exploit critic errors rather than true returns. Value gradients therefore contain large normal components that push actions off the data manifold into infeasible adversarial regions, destabilizing the policy and leading to catastrophic forgetting \citep{kirkpatrick2017overcoming,fujimoto2019bcq,sinha2022s4rl,roderick2023popql,geng2024hubl}.

In summary, the limitations of existing paradigms can be distilled into two levels: the structural coupling of conflicting objectives on a single policy via gradient backpropagation, and the practical challenge of balancing these objectives during optimization. Motivated by these issues, we use an objective-decoupled design that assigns fitting and improvement to separate stages with clear priorities. By implementing these objectives with varying intensities across separate networks and subsequently fusing them, we can prevent catastrophic forgetting \citep{silver2018residual,li2023residualq}. Concurrently, it is crucial to identify an improvement method that balances the fitting objective, ensuring the policy remains on-manifold while retaining sufficient exploration.

We propose \textbf{Support-preserving Action Rectification (SPAR)}, as shown in Fig.~\ref{fig:SPAR}.
Leveraging the spectral bias of neural networks \citep{rahaman2019spectral}, we train a pure behavior cloning(BC) base policy $\pi_{\text{base}}$ as a structural backbone that captures the global manifold topology.
Building on residual policy learning, we parameterize the learned policy as $\pi_\theta(s) \;=\; \pi_{\text{base}}(s) + \Delta_\theta(s,\pi_{\text{base}}(s))$,
where $\pi_{\text{base}}(s)$ is a base anchor and $\Delta_\theta$ is a residual policy capturing local corrections. This decomposition converts unconstrained optimization over the full action space into a localized refinement problem around a data-consistent scaffold (residual space), thereby shrinking the effective search space and preserving manifold-aware constraints from $\pi_{\text{base}}$. Second, SPAR introduces a latent self-imitation procedure inspired by self-imitation learning~\citep{oh2018self}: we sample residual candidates generated by the residual policy and combine them with base actions, then weight these synthesized actions by conservative value estimates, and finally regress $\Delta_\theta$ toward this value-weighted empirical residual distribution~\citep{peng2019awr,kostrikov2022iql}.
This procedure retains controlled exploration while rigorously restricting update directions to value-weighted residual support, addressing the geometric normal-drift pathology of unconstrained value gradients, balancing the two objectives.
The three-stage design operationalizes this decoupling by separating behavior fitting, conservative value estimation, and local residual improvement, so that each stage has an isolated and interpretable role. In practice, SPAR-PROJ can be used as a unified default for multimodal or uncertain residual geometry, while SPAR-MLP is a lightweight specialization for compact unimodal residuals.

Our contributions span theoretical insight, algorithm design, and empirical validation:

\begin{itemize}[topsep=0pt, partopsep=0pt, itemsep=0pt, parsep=0pt, leftmargin=*]
    \item We theoretically quantify the reduction in the policy search space achieved by residual learning and the gradient conflicts between critic guidance and fitting objectives.
    \item We propose \textbf{SPAR} (Support-Preserving Action Rectification), a framework that decouples fitting and improvement objectives. SPAR fits fine-grained residual details by analyzing the action residual distribution and improves the policy locally via latent self-imitation.
    \item We evaluate SPAR on the D4RL benchmark and observe strong performance across diverse domains and dataset qualities.
\end{itemize}

\paragraph{Conflict of Interest Disclosure.}
The authors declare no financial conflicts of interest.

\section{Related Work}
\subsection{Offline Policy Improvement via In-Support Learning}
A prevalent paradigm restricts policy updates to the dataset support, typically casting improvement as weighted supervised regression.
Classic methods like AWR, AWAC, and IQL extract policies via advantage-weighted behavioral cloning, avoiding explicit OOD queries to ensure stability \citep{peng2019awr,nair2020awac,kostrikov2022iql,wang2020crr}. Another line of work performs in-sample backups or evaluation to avoid querying unseen actions \citep{zhang2023sample,xiao2023sample}.
Recent works extend this to diffusion or flow models, leveraging their expressivity to capture multi-modal distributions \citep{wang2022diffusionql,kang2023edp,chen2023srpo,park2025flowql,zhang2025qipo}.
While safe, their maximum-likelihood bias towards high-density regions tends to cause discretization of the fitted manifold.
Drawing inspiration from Self-Imitation Learning (SIL), SPAR expands the sample source via sampling in latent space to perform in-domain interpolation, thereby better exploiting these potential gains. Unlike global diffusion or flow policies, SPAR performs this sampling in the local residual space around a frozen BC anchor.

\subsection{Offline Policy Improvement via Gradient Guidance}
A complementary paradigm seeks improvement by directly maximizing learned value estimates, typically manifesting as gradient-based policy optimization \citep{kumar2020cql,cheng2022atac} or value-gradient guided generation \citep{wang2022diffusionql,kang2023edp}. Recent diffusion- and flow-style methods refine this idea through constrained diffusion actor updates, flow Q-learning, flow actor-critic updates, and one-step flow policy extraction \citep{fang2025dac,park2025flowql,chae2026fac,nguyen2026ofql}. Despite the inclusion of pessimistic regularization, trust regions, or projection constraints for safety \citep{chen2024dtql}, the non-smooth nature of neural networks within the sample neighborhood can still yield spurious high-value artifacts, typically manifesting as sharp peaks with high curvature that obscure genuine smooth high-value interpolations on the manifold. In this scenario, the optimization process against the critic gradient essentially degenerates into generating adversarial examples for the critic \citep{sinha2022s4rl}. Consequently, the guidance signals contain large normal components relative to the plausible distribution manifold, pushing the policy into physically infeasible regions. In contrast, SPAR employs a derivative-free sampling mechanism in the anchored residual manifold, avoiding explicit gradient ascent on the action generator.

\subsection{Residual Reinforcement Learning (Residual RL)}
Residual RL parameterizes the policy as a correction to a baseline, which is designed to facilitate safe incremental adaptation in robotics \citep{silver2018residual, johannink2019residual}.
In the offline setting, this architecture provides a natural regularization by anchoring the optimization to the dataset manifold via the baseline behavior \citep{li2023residualq}.
However, standard residual parameterization does not strictly prevent value-guided updates from drifting off-support. SPAR addresses this by reformulating the residual update as a weighted regression on high-value samples, thereby structurally confining the policy correction to the empirical support.

\section{Method}
\label{sec:method}
We propose \textbf{SPAR} (Support-Preserving Action Rectification), a three-stage pipeline for baseline-anchored offline improvement.
\textbf{Stage I} establishes a pure behavior cloning (BC) policy as $\pi_{\mathrm{base}}$ the anchor and a conservative critic ensemble.
\textbf{Stage II} learns a support-preserving residual policy $\pi_{\text{res}}$.
\textbf{Stage III} (test-time) performs value-gated rectification, conditionally applying the residual only when robust improvement is predicted.

\subsection{Stage I: Conservative Anchor}
\label{sec:stage1}
Stage I establishes a stable initialization comprising a base policy and a robust value estimator.
We obtain the base policy \(\pi_{\mathrm{base}}\) via BC on \(\mathcal{D}\), deliberately avoiding advantage-weighted objectives.
The base policy provides a stable support anchor; multimodal local corrections are delegated to the residual policy in Stage II.

For value estimation, we employ an ensemble of critics \(\{Q_{\theta_i}\}_{i=1}^M\) and a state-value network \(V_\psi\). To eliminate the overestimation bias typical of OOD bootstrapping, we train these networks using strictly in-sample expectile regression. The default setting uses $\tau=0.5$, which approximates the conditional mean and keeps $V_\psi(s)$ tied to the behavior policy value. For sparse-reward AntMaze tasks, we use $\tau=0.9$ to improve critic discriminability; this only affects Stage I value estimation and leaves the shared Stage III gate unchanged.

We encapsulate the critic uncertainty via a Lower Confidence Bound (LCB)\citep{an2021uncertainty}:
\begin{equation}
Q_{\mathrm{rob}}(s,a) = \mu_Q(s,a) - \lambda_u \sigma_Q(s,a),
    \label{eq:qrob}
\end{equation}
where $\mu_Q(s,a)$ is the ensemble mean, \(\sigma_Q(s,a)\) is the ensemble standard deviation. This $Q_{\mathrm{rob}}$ serves as a conservative score for Stage II. Once trained, both $\pi_{\mathrm{base}}$ and $Q_{\mathrm{rob}}$ are frozen.

\subsection{Stage II: Support-Preserving Residual Learning}
\label{sec:stage2}

Stage II parameterizes the policy as a correction around the baseline: $a_{\mathrm{spar}}(s,a) = \pi_{\mathrm{base}}(s) + \mathrm{G}(\pi_{\mathrm{res}}(s, \pi_{\mathrm{base}}(s)))$, where $\pi_{\mathrm{base}}$ is the anchor action output and $\mathrm{G}(\cdot)$ is the gated residual output. The total objective of SPAR is:
\begin{equation}
    \mathcal{L}_{\mathrm{total}}(\theta) = \mathcal{L}_{\mathrm{fit}}(\theta) + \lambda_g \mathcal{L}_{\mathrm{guide}}(\theta).
\end{equation}
Implementation-wise, we introduce a unified weighting component in Sec~\ref{subsec:weighting_strategy} that serves the entire residual policy learning framework, applying to both the fitting and latent self-imitation objectives. We then detail the specific parameterization of the residual policy in Sec~\ref{subsec:residual_modeling} and the latent self-imitation mechanism in Sec~\ref{subsec:Latent Self-Imitation}.

\subsubsection{Space Contraction in Residual Learning}
\label{subsec:Localized Search via Residual}

Residual learning achieves efficiency by contracting the hypothesis space volume. Let $\mathcal{A} \subset \mathbb{R}^d$ denote the global action space reduced by the static dataset with diameter $D_{\mathcal{A}} = \sup_{x, y \in \mathcal{A}} \|x - y\|_2$, and define the global search space as $\Omega_{\mathrm{global}} = \mathcal{A}$. For residual learning, we restrict refinement to a local neighborhood of a frozen base policy $\pi_{\mathrm{base}}$. The locality scale is quantified by the $(1-\rho)$-quantile of residual magnitudes measured directly from the offline dataset:
\begin{equation}
    \delta_\rho \;\triangleq\; \inf\Big\{\delta>0:\Pr_{(s,a)\sim\mathcal{D}}\big(\|a-\pi_{\mathrm{base}}(s)\|_2\le\delta\big)\ge 1-\rho\Big\},
\end{equation}
which defines the residual search space $\Omega_{\mathrm{res}} = \{ \Delta a \mid \| \Delta a \|_2 \le \delta_\rho \}$ with effective diameter $D_{\mathrm{res}} = 2\delta_\rho$.

Under mild regularity assumptions ($Q(s,\cdot)$ is $L$-Lipschitz continuous and value observations are $\sigma$-sub-Gaussian), the effective data requirement for identifying an $\epsilon$-optimal action scales with the metric entropy of the search region.
By restricting the search to a local neighborhood around the base policy, the hypothesis space volume is contracted, yielding a polynomial-logarithmic reduction in the covering number, thereby concentrating the available sample budget to facilitate reliable estimation at resolution $\epsilon/(2L)$. We formally quantify this statistical benefit in the following theorem (the proof is in \ref{sec:proof_thm31}):

\begin{theorem}
\label{thm:sample_complexity}
Let $\epsilon > 0$ denote the optimality gap tolerance ($Q(s, \pi(s)) \geq \max_{a\in \Omega} Q(s, a) - \epsilon$), 
$\beta \in (0,1)$ the probability that the returned action is not $\epsilon$-optimal within $\Omega$, and $\sigma > 0$ the sub-Gaussian scale parameter of value observations.
Under the assumptions that $Q(s,\cdot)$ is $L$-Lipschitz continuous 
and value estimates are $\sigma$-sub-Gaussian, the number of samples 
required to identify an $\epsilon$-optimal action within search region 
$\Omega$ with probability at least $1-\beta$ satisfies
\begin{equation}
 N(\epsilon, \Omega) = \tilde{\Theta} \left( \frac{\sigma^2}{\epsilon^2} \cdot\, 
    \mathcal{N}\!\big(\Omega, \tfrac{\epsilon}{2L}\big)\cdot \tfrac{d}{\beta}\cdot \log{D} \right),
\end{equation}
where $\tilde{\Theta}$ absorbs linear factors, $\mathcal{N}(\Omega, r) \propto {D^d}$ and $d$ is the dimension of action space.
\end{theorem}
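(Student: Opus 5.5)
The plan is to prove matching upper and lower bounds by reducing $\epsilon$-optimal action identification over $\Omega$ to a best-arm identification problem over a finite cover.

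\textbf{Discretization.} Fix an $\tfrac{\epsilon}{2L}$-net $\mathcal{C}=\{c_1,\dots,c_K\}$ of $\Omega$ with $K=\mathcal{N}(\Omega,\tfrac{\epsilon}{2L})$. By the $L$-Lipschitz assumption, every $a\in\Omega$ has some $c_k$ with $|Q(s,a)-Q(s,c_k)|\le \epsilon/2$. Hence $\max_k Q(s,c_k)\ge \max_{a\in\Omega}Q(s,a)-\epsilon/2$, so it suffices to return an $\epsilon/2$-optimal net point. Using the standard volumetric estimate, $K\le (1+2D L/\epsilon)^d$ and $K\ge (D L/\epsilon)^d\cdot c_d$, which gives $\mathcal{N}(\Omega,r)\propto D^d$ for fixed $\epsilon,L$. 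This is the only place the diameter enters multiplicatively.

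\textbf{Upper bound.} Query each net point $n$ times. By $\sigma$-sub-Gaussianity, Hoeffding's inequality gives $\Pr(|\hat Q_k-Q_k|>\epsilon/4)\le 2\exp(-n\epsilon^2/(32\sigma^2))$. A union bound over $K$ arms shows that $n=O\!\big(\tfrac{\sigma^2}{\epsilon^2}\log\tfrac{K}{\beta}\big)$ suffices for the empirical argmax to be $\epsilon/2$-optimal among net points, hence $\epsilon$-optimal in $\Omega$. The total is $N=Kn$. Writing $\log(K/\beta)=d\log(DL/\epsilon)+\log(1/\beta)$ produces the $d\cdot\log D$ factor, with the remaining logs absorbed into $\tilde\Theta$. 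The stated $1/\beta$ rather than $\log(1/\beta)$ I would treat as a (loose) upper bound that also dominates $\log(1/\beta)$; alternatively I would use Markov/median-type arguments, which yield the $1/\beta$ form directly.

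\textbf{Lower bound.} This is the main obstacle. I would construct $K'$ hard instances via packing: take an $\tfrac{\epsilon}{L}$-packing of $\Omega$ of size $\asymp\mathcal{N}(\Omega,\tfrac{\epsilon}{2L})$. Place at each packing point a Lipschitz bump of height $2\epsilon$ and radius $\epsilon/L$ on a zero baseline, and add Gaussian noise of variance $\sigma^2$. Any algorithm that identifies an $\epsilon$-optimal action must determine which bump is raised. By a change-of-measure / Fano argument (Mannor–Tsitsiklis style), the KL divergence between instances is at most $\tfrac{\epsilon^2}{\sigma^2}$ times the number of samples landing in the bump. This forces $\Omega\!\big(\tfrac{\sigma^2}{\epsilon^2}K\log\tfrac1\beta\big)$ samples, matching the upper bound up to the logarithmic factors that $\tilde\Theta$ absorbs.

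The delicate parts are twofold. First, keeping the bumps disjoint so they remain $L$-Lipschitz while the gap stays $\Theta(\epsilon)$. Second, reconciling the $d/\beta\cdot\log D$ bookkeeping, which I would handle by stating explicitly which polylogarithmic factors are absorbed.

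Finally, as a corollary, I would compare $\Omega_{\mathrm{global}}$ with $\Omega_{\mathrm{res}}$, with diameters $D_{\mathcal A}$ and $2\delta_\rho$, to obtain the ratio $(D_{\mathcal A}/2\delta_\rho)^d$.
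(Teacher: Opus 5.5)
Your upper bound is the paper's proof essentially line for line. Both use the $\tfrac{\epsilon}{2L}$-cover, Hoeffding at accuracy $\epsilon/4$ with $m=(32\sigma^2/\epsilon^2)\log(2K/\beta)$ queries per net point, the union bound, the empirical argmax and the total $Km$. Both finish with the volumetric bound $(1+2DL/\epsilon)^d$ and the $(D_{\mathcal A}/2\delta_\rho)^d$ ratio. The paper proves nothing beyond this: it has no lower bound, and it simply relabels the sufficiency bound as $\tilde\Theta$ while absorbing $\log(2K/\beta)$.

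Your lower-bound section therefore goes beyond the paper, and with the stated constants it does not work. A bump of height $2\epsilon$ and radius $\epsilon/L$ has slope $2L$, so the instances are not $L$-Lipschitz. Radius-$\epsilon/L$ bumps centred on an $\epsilon/L$-packing can also overlap. To repair it you need a gap above $\epsilon$, $L$-Lipschitz instances, and disjoint supports so that $\sum_j\mathbb{E}_0[n_j]\le N$. Together these force centre separation above $2\epsilon/L$, which is at least four times coarser than your cover. The packing size then agrees with $\mathcal{N}(\Omega,\tfrac{\epsilon}{2L})$ only up to a factor exponential in $d$, not a logarithmic one. Similarly, your lower estimate $K\ge c_d(DL/\epsilon)^d$ needs $\Omega$ to contain a ball of radius $\asymp D$; a segment of length $D$ has covering number $\asymp DL/\epsilon$. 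That condition holds for the ball $\Omega_{\mathrm{res}}$ and a box such as $[-1,1]^d$, but it does not follow from the diameter alone.

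The more basic problem is that no lower bound can match the displayed expression. Your own upper bound is $\tfrac{\sigma^2}{\epsilon^2}K\bigl(\log\tfrac1\beta+d\log(1+\tfrac{2DL}{\epsilon})\bigr)$. This is $o\bigl(\tfrac{\sigma^2}{\epsilon^2}K\cdot\tfrac{d}{\beta}\log D\bigr)$ as $\beta\to0$. The gap between $1/\beta$ and $\log(1/\beta)$ is not polylogarithmic, so $\tilde\Theta$ cannot absorb it.

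A repaired Mannor--Tsitsiklis change-of-measure argument gives a lower bound of order $\tfrac{\sigma^2}{\epsilon^2}\mathcal{M}(\Omega,c\epsilon/L)\log\tfrac1\beta$. Your two halves would then establish $\tilde\Theta\bigl(\tfrac{\sigma^2}{\epsilon^2}K\log\tfrac1\beta\bigr)$ up to $c^d$, which is true but a different statement.

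The displayed formula is provable only as an upper bound. Set $a=d\log(1+\tfrac{2DL}{\epsilon})$ and assume $a\ge1$. Using $x-1\ge\log x$ at $x=1/\beta$ gives $\log(K/\beta)\le a+\log\tfrac1\beta\le a/\beta$. That one-sided reading is exactly what the paper's argument delivers. Present your result that way, or state your sharper two-sided $\log(1/\beta)$ version explicitly. Drop the Markov/Chebyshev detour, which would only weaken the bound.
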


Since $\delta_\rho <  D_{\mathcal{A}}$ is observed in the ten selected D4RL environments as shown in Fig.~\ref{fig:action_space_size} in Appendix~\ref{sec:proofs}, this contraction yields a polynomial-logarithmic reduction in estimation complexity. The trade-off is quantified by the approximation error
\begin{equation}
    \varepsilon_{\mathrm{app}}(s;\delta_\rho) = \max_{a\in\mathcal{A}} Q(s,a) - \max_{\|a-\pi_{\mathrm{base}}(s)\|\le\delta_\rho} Q(s,a),
\label{eq:qaunted_error}
\end{equation}
which decomposes into the data coverage gap $\varepsilon_{\mathrm{cov}}$ and localization bias $\varepsilon_{\mathrm{loc}}$. The latter admits the following bound (the proof is in \ref{sec:proof_lem32}):

\begin{lemma}
\label{lem:lipschitz_local_bias}
Let $a^\mu(s)$ denote the optimal action within the behavior support $\mathrm{supp}(\mu) = \{a \mid \mu(a|s) > 0\}$. Under the $L$-Lipschitz continuity of $Q(s,\cdot)$,
\begin{equation}
    \varepsilon_{\mathrm{loc}}(s;\delta_\rho) \;\le\; L \cdot \big[ \|a^\mu(s)-\pi_{\mathrm{base}}(s)\|_2 - \delta_\rho \big]_+,
\end{equation}
where $[x]_+ \triangleq \max(x,0)$ denotes the positive part operator.
\end{lemma}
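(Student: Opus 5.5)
The plan is to first pin down the definition of $\varepsilon_{\mathrm{loc}}$, since the paper only says that $\varepsilon_{\mathrm{app}}$ in \eqref{eq:qaunted_error} splits into a coverage gap and a localization bias. The natural split is
\[
\varepsilon_{\mathrm{cov}}(s) = \max_{a\in\mathcal{A}} Q(s,a) - Q(s,a^\mu(s)),
\]
\[
\varepsilon_{\mathrm{loc}}(s;\delta_\rho) = Q(s,a^\mu(s)) - \max_{\|a-\pi_{\mathrm{base}}(s)\|_2\le\delta_\rho} Q(s,a).
\]
These two terms telescope to $\varepsilon_{\mathrm{app}}$. So $\varepsilon_{\mathrm{loc}}$ measures how much value is lost by confining the search to the ball $B(\pi_{\mathrm{base}}(s),\delta_\rho)$ instead of reaching the best in-support action $a^\mu(s)$. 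I will take $a^\mu(s)$ to be attained, for example because the support is compact; otherwise I will use an $\eta$-maximizer and let $\eta\to 0$.

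The argument then splits into two cases according to $r := \|a^\mu(s)-\pi_{\mathrm{base}}(s)\|_2$.

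\emph{Case $r\le\delta_\rho$.} Here $a^\mu(s)$ lies in the ball. The ball maximum is therefore at least $Q(s,a^\mu(s))$, so $\varepsilon_{\mathrm{loc}}\le 0 = L[r-\delta_\rho]_+$.

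\emph{Case $r>\delta_\rho$.} I will use the explicit competitor obtained by radially projecting $a^\mu(s)$ onto the ball:
\[
\tilde a = \pi_{\mathrm{base}}(s) + \frac{\delta_\rho}{r}\big(a^\mu(s)-\pi_{\mathrm{base}}(s)\big).
\]
This point satisfies $\|\tilde a-\pi_{\mathrm{base}}(s)\|_2=\delta_\rho$ and $\|a^\mu(s)-\tilde a\|_2 = r-\delta_\rho$. Since $\tilde a$ is feasible, the ball maximum is at least $Q(s,\tilde a)$. Lipschitz continuity then gives
\[
\varepsilon_{\mathrm{loc}} \le Q(s,a^\mu(s)) - Q(s,\tilde a) \le L\,(r-\delta_\rho).
\]
Combining the two cases yields the claim with the positive part.

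The computation itself is routine; the main obstacle is the domain of the Lipschitz assumption. The point $\tilde a$ lies on the segment between $\pi_{\mathrm{base}}(s)$ and $a^\mu(s)$, so it may fall outside $\mathrm{supp}(\mu)$ or even outside $\mathcal{A}$. I will handle this by reading the hypothesis as $Q(s,\cdot)$ being $L$-Lipschitz on $\mathbb{R}^d$, or on a convex set containing $\mathcal{A}$ and the ball, with the inner maximum in $\varepsilon_{\mathrm{loc}}$ taken over the same set where the residual search runs. If $\mathcal{A}$ is not convex, I would replace $\tilde a$ by a nearest feasible point, which weakens the bound. I will state this convention explicitly rather than leave it implicit.
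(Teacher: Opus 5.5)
Your proof is correct and follows the paper's argument step for step: the same two-case split on $\|a^\mu(s)-\pi_{\mathrm{base}}(s)\|_2$ versus $\delta_\rho$, the same radial projection $\tilde a$ onto the $\delta_\rho$-ball, and the same Lipschitz estimate $Q(s,a^\mu(s))-Q(s,\tilde a)\le L(r-\delta_\rho)$. The paper leaves implicit two things you make explicit: the definition of $\varepsilon_{\mathrm{loc}}$, and the requirement that $Q(s,\cdot)$ be Lipschitz on a set containing the segment to $\tilde a$; also, in Case 1 you rightly stop at $\varepsilon_{\mathrm{loc}}\le 0$ instead of asserting, as the paper does, that $\varepsilon_{\mathrm{loc}}\ge 0$ by construction, which need not hold because the ball may contain higher-value actions outside the support.
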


By construction, the $\delta_\rho$-neighborhood preserves $1-\rho$ fraction of the behavior support mass, ensuring $\|a^\mu(s)-\pi_{\mathrm{base}}(s)\|_2 \le \delta_\rho$ for most states. Consequently $\varepsilon_{\mathrm{loc}}$ remains modest in practice. Crucially, SPAR trades this controlled bias for huge gains in statistical efficiency, converting an ill-posed global optimization problem into a well-posed local refinement task while respecting the fundamental boundary imposed by data coverage $\varepsilon_{\mathrm{cov}}$.
\subsubsection{Adaptive Weighting: Filtering and Sensitivity}
\label{subsec:weighting_strategy}
For $\mathcal{L}_{\mathrm{guide}}$ implemented by latent self-imitation and for all $\mathcal{L}_{\mathrm{fit}}$ implementations, we adopt a unified weighting pattern $w(s,a)$ based on the normalized advantage $\tilde{A}(s, a) = (Q_{\mathrm{rob}}(s, a) - Q_{\mathrm{rob}}(s, a_{\mathrm{base}})) / \sigma_Q$ to perform weighted regression. We structure the weighting along two orthogonal axes, yielding four distinct regimes:

\begin{itemize}[leftmargin=*, noitemsep, topsep=2pt]
    \item \textbf{Sensitivity (Uniform vs. Weighting) } Controls positive-gain scaling. Exponential weighting defines the base magnitude $w_{\text{base}} = \exp(\tilde{A}/T)$ to approximate Boltzmann optimality. Uniform weighting sets $w_{\text{base}} = 1$ to treat all valid supports equally.
    \item \textbf{Filtering (Hard vs. Soft) } Controls negative-gain handling. Hard filtering applies the mask $w_{\mathrm{hard}} \leftarrow w_{\mathrm{base}} \cdot \mathbb{I}_{\{\tilde{A}>0\}}$ to strictly prune suboptimal transitions, while Soft filtering applies identity $w_{\mathrm{soft}} \leftarrow w_{\mathrm{base}}$ to retain full connectivity.

\end{itemize}
This formulation allows SPAR to adapt the stability-connectivity trade-off: selecting aggressive filtering for dense-reward tasks to maximize optimality, or conservative retention for sparse-reward tasks to maintain manifold connectivity.

\subsubsection{Residual Modeling}
\label{subsec:residual_modeling}
We model the residual distribution via weighted reconstruction, where weights $w(s,a)$ emphasize high-value regions while preserving support geometry. The policy parameterization adapts to residual topology:

\textbf{Deterministic Regressor (SPAR-MLP).} For unimodal residuals, we employ a direct regressor $\pi_{\mathrm{res}}: \mathcal{S} \times \mathcal{A} \to \mathbb{R}^d$ with total objective
\begin{equation}
\begin{aligned}
&\mathcal{L}_{\mathrm{total}}^{\mathrm{MLP}} = \mathbb{E}_{(s,a) \sim \mathcal{D}} \big[  w(s,a) \cdot \| \pi_{\mathrm{res}}(s, \pi_{\mathrm{base}}(s)) \\
& - (a - \pi_{\mathrm{base}}(s)) \|_2^2 \big] - \lambda_g \, \mathbb{E}_{s \sim \mathcal{D}} \big[ Q_{\mathrm{rob}}(s, a_{\mathrm{synth}}) \big],
\end{aligned}
\end{equation}

where $a_{\mathrm{synth}} = \pi_{\mathrm{base}}(s) + \pi_{\mathrm{res}}(s, \pi_{\mathrm{base}}(s))$. The second term performs unconstrained gradient ascent on $Q_{\mathrm{rob}}$ for policy improvement.

\textbf{Generative Model (SPAR-CVAE).} For multimodal residuals, we adopt a conditional VAE with encoder $q_\phi(z \mid s, \Delta a)$ and decoder $\pi_{\mathrm{res}}(s, \pi_{\mathrm{base}}(s), z)$, trained via weighted ELBO:
\begin{equation}
\begin{aligned}
&\mathcal{L}_{\mathrm{fit}}^{\mathrm{CVAE}} = \mathbb{E}_{(s,a) \sim \mathcal{D}} \Big[  w(s,a) \cdot \| \pi_{\mathrm{res}}(s, \pi_{\mathrm{base}}(s), z) \\
& - (a - \pi_{\mathrm{base}}(s)) \|_2^2 \Big] + \beta \cdot \mathrm{KL}\big(q_\phi(z \mid s, \Delta a) \,\|\, p(z)\big),
\end{aligned}
\end{equation}

where $z \sim q_\phi(\cdot \mid s, a - \pi_{\mathrm{base}}(s))$. Policy improvement is achieved through latent self-imitation (Section~\ref{subsec:Latent Self-Imitation}), not direct gradient ascent.
The advantage weight is applied only to the reconstruction term: it selects high-value residual actions to fit, while the KL term remains a global latent regularizer. Weighting the KL term would over-regularize high-advantage samples toward the prior and weaken the preservation of useful multimodal residual structure. We choose a CVAE for the local residual distribution around a frozen anchor; diffusion- or flow-based generators remain compatible alternatives, as examined in Appendix~\ref{app:additional_experiments}.

\subsubsection{Latent Self-Imitation}
\label{subsec:Latent Self-Imitation}

We apply latent self-imitation to the residual policy of SPAR-CVAE, resulting in \textbf{SPAR-PROJ}. To resolve the fitting-improvement conflict, we propose latent self-imitation: a derivative-free method that guides policy improvement by a latent-sampling weighted regression mechanism. Due to its sampling requirement, this approach applies exclusively to generative policies and not to deterministic policies.

\textbf{Algorithm Design.}
For state $s$, the CVAE decoder induces a residual manifold $\mathcal{M}_s = \{ \pi_{\mathrm{res}}(s,\pi_{\mathrm{base}}(s),z) \mid z\in\mathcal{Z} \}$. We maintain a target policy $\pi_{\mathrm{res}}'$ (EMA copy of $\pi_{\mathrm{res}}$), sample $K$ latent candidates $\{z_k\}_{k=1}^K \sim p(z)$, and decode residuals $\Delta a_k = \pi_{\mathrm{res}}'(s,\pi_{\mathrm{base}}(s),z_k) \in \mathcal{M}_s$, which yield candidate actions $a_k = a_{\mathrm{base}} + \Delta a_k$. Candidate advantages and normalized weights are computed using the frozen critic in Stage I (\ref{sec:stage1}):
\begin{equation}
\omega_k = \frac{w(\tilde{A}_k)}{\sum_j w(\tilde{A}_j)}, \quad
\tilde{A}_k = \frac{Q_{\mathrm{rob}}(s,a_k) - Q_{\mathrm{rob}}(s,a_{\mathrm{base}})}{\sigma_Q}.
\label{eq:weights_proj}
\end{equation}
The guidance loss regresses the student policy toward these weighted targets with stop-gradient applied to $\omega_k$ and $\Delta a_k$:
\begin{equation}
\begin{split}
&\mathcal{L}_{\mathrm{guide}}(\theta) = \\
&\quad \mathbb{E}_{s \sim \mathcal{D}} \Bigg[ \sum_{k=1}^K \omega_k \,
\big\| \Delta a_{\mathrm{res}}(s,\pi_{\mathrm{base}}(s),z_k;\theta) - \Delta a_k \big\|_2^2 \Bigg].
\end{split}
\label{eq:loss_proj}
\end{equation}

\textbf{Geometric Safety.}
The following proposition (the proof is in \ref{sec:proof_prop_34}) formalizes why latent self-imitation avoids off-manifold drift: by restricting updates to chords within the residual manifold, it achieves second-order drift suppression compared to the linear deviation of gradient ascent.
\begin{proposition}
\label{prop:manifold_drift}
Assume the residual manifold $\mathcal{M}_s$ is $C^2$-smooth with maximum curvature $\kappa$. Let $d(x,\mathcal{M}_s)$ denote the Euclidean distance from point $x$ to $\mathcal{M}_s$.
\begin{enumerate}[leftmargin=*,itemsep=2pt]
\item[(i)] \textbf{Convex Combination.} For fixed samples $\{z_k,\Delta a_k,\omega_k\}$ with $\Delta a_k \in \mathcal{M}_s$ and $\sum_k \omega_k = 1$, the minimizer of loss \eqref{eq:loss_proj} is the convex combination $\Delta a_{\mathrm{res}}^*(s) = \sum_{k=1}^K \omega_k \Delta a_k$.

\item[(ii)] \textbf{Second-Order Safety.} By (i), updates for $K=2$ lie on the chord connecting two manifold points. Specifically, the chord $x_\alpha = (1-\alpha)x + \alpha y$ between $x,y \in \mathcal{M}_s$ satisfies
\begin{equation}
\sup_{\alpha \in [0,1]} d(x_\alpha, \mathcal{M}_s) \le \frac{\kappa}{8} \|y - x\|_2^2,
\end{equation}
By comparison, gradient ascent $x^+ = x + \eta v_{\mathrm{grad}}$ with $v_{\mathrm{grad}} \propto \nabla_a Q_{\mathrm{rob}}$ generically yields linear drift $d(x^+, \mathcal{M}_s) = \Theta(\eta \|v_{\mathrm{grad}}\|_2)$.
\end{enumerate}
\end{proposition}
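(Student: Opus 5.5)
For part (i), I will treat the loss \eqref{eq:loss_proj} at fixed $s$ as a function of the decoder output. The weights $\omega_k$ and targets $\Delta a_k$ are stop-gradient constants, and $\sum_k\omega_k=1$ with $\omega_k\ge 0$. The natural reading of ``the minimizer'' is the best single output $x\in\mathbb{R}^d$ that the student can place for this state. This is exactly the regime in which the update target is shared across samples, and it is the realizable limit in which the decoder's dependence on $z_k$ collapses to a single point. The objective $f(x)=\sum_k\omega_k\|x-\Delta a_k\|_2^2$ is strictly convex. Setting $\nabla f(x)=2\sum_k\omega_k(x-\Delta a_k)=0$ and using $\sum_k\omega_k=1$ gives $x^*=\sum_k\omega_k\Delta a_k$. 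Convexity of the weights then places this point in the convex hull of the decoded residuals. I will also record the bias--variance identity
\[
f(x)=\|x-x^*\|_2^2+\sum_k\omega_k\|\Delta a_k-x^*\|_2^2 .
\]
It shows that gradient steps on $\theta$ move the output along $x^*-x$, which is the direction used in (ii).

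For part (ii), take $K=2$, so $x^*=(1-\alpha)x+\alpha y$ with $x,y\in\mathcal{M}_s$ and $\alpha=\omega_2$; the update lies on the chord. To bound $d(x_\alpha,\mathcal{M}_s)$ I will use a curve on the manifold rather than a graph argument. Assume $x$ and $y$ are joined by a $C^2$ curve $\gamma:[0,\ell]\to\mathcal{M}_s$ with $\gamma(0)=x$, $\gamma(\ell)=y$, and $\|\ddot\gamma\|\le\kappa$ under arc-length-like parametrisation. The plan is to compare $x_\alpha$ with $\gamma(t)$ at the matching parameter $t=\alpha\ell$. The error $e(t)=\gamma(t)-[(1-t/\ell)x+(t/\ell)y]$ vanishes at both endpoints and satisfies $\|\ddot e\|\le\kappa$. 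The standard linear-interpolation error bound, applied componentwise via the scalar function $\langle e(t),u\rangle$ for a fixed unit vector $u$, gives $\|e(t)\|\le\frac{\kappa}{2}t(\ell-t)\le\frac{\kappa}{8}\ell^2$.

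The main obstacle is replacing $\ell^2$ by $\|y-x\|_2^2$. I would handle it by parametrising $\gamma$ proportionally to the chord, for example as a graph over the chord direction in a local chart, so that $\ell=\|y-x\|_2$ and $\kappa$ bounds the second derivative in that parametrisation. This is valid when $\|y-x\|$ is below the reach of $\mathcal{M}_s$; for nearby latent samples I will state that as a standing locality assumption. In that parametrisation, $d(x_\alpha,\mathcal{M}_s)\le\|x_\alpha-\gamma(\alpha\ell)\|\le\frac{\kappa}{8}\|y-x\|_2^2$ uniformly in $\alpha$.

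For the gradient-ascent comparison, I will decompose $v_{\mathrm{grad}}=v_T+v_N$ into its components in $T_x\mathcal{M}_s$ and the normal space. ``Generically'' means $v_N\neq 0$, since $\nabla_aQ_{\mathrm{rob}}$ carries no manifold information. Because the nearest-point projection is smooth near $\mathcal{M}_s$ and has derivative equal to the tangential projector at $x$, a first-order expansion gives $d(x+\eta v,\mathcal{M}_s)=\eta\|v_N\|+O(\eta^2\kappa\|v\|^2)$. Provided $\|v_N\|\ge c\|v\|$ with $c>0$, this is $\Theta(\eta\|v_{\mathrm{grad}}\|)$: linear drift, in contrast with the quadratic chord bound.
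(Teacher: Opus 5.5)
Your part (i) and your gradient-ascent comparison follow the paper's argument: stationarity of the weighted quadratic in the output, then a tangent/normal splitting with a nonvanishing normal component. Your explicit condition $\|v_N\|_2\ge c\|v_{\mathrm{grad}}\|_2$ is what makes the $\Theta$ uniform. One small inaccuracy in (i): a gradient step on $\theta$ moves the output along $JJ^{\top}(x^*-x)$, with $J$ the output Jacobian, not along $x^*-x$. So only the minimizer lies on the chord, not each step.

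For the chord bound you take a cleaner route than the paper. The paper bounds the normal component $\|(I-P_x)(y-x)\|_2\le\frac{\kappa}{2}\|y-x\|_2^2$ and then asserts the $\frac{\kappa}{8}$ bound ``by Taylor expansion''. Your interpolation argument is sound as far as it goes. If $\gamma$ is a minimizing unit-speed geodesic, then $\ddot\gamma=II(\dot\gamma,\dot\gamma)$, so $\|\ddot\gamma\|_2\le\kappa$. The componentwise bound then gives $\sup_{\alpha}d(x_\alpha,\mathcal{M}_s)\le\frac{\kappa}{8}\,d_{\mathcal{M}_s}(x,y)^2$, where $d_{\mathcal{M}_s}$ is the intrinsic (geodesic) distance.

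The gap is the step you flagged, replacing $\ell^2$ by $\|y-x\|_2^2$, and your proposed fix fails. A curve on $\mathcal{M}_s$ written as a graph over the chord is not a geodesic, and its second derivative is not bounded by $\kappa$. For a planar graph $t\mapsto(t,h(t))$ one has $|h''|=\kappa_\gamma(1+h'^2)^{3/2}$, which exceeds $\kappa_\gamma$ wherever $h'\neq0$. Such points must exist, since $h$ vanishes at both ends and is not identically zero. In higher dimension the acceleration also has a geodesic-curvature part that $\kappa$ does not control at all.

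No locality or reach assumption rescues this, because the inequality with the Euclidean chord length is false. Take a circle of radius $R$, so $\kappa=1/R$, and a chord subtending angle $2\theta$. Its midpoint lies at distance $R(1-\cos\theta)=2R\sin^2(\theta/2)$ from the circle. But $\frac{\kappa}{8}\|y-x\|_2^2=\frac{R}{2}\sin^2\theta=2R\sin^2(\theta/2)\cos^2(\theta/2)$, which is strictly smaller for every $\theta\in(0,\pi)$. This includes arbitrarily short chords far below the reach $R$. The paper's unexplained ``Taylor expansion'' step has the same hole.

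What is actually provable is one of two corrected forms:
\begin{enumerate}
\item the intrinsic bound above, given a minimizing geodesic (e.g.\ $\mathcal{M}_s$ connected and complete);
\item under a positive-reach assumption, which makes $d_{\mathcal{M}_s}(x,y)/\|y-x\|_2\to1$ for short chords, the bound $\frac{\kappa}{8}\|y-x\|_2^2\,(1+o(1))$.
\end{enumerate}
Both are still quadratic and suffice for the contrast with the $\Theta(\eta)$ drift of gradient ascent. You should state (ii) in one of these forms rather than try to force the exact constant.
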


According to this proposition, SPAR-PROJ resolves the fitting-improvement conflict by latent self-imitation.

\subsection{Stage III: Deployment-Time Rectification}
Finally, SPAR enforces baseline-anchored safety at inference.
We accept the residual action $a_{\text{res}}$ only if the 
predicted baseline-relative improvement exceeds thresholds:
\begin{equation}
\mathrm{G}(a_{\mathrm{res}})
=
\begin{cases}
a_{res},
&
\begin{aligned}[t]
\text{if }\Delta Q_{rob} > \eta_{\text{abs}} \land \frac{\Delta Q_{rob}}{|Q_{\text{rob}}^{\text{base}}| + \epsilon} > \eta_{\text{rel}}
\end{aligned}
\\
0,
&
\text{otherwise.}
\end{cases}
\label{eq:rectify}
\end{equation}
We use the shared default thresholds $\eta_{\mathrm{abs}}=10^{-4}$ and $\eta_{\mathrm{rel}}=0.01$ across tasks, so Stage III acts as a conservative safety filter with fixed deployment settings.

\section{Experiments}
\label{sec:experiments}
We evaluate SPAR on the D4RL (CORL version) benchmark~\citep{fu2020d4rl}. To rigorously isolate the contribution of the residual learning mechanism, we employ a fixed, standard Behavior Cloning (BC) policy as the anchor $\pi_\beta$ across all experiments. This design verifies whether SPAR can extract optimal performance solely through action rectification, independent of complex generative backbones. We focus on four research questions:

\begin{description}[style=unboxed, topsep=0pt, itemsep=0pt, parsep=0pt]
    \item[Q1: Rectification Efficacy (Sec.~\ref{sec:main_results}).] Can SPAR consistently outperform strong offline baselines even when anchored to a simple, unimodal BC policy?
    
    \item[Q2: Topology-Dependent Architecture (Sec.~\ref{sec:analysis_arch}).] Does the residual distribution explain the advantages and disadvantages of SPAR's policy modeling?

    \item[Q3: Support-Preserving Improvement (Sec.~\ref{sec:analysis_conflict}).] Does the latent self-imitation effectively resolve the gradient conflict between reward maximization and support constraints?

    \item[Q4: Component Sensitivity (Sec.~\ref{sec:ablation}).] What is the impact of adjustable conservatism, sampling range, and sample weighting mode on performance?
\end{description}

We evaluate SPAR across three distinct D4RL domains:
\begin{itemize}[style=unboxed, topsep=0pt, itemsep=0pt, parsep=0pt]
    \item MuJoCo Locomotion: The medium-replay (MR) and medium-expert (ME) datasets for HalfCheetah (HC), Hopper (HP), and Walker2d (WK).
    \item AntMaze(AM) Navigation: The sparse-reward Umaze-diverse(UD) and Large-diverse(LD) environments.
    \item Adroit Manipulation: The high-dimensional Pen-Cloned (Pen-CL) and Pen-Human (Pen-HM) tasks.
\end{itemize}

\subsection{Comparative Evaluation on D4RL}
\label{sec:main_results}

\textbf{Baselines and Implementation.} We report normalized scores from the D4RL benchmark. Table~\ref{tab:main_results} lists both variants across all tasks: SPAR-MLP is a compact residual model that works well on simple unimodal residuals, and SPAR-PROJ is the robust default for multimodal, sparse, or uncertain residual geometry. Hyperparameters are listed in Appendix~\ref{app:hyperparams}.

We categorize baselines into two paradigms based on their dependence on value function gradients for policy extraction:
Gradient-Based Optimization: BCQ\citep{fujimoto2019bcq} TD3+BC\citep{fujimoto2021td3bc}, CQL\citep{kumar2020cql}, PLAS\citep{zhou2021plas}, Diff-QL\citep{wang2022diffusionql}. 
In-Sample Learning: AWAC\citep{nair2020awac}, IQL\citep{kostrikov2022iql}, IDQL\citep{hansen2023idql}, LAPO\citep{chen2022latent}, EQL\citep{xu2023offline}, CQL-AW\citep{hong2023harnessing}.
Table~\ref{tab:main_results} presents the evaluation results. We analyze performance trends across three distinct geometric regimes. Additional evaluations on recent baselines, Stage III threshold sensitivity, inference cost, flow-based variants, and MuJoCo medium results are reported in Appendix~\ref{app:additional_experiments}.

\textbf{MuJoCo Locomotion.}
On medium-replay datasets, SPAR-MLP demonstrates a significant advantage. It consistently outperforms In-Support methods (e.g., IQL, AWAC) and Policy Gradient Guidance with only a single policy network. Notably, it aligns with or even exceeds the performance of generative models such as Diffusion-QL and IDQL, while avoiding significant computational overhead. Conversely, the performance of SPAR-PROJ on these simple tasks is limited by the stochasticity of its outputs and the constraints imposed by conservative improvement.
However, in medium-expert regimes, the deterministic SPAR-MLP exhibits a marked performance drop. In contrast, SPAR-PROJ successfully recovers optimal performance in these multimodal tasks.

\textbf{Adroit Manipulation.}
In high-dimensional manipulation tasks such as Adroit Pen, SPAR-PROJ demonstrates superior stability compared to Advantage-Weighted and Constraint-based baselines. Achieving leading scores on both Pen-Cloned and Pen-Human tasks, SPAR-PROJ proves its capability to handle complex control problems in high-dimensional state spaces without collapsing on the narrow feasible manifolds, a common failure mode for traditional methods.

\textbf{AntMaze Navigation.}
In sparse-reward navigation tasks, SPAR-PROJ establishes a robust performance profile. It consistently surpasses policy gradient guidance and In-Support methods while maintaining strong competitiveness against computation-heavy Generative baselines. Especially in long-horizon tasks like AntMaze-Large, SPAR-PROJ demonstrated exceptional in-neighborhood recovery capabilities, thereby enhancing the robustness of policy stitching.

\begin{table*}[t]
\centering
\caption{Normalized scores on D4RL benchmarks. We report the mean and standard deviation over 3 seeds. \textbf{Bold} indicates the highest mean score. MuJoCo MR means MuJoCo Medium-Replay, MuJoCo ME means MuJoCo Medium-Expert.}
\label{tab:main_results}
\resizebox{\textwidth}{!}{%
\begin{tabular}{ll|ccccc|cccccc|ccc}
\toprule
\multirow{2}{*}{\textbf{Domain}} & \multirow{2}{*}{\textbf{Task}} & \multicolumn{5}{c|}{\textbf{Policy Gradient Guidance}} & \multicolumn{6}{c|}{\textbf{In-Support Learning}} & \multicolumn{3}{c}{\textbf{Ours}} \\
 & & \textbf{BCQ} & \textbf{TD3+BC} & \textbf{CQL} & \textbf{PLAS} & \textbf{Diff-QL} & \textbf{AWAC} & \textbf{IQL} & \textbf{LAPO} & \textbf{IDQL} & \textbf{EQL} & \textbf{CQL-AW} & \textbf{Base} & \textbf{SPAR-MLP} & \textbf{SPAR-PROJ} \\
\midrule
\multirow{3}{*}{\shortstack[l]{MuJoCo\\MR}}
& HalfCheetah & 40.4 & 44.6 & 45.5 & 45.7 & 47.8 & 40.5 & 44.2 & 41.9 & 45.1 & 47.2 & 49.0 & 40.5 & \textbf{50.9} $\pm$ 0.6 & 43.6 $\pm$ 0.2 \\
& Hopper      & 53.3 & 60.9 & 95.0 & 51.9 & 101.3 & 37.2 & 94.7 & 50.1 & 82.4 & 74.6 & 71.0 & 42.9 & \textbf{101.9} $\pm$ 0.4 & 70.4 $\pm$ 8.9 \\
& Walker2d    & 52.1 & 81.8 & 81.6 & 14.3 & \textbf{95.5} & 27.0 & 73.9 & 60.6 & 79.8 & 83.2 & 83.0 & 53.3 & 94.0 $\pm$ 0.7 & 72.4 $\pm$ 2.6 \\
\midrule
\multirow{3}{*}{\shortstack[l]{MuJoCo\\ME}}
& HalfCheetah & 89.1 & 90.7 & 91.6 & 94.3 & 96.8 & 42.8 & 86.7 & 94.2 & 94.4 & 90.6 & 84.0 & 63.1 & 85.2 $\pm$ 10.7 & \textbf{97.0} $\pm$ 0.1 \\
& Hopper      & 81.8 & 98.0 & 105.4 & 94.7 & \textbf{111.1} & 55.8 & 91.5 & 111.0 & 105.3 & 105.5 & 91.0 & 55.2 & 1.1 $\pm$ 0.2 & 108.7 $\pm$ 1.9 \\
& Walker2d    & 109.0 & 110.1 & 108.8 & 97.2 & 110.9 & 74.5 & 109.6 & 110.9 & 111.6 & 110.2 & 109.0 & 109.7 & 0.0 $\pm$ 0.3 & \textbf{113.4} $\pm$ 0.8 \\
\midrule
\multirow{2}{*}{Adroit}
& Pen-Cloned & 55.0 & 71.4 & 40.3 & 49.0 & 66.2 & 49.3 & 62.2 & 55.0 & 63.6 & 46.9 & -2.5 & 50.4 & 0.1 $\pm$ 0.2 & \textbf{76.2} $\pm$ 3.5 \\
& Pen-Human  & 2.2 & 0.0 & 14.9 & 49.8 & 56.6 & 1.0 & 47.5 & 2.2 & 49.8 & 44.3 & -3.0 & 53.5 & 0.1 $\pm$ 0.4 & \textbf{62.7} $\pm$ 2.2 \\
\midrule
\multirow{2}{*}{AntMaze}
& Umaze-Div & 28.0 & 1.7 & 39.2 & 62.0 & 57.3 & -0.8 & 37.3 & 28.0 & 62.0 & 50.8 & 54.0 & 40 & 63.3 $\pm$ 7.6 & \textbf{76.7} $\pm$ 10.4 \\
& Large-Div & 12.3 & -3.7 & 37.5 & 45.3 & \textbf{72.8} & 4.3 & 45.5 & 12.3 & 56.4 & 38.0 & 40.0 & 20 & 15.0 $\pm$ 5.0 & 71.0 $\pm$ 7.4 \\
\bottomrule
\end{tabular}
}
\vspace{-1em}
\end{table*}

\subsection{Mechanism Analysis Based on Residual Demonstration}
\label{sec:analysis_arch}
We report both SPAR-MLP and SPAR-PROJ across all 10 environments in Table~\ref{tab:main_results} and visualize representative residual distributions reduced by the BC baseline in Figure~\ref{fig:residual_geometry}. These results reveal the intrinsic connection between residual geometry and policy modeling.
\textbf{Residual Geometry and Policy Model Choice.}
As illustrated in Figure~\ref{fig:residual_geometry}, the residual distribution exhibits unimodal characteristics in MuJoCo medium-replay environments, while displaying multimodal mixtures in remaining datasets.

In unimodal regimes, SPAR-MLP demonstrates significant modeling superiority. Compared to In-Support methods, which suffer from excessive conservatism due to weighted regression, SPAR-MLP maintains appropriate extrapolation capabilities. Compared to Policy Guidance methods, the backbone network of SPAR-MLP focuses on fitting, allowing for aggressive extrapolation within the local residual region. By leveraging the smooth extrapolation property of MuJoCo MR, this approach avoids over-conservatism and prevents model collapse caused by conflicting gradients acting on the backbone. Crucially, compared to SPAR-PROJ, SPAR-MLP avoids the unnecessary variance and optimization complexity introduced by stochastic sampling, as unimodal optimization is fundamentally a function approximation problem rather than a density estimation one.

In multimodal datasets, the core failure mode of SPAR-MLP stems from its MSE objective approximating the conditional expectation $\mathbb{E}[\Delta a | s]$, causing the mean action to fall into low-density or invalid regions. While Generative Models provide multimodal initialization, their unconstrained gradients still risk out-of-distribution (OOD) drift. SPAR-PROJ addresses this by modeling $p(\Delta a | s)$ via a CVAE combined with latent self-imitation, effectively capturing multimodal distributions while geometrically precluding OOD risks.

\textbf{Difficult Extrapolation in Narrow Feasible Regions.}
In high-dimensional tasks such as Adroit Pen, the expert data distribution is extremely narrow and sparse, covering only a thin slice of the valid state-action space. Standard baselines often fail in this context because their value gradients blindly extrapolate into out-of-distribution (OOD) regions—areas that appear to offer high returns but are dynamically unstable, thus necessitating a highly cautious improvement mechanism.

\textbf{Branching Ambiguity in AntMaze.}
In the first stage, we increased the expectile to restore the critic's discriminative capability in these extremely sparse-reward environments. We then observed that the distribution of recovered high-value trajectories in AntMaze exhibits a structural sparsity similar to Adroit, but stemming from the discrete branching points inherent to navigation. This pronounced multimodality renders traditional non-generative methods susceptible to mode-averaging, often resulting in high variance and ineffective exploration. Unlike computationally expensive generative models, SPAR-PROJ operates as an efficient local mode-selector. By resolving directional ambiguity at these branching points, it enforces convergence to a single valid mode, thereby enabling precise trajectory stitching without the high inference overhead of diffusion models.

\begin{figure*}[t]
  \centering
  \begin{subfigure}{0.23\textwidth}
    \includegraphics[width=\linewidth]{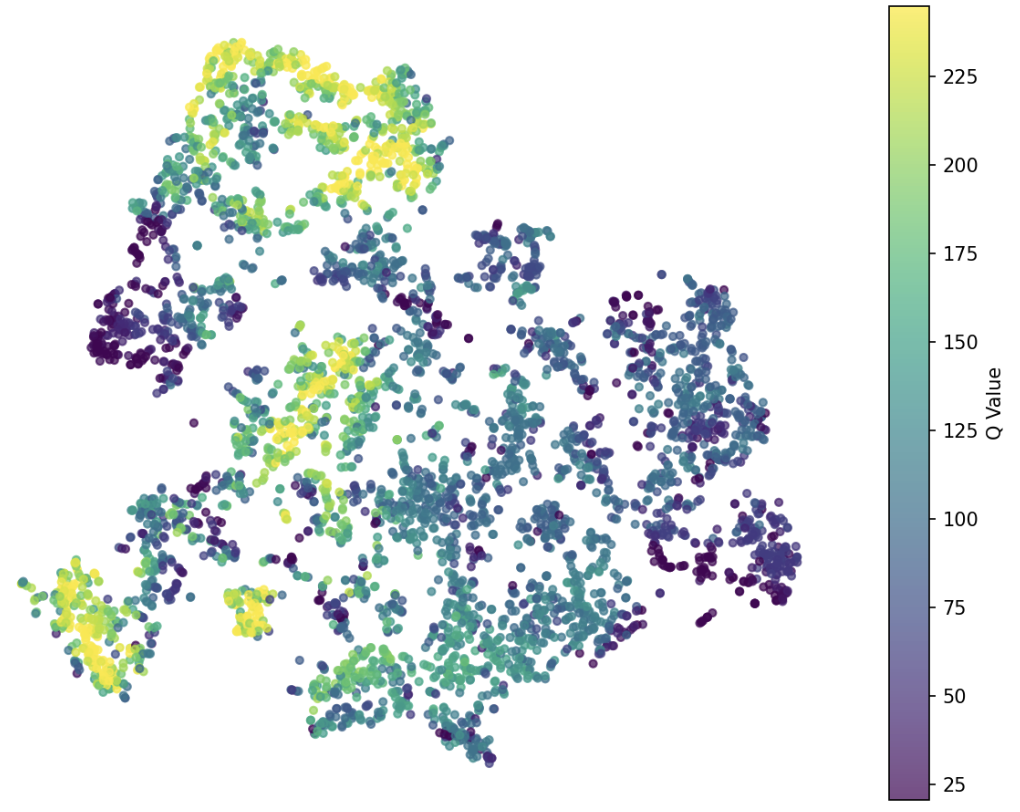}
    \caption{HP-MR (Unimodal)}
  \end{subfigure}
  \hfill
  \begin{subfigure}{0.23\textwidth}
    \includegraphics[width=\linewidth]{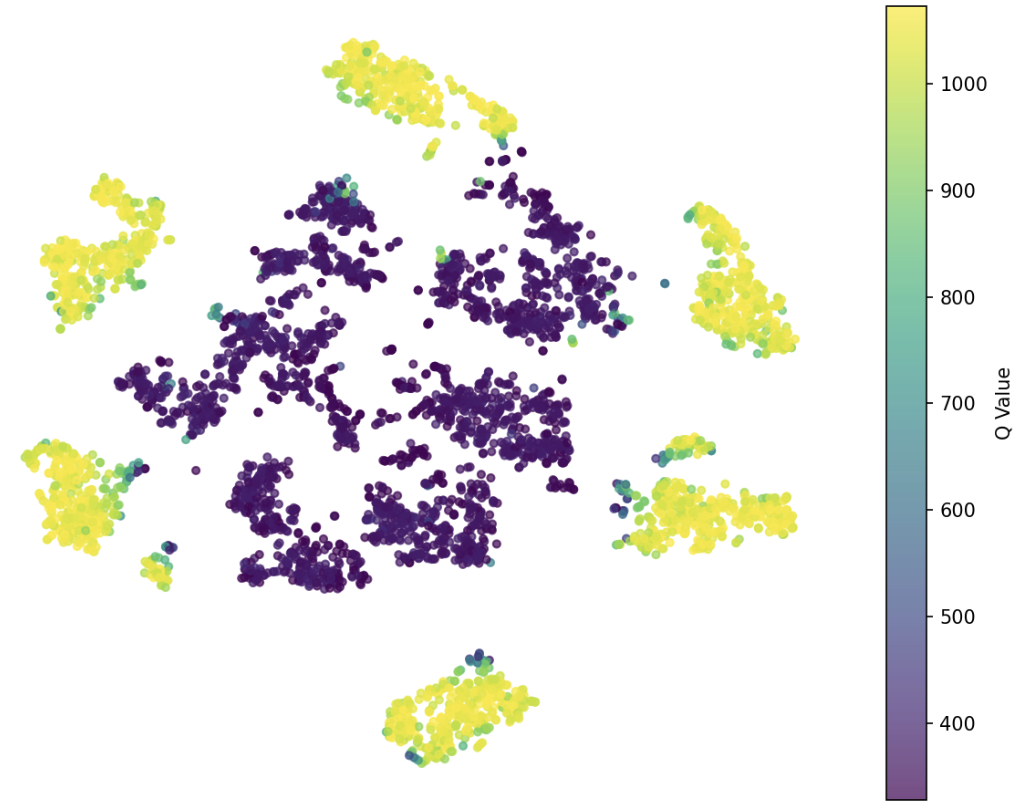}
    \caption{HC-ME (Multimodal)}
  \end{subfigure}
  \hfill
  \begin{subfigure}{0.23\textwidth}
    \includegraphics[width=\linewidth]{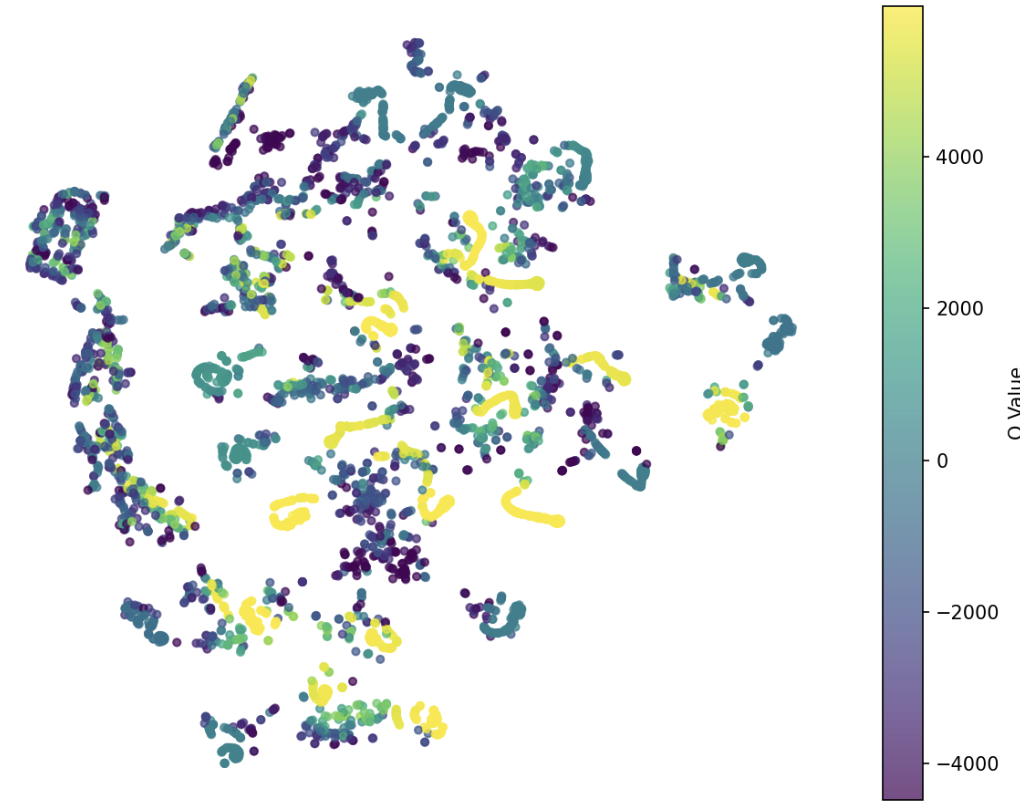}
    \caption{Pen-CL (Narrow)}
  \end{subfigure}
  \hfill
  \begin{subfigure}{0.23\textwidth}
    \includegraphics[width=\linewidth]{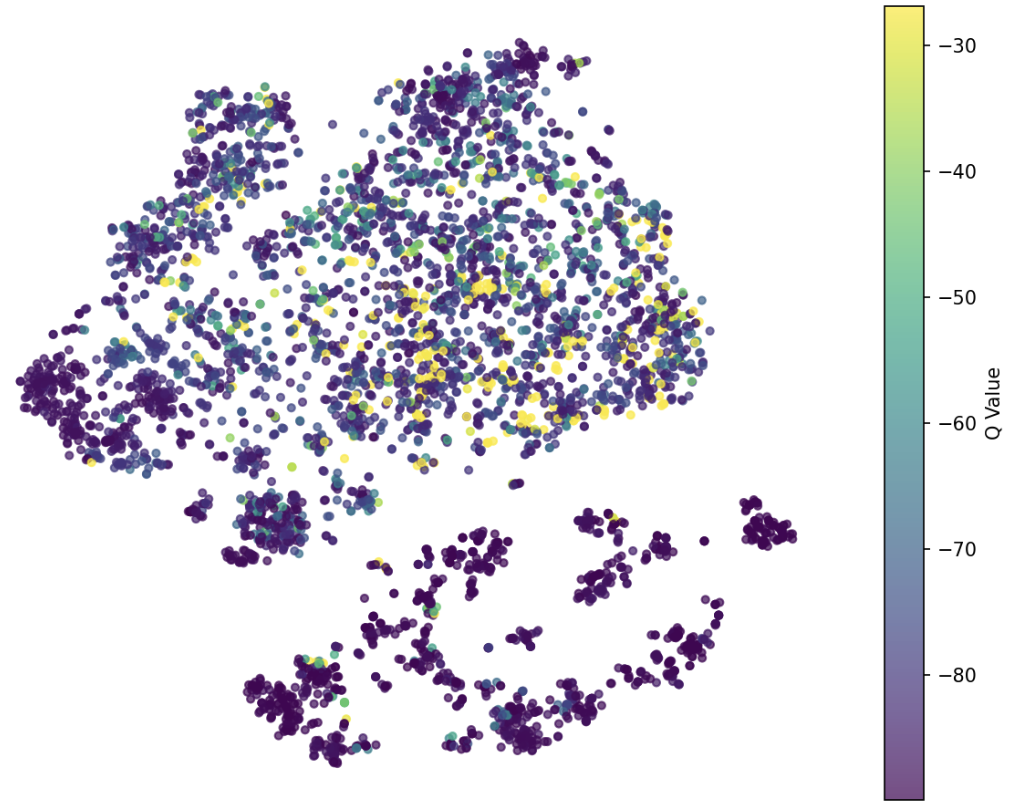}
    \caption{AM-LD (Sparse)}
  \end{subfigure}
  \vspace{-0.5em}
  \caption{Visualization of the $(a|s)$ distribution for Residuals, generated by projecting 4000 sampled actions into a 2D space via joint t-SNE, with points colored by Q-values.}
  \label{fig:residual_geometry}
  \vspace{-1em}
\end{figure*}

\textbf{Component Necessity and Support Preservation.}
The residual-geometry analysis above explains why the residual policy architecture should adapt to the topology of $\Delta a = a-\pi_{\mathrm{base}}(s)$. This motivates two empirical checks: whether the full rectification pipeline is necessary, and whether generated actions stay close to the empirical action support. Table~\ref{tab:component_ablation} directly ablates the three key components of SPAR. Global-space self-imitation collapses on the harder tasks, confirming that improvement must be constrained to the anchored residual space. Residual learning without latent self-imitation and removing Stage III rectification both provide partial gains; the full pipeline performs best overall. Figure~\ref{fig:action_support} gives a direct support check on Pen-Cloned. The boundary plot uses a shared PCA projection for visualization. The q95 kNN support-distance ratio is computed in the original action space and normalized by the dataset boundary: SPAR-PROJ scores 0.98, and SPAR-PLAS scores 6.32.

\begin{table}[t]
\centering
\caption{Component ablation of SPAR.}
\label{tab:component_ablation}
\scriptsize
\setlength{\tabcolsep}{2.2pt}
\begin{tabular}{l|ccccc}
\toprule
\textbf{Task} & \textbf{\shortstack{CVAE\\base}} & \textbf{\shortstack{Global\\SI}} & \textbf{\shortstack{Res.\\w/o LSI}} & \textbf{\shortstack{w/o\\Stage III}} & \textbf{\shortstack{SPAR\\full}} \\
\midrule
HP-MR  & 28.2 & 12.1 & 72.7 & 49.5 & \textbf{101.9} \\
HC-ME  & 60.3 & 1.6  & 94.1 & 96.1 & \textbf{97.0} \\
Pen-CL & 58.8 & 3.9  & 67.8 & 61.6 & \textbf{76.2} \\
AM-LD  & 15.0 & 0.0  & 40.0 & 60.0 & \textbf{71.0} \\
\bottomrule
\end{tabular}
\vspace{-20pt}
\end{table}

\begin{figure}[t]
  \centering
  \includegraphics[width=\columnwidth]{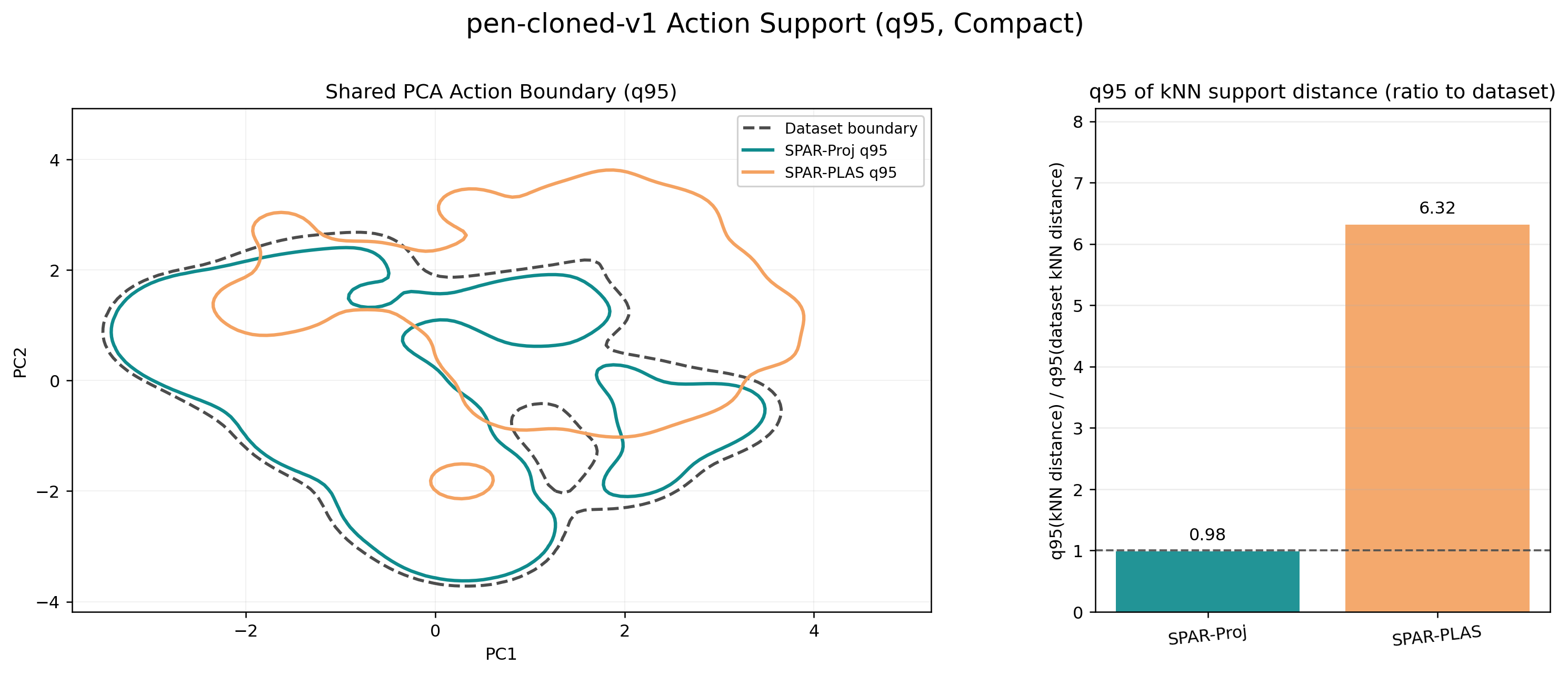}
  \vspace{-0.5em}
  \caption{Action-support diagnostic on Pen-Cloned. Left: q95 action boundary in a shared PCA space for visualization. Right: q95 kNN support-distance ratio computed in the original action space and normalized by the dataset boundary. SPAR-PROJ stays close to empirical support; SPAR-PLAS deviates substantially.}
  \label{fig:action_support}
  \vspace{-1em}
\end{figure}

\subsection{Mechanism Analysis: Guidance without Drift}
\label{sec:analysis_conflict}

A central theoretical assertion of our framework is that the update direction maximizing the Q-value often pushes the policy towards regions outside the data support. To verify this phenomenon, we conducted an empirical analysis focusing on two key dimensions:

\textbf{Quantifying Gradient Conflict.}
To measure the tension between reward maximization and distribution fitting, we introduced two diagnostic metrics based on the real training configuration (Adam optimizer, learning rate $3 \times 10^{-4}$). Let $\mathcal{L}_{\mathrm{fit}}$ denote the distribution fitting loss, $\Phi_\theta$ denote a generic network module subject to value gradients and $\Delta\theta$ be the actual parameter update vector induced by a single step of value guidance.

We define $\mathrm{DD} = \langle \nabla_\theta \mathcal{L}_{\mathrm{fit}}, \Delta\theta \rangle$ as the directional projection of the update vector onto the fitting gradient, where a positive value indicates an update direction that opposes the fitting objective (i.e., damage to the fit). Complementarily, we let $\mathrm{VSD} = \mathcal{L}_{\mathrm{fit}}(\theta + \Delta\theta) - \mathcal{L}_{\mathrm{fit}}(\theta)$ quantify the actual degradation in fitting performance after a single update step. Note that $\Phi_\theta$ does not necessarily represent the direct action-generation policy; for instance, it could be the latent actor in PLAS\citep{zhou2021plas} or the noise prediction network in Diffusion-QL. However, via the chain rule, gradients from both the fitting and improvement objectives are backpropagated to this common set of parameters $\theta$. Consequently, these gradients can be directly compared within this parameter space, allowing us to identify and analyze any conflicts in their update directions.

As shown in Table~\ref{tab:gradient_conflict}, baseline methods (SPAR-MLP, SPAR-PLAS) exhibit large positive values on both DD and VSD, and SPAR-PROJ reduces both metrics by multiple orders of magnitude. This indicates that, under standard learning rates, gradient-based guidance induces severe parameter drift and rapidly compromises the policy’s distributional consistency. SPAR-PROJ’s Latent Self-Imitation filters out the distribution-destroying components of the gradient, preserves the value signal, and nearly eliminates deviation caused by optimizer step sizes, empirically validating Proposition~\ref{prop:manifold_drift}.

\begin{table}[t]
\centering
\caption{Gradient Conflict Metrics across D4RL benchmarks. SPAR-PROJ significantly reduces conflict metrics compared to baseline architectures.}
\label{tab:gradient_conflict}
\footnotesize
\setlength{\tabcolsep}{4pt}
\begin{tabular}{l|c|ccc}
\toprule
\textbf{Task} & \textbf{Metric} & \textbf{SPAR-MLP} & \textbf{SPAR-PLAS} & \textbf{SPAR-PROJ} \\
\midrule
\multirow{2}{*}{HP-MR}
 & DD  & 2.4e-3 & 9.2e-3 & \textbf{1.0e-10} \\
 & VSD & 3.2e-3 & 1.8e-4 & \textbf{7.2e-5} \\
\midrule
\multirow{2}{*}{HC-ME}
 & DD  & 2.8e-3 & 1.2e-2 & \textbf{3.6e-6} \\
 & VSD & 8.0e-4 & 2.9e-4 & \textbf{3.2e-6} \\
\midrule
\multirow{2}{*}{Pen-CL}
 & DD  & 3.6e-4 & 5.8e-3 & \textbf{2.2e-5} \\
 & VSD & 4.5e-3 & 6.4e-5 & \textbf{1.6e-5} \\
\midrule
\multirow{2}{*}{AM-LD}
 & DD  & 1.6e-3 & 7.9e-3 & \textbf{1.9e-5} \\
 & VSD & 6.0e-3 & 4.0e-3 & \textbf{1.5e-4} \\
\bottomrule
\end{tabular}
\vspace{-12pt}
\end{table}

\textbf{Ablation Study on Guidance Weight.}
Table~\ref{tab:guidance_weight} ablates the weight of the value guidance objective. We implement SPAR-PLAS based on PLAS\citep{zhou2021plas}, which is a generative residual policy model guided by gradient updates.

For gradient-based methods (SPAR-MLP, SPAR-PLAS), while increasing the guidance weight may initially improve scores, excessive gradient steps forcibly push the policy out of the data support, ultimately degrading performance.

Since SPAR-PROJ acquires high-value samples by sampling within the support, its value improvement is constrained to the data distribution. Consequently, the value guidance objective safely enhances performance and demonstrates strong robustness to variations in guidance weight.

Table~\ref{tab:guidance_weight} also shows that, in environments with dense data coverage, all methods are largely insensitive to variations in the guidance weight $\lambda_g$. In settings with narrow data distributions, SPAR-PROJ maintains the strongest robustness. This supports conservative policy optimization under sparse and narrow effective support, where aggressive exploitation can trigger catastrophic OOD drift, and it further confirms the stability of Latent Self-Imitation.

\begin{table}[t]
\centering
\caption{Ablation study on guide weight $\lambda_g$. We compare SPAR-PROJ with methods with gradient conflicts (SPAR-MLP, SPAR-PLAS) across different tasks.}
\label{tab:guidance_weight}
\footnotesize 
\begin{tabular}{l|l|ccc} 
\toprule
\textbf{Task} & \textbf{Method} & \textbf{$\lambda_g=0$} &\textbf{ $\lambda_g=0.5$} & \textbf{$\lambda_g=2$} \\
\midrule
\multirow{2}{*}{HP-MR}
 & SPAR-MLP & 72.71 & \textbf{101.9} & 101.5 \\
 & SPAR-PROJ & 67.1 & 70.4 & 54.3 \\
\midrule
\multirow{2}{*}{HC-ME}
 & SPAR-PLAS & 91.3 & 92.4 & 92.6 \\
 & SPAR-PROJ & 94.05 & \textbf{97.0} & 96.6 \\
\midrule
\multirow{2}{*}{Pen-CL}
 & SPAR-PLAS & 72.1 & 74.1 & 64.2 \\
 & SPAR-PROJ & 67.8 & \textbf{76.2} & 65.3 \\
\midrule
\multirow{2}{*}{AM-LD}
 & SPAR-PLAS & 33.3 & 21.6 & 35.0 \\
 & SPAR-PROJ & 40.0 & \textbf{71.0} & 57.5 \\
\bottomrule
\end{tabular}
\vspace{-10pt}
\end{table}

\subsection{Analysis of Weighting Schemes and Uncertainty}
\label{sec:ablation}
\textbf{Ablation Study on Weighting Schemes.}
Table~\ref{tab:ablation_temp_constraint} reveals a task-dependent trade-off between Soft and Hard filtering strategies. In dense-coverage MuJoCo tasks, both strategies perform comparably, with Hard achieving marginally higher scores on HC-ME due to abundant data ensuring connectivity after pruning. In sparse-reward AntMaze-Large, Soft demonstrates decisive superiority because trajectory stitching requires preserving low-advantage transitions that bridge topologically disconnected segments. Hard filtering can sever these bridges when advantage estimates fluctuate near zero. In narrow-distribution Pen-Cloned, Soft achieves the global peak by preserving low-advantage transitions that enable gradual exploration toward sparse high-value regions. Hard filtering discards these transitions via binary thresholding, limiting peak exploitation despite more stable performance across temperatures. The temperature $T$ controls the sharpness of Boltzmann weighting. At $T=0.3$, Soft strikes an optimal balance between connectivity preservation and value focus; Hard lacks this adaptive granularity. Advantage-Weighted Inference consistently outperforms uniform sampling across all tasks, confirming that Soft Weighting robustly extracts high-value actions while respecting the geometric constraints of the residual support.

\textbf{Ablation Study on Uncertainty Weight.}
Table~\ref{tab:ablation_weight} analyzes the impact of the uncertainty penalty $\lambda_u$.
On dense-coverage tasks like HP-MR and HC-ME, moderate $\lambda_u$ yields optimal performance, while removing the penalty causes noticeable degradation, particularly on HP-MR, where over-optimistic value extrapolation leads to unsafe actions despite abundant data. On support-limited tasks like Pen-Cloned and AntMaze-Large-Diverse, the penalty becomes critical: performance collapses from 76.2 to 39.2 and from 71.0 to 55.0 respectively when $\lambda_u=0$, as the policy aggressively exploits poorly estimated regions outside the narrow residual manifold. Conversely, excessive pessimism ($\lambda_u=2$) also harms performance on AntMaze, suppressing legitimate exploration across sparse waypoints. These results show that uncertainty-aware weighting adaptively modulates the trade-off between exploitation and conservatism.
\begin{table}[t]
\centering
\caption{Ablation study on unified adaptive weighting: $T$ in $w_{\mathrm{base}} = \exp(\tilde{A}/T)$, and soft/hard filtering choice.}
\label{tab:ablation_temp_constraint}
\small 
\setlength{\tabcolsep}{3.5pt} 

\begin{tabular}{l|c|cccc} 
\toprule
\textbf{Filter} & \textbf{$T$} & \textbf{HP-MR} & \textbf{HC-ME} & \textbf{Pen-CL} & \textbf{AM-LD} \\
\midrule
\multirow{4}{*}{Soft} 
 & 0.1 & 88.7 & 90.9 & 56.9 & 48.3 \\
 & 0.3 & 94.9 & 88.4 & \textbf{76.2} & \textbf{71.0} \\
 & 1.0 & 101.5 & 88.6 & 56.3 & 65.0 \\ 
 & Uni & 99.0 & 87.8 & 56.4 & 45.0 \\
\midrule
\multirow{4}{*}{Hard} 
 & 0.1 & 90.9 & 96.0 & 72.1 & 17.5 \\
 & 0.3 & 97.1 & \textbf{97.0} & 65.4 & 30.0 \\
 & 1.0 & \textbf{101.9} & 96.1 & 65.5 & 40 \\
 & Uni & 98.6 & 93.3 & 64.8 & 32.5 \\
\bottomrule
\end{tabular}
\vspace{-10pt}
\end{table}

\begin{table}[t]
\centering
\caption{Ablation study on uncertainty weight $\lambda_u$.}
\label{tab:ablation_weight}
\footnotesize 
\begin{tabular}{l|c|ccc} 
\toprule
\textbf{Task} & \textbf{Method} & \textbf{$\lambda_u=0$} & \textbf{$\lambda_u=0.5$} & \textbf{$\lambda_u=2$} \\
\midrule
HP-MR & SPAR-MLP & 55.2 & \textbf{101.9} & 93.2 \\
HC-ME & SPAR-PROJ & 91.3 & \textbf{97.0} & 96.5 \\
Pen-CL & SPAR-PROJ & 39.2 & \textbf{76.2} & 73.0 \\
AM-LD & SPAR-PROJ & 55.0 & \textbf{71.0} & 60.0 \\
\bottomrule
\end{tabular}
\vspace{-10pt}
\end{table}












\section{Conclusion}

\textbf{Conclusion.}
We introduced Support-Preserving Action Rectification (SPAR), a framework that reframes offline policy improvement as localized residual correction anchored to a frozen behavior cloning policy. This design contracts the effective search space while preserving data manifold constraints. To resolve the fitting-optimization conflict in residual space, SPAR employs Latent Self-Imitation, a derivative-free mechanism that replaces unstable value-gradient ascent with latent-sampling weighted regression. The analysis explains how residual search-space contraction and latent self-imitation suppress off-manifold drift, and D4RL evaluations show strong performance across locomotion, manipulation, and sparse-reward navigation tasks.

\textbf{Limitations and Future Work.}
SPAR's effectiveness still depends on the quality of the frozen anchor and the reliability of conservative value estimates, especially in sparse-reward regimes. The generative residual variant also incurs higher inference cost than deterministic residual regression, although the appendix measurements show lower latency and memory than iterative diffusion-based alternatives. Future work includes broader benchmark validation, automatic diagnostics for residual topology, and stronger offline RL guarantees beyond the mechanism-level analysis developed here.

\section*{Acknowledgements}
This work was supported in part by the Key R\&D Program of Zhejiang Province
(2025C01212), in part by the National Natural Science Foundation of China
(Grant No. 62273303), and in part by the Yongjiang Talent Introduction
Programme (2022A-240-G).

\section*{Impact Statement}
This paper presents work whose goal is to advance the field of Machine
Learning. There are many potential societal consequences of our work, none
which we feel must be specifically highlighted here.

\bibliography{project/references}
\bibliographystyle{icml2026}

\newpage
\appendix
\onecolumn

\section{Proofs}
\label{sec:proofs}

\subsection{Proof of Theorem 3.1}
\label{sec:proof_thm31}
We first restate the theorem for completeness.
\begin{theorem}[Theorem~\ref{thm:sample_complexity}, restated]
Under $L$-Lipschitz continuity of $Q(s,\cdot)$, $\sigma$-sub-Gaussian value observations, and action diameter $D$, the effective data requirement for $\epsilon$-optimal action identification within the $\delta_\rho$-neighborhood with probability at least $1-\beta$ satisfies
\begin{equation}
 N(\epsilon, \Omega) = \tilde{\Theta} \left( \frac{\sigma^2}{\epsilon^2} \cdot\, 
    \mathcal{N}\!\big(\Omega, \tfrac{\epsilon}{2L}\big)\cdot \tfrac{d}{\beta}\cdot \log{D} \right),
\end{equation}
where $\tilde{\Theta}$ absorbs logarithmic factors including $\log \mathcal{N}(\Omega, \epsilon/(2L))$.
\end{theorem}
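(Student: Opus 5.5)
The bound is a discretize-then-identify argument: reduce the continuous search over $\Omega$ to a finite best-arm identification problem on an $\epsilon/(2L)$-net, then charge the cost per net point. The lower bound would come from a standard packing construction.

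\textbf{Step 1: discretization.} Fix a minimal $r$-net $\mathcal{C}\subset\Omega$ with $r=\epsilon/(2L)$ and $|\mathcal{C}|=\mathcal{N}(\Omega,r)$. By $L$-Lipschitz continuity of $Q(s,\cdot)$, every $a\in\Omega$ has some $c\in\mathcal{C}$ with $|Q(s,a)-Q(s,c)|\le L r=\epsilon/2$. Hence $\max_{c\in\mathcal{C}}Q(s,c)\ge\max_{a\in\Omega}Q(s,a)-\epsilon/2$. It therefore suffices to return a net point that is $\epsilon/2$-optimal within $\mathcal{C}$. Since $\Omega$ sits in a ball of diameter $D$ in $\mathbb{R}^d$, a volumetric argument gives $\mathcal{N}(\Omega,r)\le(1+2D/r)^d\propto D^d$. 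This matches the stated $\mathcal{N}(\Omega,r)\propto D^d$ and lets us compare $\Omega_{\mathrm{global}}$ (with $D=D_{\mathcal{A}}$) against $\Omega_{\mathrm{res}}$ (with $D=2\delta_\rho$).

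\textbf{Step 2: upper bound via uniform estimation.} For each $c\in\mathcal{C}$, draw $n$ independent $\sigma$-sub-Gaussian value observations and form the empirical mean $\hat Q(c)$. Hoeffding's inequality gives $\Pr(|\hat Q(c)-Q(s,c)|>\epsilon/4)\le 2\exp(-n\epsilon^2/(32\sigma^2))$. A union bound over $|\mathcal{C}|$ points keeps the total failure probability at most $\beta$ provided $n\gtrsim(\sigma^2/\epsilon^2)\log(2\mathcal{N}/\beta)$. On that event, $\arg\max\hat Q$ is $\epsilon/2$-optimal in $\mathcal{C}$, hence $\epsilon$-optimal in $\Omega$. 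The total sample count is $N=n\,\mathcal{N}(\Omega,\epsilon/(2L))$.

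Expanding the logarithm gives $\log\mathcal{N}=d\log(2LD/\epsilon)+O(d)$, which produces the explicit $d\log D$ factor; the remaining $\log(1/\epsilon)$ and $\log L$ terms go into $\tilde\Theta$. The statement writes $d/\beta$ rather than $d\log(1/\beta)$. I will obtain that form either by using Chebyshev/Markov-type confidence, which only weakens the bound, or by noting that $\log(1/\beta)\le 1/\beta$, so the stated expression is a valid (looser) upper bound.

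\textbf{Step 3: matching lower bound.} For the $\Omega$ direction of $\tilde\Theta$, build a packing of $\Omega$ at scale $\epsilon/L$ with $\gtrsim\mathcal{N}(\Omega,\epsilon/(2L))$ points. On it, consider a family of $L$-Lipschitz "bump" instances: $Q$ is flat except for a single bump of height $\epsilon$ at one packing point. Any algorithm that returns an $\epsilon$-optimal action must locate the bump. A Fano or change-of-measure (KL) argument with Gaussian noise then shows that each point needs $\Omega(\sigma^2/\epsilon^2)$ samples to be ruled out. Summing over the packing yields $\Omega\big((\sigma^2/\epsilon^2)\,\mathcal{N}\,\log(1/\beta)\big)$.

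\textbf{Main obstacle.} The hardest part is reconciling the exact factor $\tfrac{d}{\beta}\log D$ with the upper and lower bounds. The natural bounds give $\log(\mathcal{N}/\beta)$, not a product with $d/\beta$. I would first try to read the theorem's $\tilde\Theta$ generously, treating it as absorbing the $\beta$-dependence and the logarithmic factors, as the restatement explicitly permits for $\log\mathcal{N}$. I would then present the statement as tight only up to these factors, with the scaling in $\sigma^2/\epsilon^2$ and $\mathcal{N}(\Omega,\epsilon/(2L))$ proved rigorously. That scaling is what drives the comparison between $\Omega_{\mathrm{res}}$ and $\Omega_{\mathrm{global}}$.
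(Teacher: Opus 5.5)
Your Steps 1--2 follow the paper's proof almost line for line. The paper also takes an $\epsilon/(2L)$-cover, applies Hoeffding at accuracy $\epsilon/4$ with $m=(32\sigma^2/\epsilon^2)\log(2\mathcal{N}/\beta)$, union-bounds, runs the same argmax chain and counts $\mathcal{N}\cdot m$ samples. It then asserts that absorbing $\log(2\mathcal{N}/\beta)$ into $\tilde\Theta$ ``yields the stated scaling,'' and its Step 5 is your volumetric remark. So the paper proves only sufficiency, and your generous reading of $\tilde\Theta$ is exactly what it actually establishes.

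Your diagnosis of the obstacle is correct. The argument gives $\log\mathcal{N}+\log(1/\beta)\approx d\log(1+4LD/\epsilon)+\log(1/\beta)$, and rewriting this as $\frac{d}{\beta}\log D$ only loosens the upper bound. Even that loosening needs $\log D$ bounded below by a positive constant. For a small residual ball, say $2\delta_\rho<1$ in a normalized action box, the bare $\log D$ is negative. So keep $\log(1+4LD/\epsilon)$ intact rather than splitting off $\log(L/\epsilon)$ and ``absorbing'' it.

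Your Step 3 has no counterpart in the paper, which never proves the lower half of its $\tilde\Theta$. If you keep it, three repairs are needed.
\begin{enumerate}
\item With bump height exactly $\epsilon$, every flat point satisfies the non-strict criterion $Q(s,\pi(s))\ge\max Q-\epsilon$, so nothing forces the algorithm to locate the bump. Use height $2\epsilon$, which gives radius $2\epsilon/L$ and disjoint bumps at separation about $4\epsilon/L$.
\item A packing at that coarser scale has \emph{at most}, not at least, $\mathcal{N}(\Omega,\epsilon/(2L))$ points, and the two agree only up to a $C^d$ factor. That is harmless for the $D^d$ comparison at fixed $d$, but it is not a logarithmic factor.
\item The change-of-measure step cannot use the flat instance as reference, since there every action is $\epsilon$-optimal. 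Compare a one-bump instance against alternatives carrying an extra, higher bump elsewhere. That yields $\Omega\big((\sigma^2/\epsilon^2)\cdot(\text{packing size})\cdot\log(1/\beta)\big)$.
\end{enumerate}
The $\log(1/\beta)$ dependence in that lower bound confirms that the $1/\beta$ in the statement cannot hold in both directions. The defensible claim is therefore an $O(\cdot)$ bound that is tight only in $\sigma^2/\epsilon^2$ and in the $D^d$ covering scaling, as you propose.
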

This analysis explains the statistical effect of residual search-space contraction; it is not intended as a complete offline RL performance guarantee under arbitrary distribution shift.

\begin{proof}
Fix a state $s$ and write $f(a) \triangleq Q(s,a)$ for brevity. Let $\Omega \subseteq \mathbb{R}^d$ be compact. By construction of $\delta_\rho$ as the $(1-\rho)$-quantile with small $\rho$ (e.g., $\rho=0.05$), the residual region $\Omega_{\mathrm{res}}$ contains $1-\rho$ fraction of the dataset mass. This ensures that each $\epsilon/(2L)$-ball intersecting $\Omega_{\mathrm{res}}$ has sufficient samples for concentration when $\rho$ is small (Figure~\ref{fig:action_space_size}).

\textbf{Step 1: Covering at resolution $r = \epsilon/(2L)$.}
Let $\mathcal{C} = \{c_1,\dots,c_N\}$ be an $r$-cover of $\Omega$ with $n = \mathcal{N}(\Omega, r)$. By Lipschitz continuity, for any $a \in \Omega$ there exists $c_i \in \mathcal{C}$ such that
\begin{equation}
|f(a) - f(c_i)| \le L \|a - c_i\|_2 \le L r = \epsilon/2.
\label{eq:lipschitz_cover}
\end{equation}

\textbf{Step 2: Uniform concentration over the cover.}
For each cover point $c_i$, form the empirical mean $\widehat{f}(c_i)$ from $m$ samples. Since observations are $\sigma$-sub-Gaussian, Hoeffding's inequality gives
\begin{equation}
\mathbb{P}\big(|\widehat{f}(c_i) - f(c_i)| \ge \epsilon/4\big) \le 2\exp\big(-m\epsilon^2/(32\sigma^2)\big)=\frac{\beta}{n}.
\end{equation}
Setting $m = (32\sigma^2/\epsilon^2)\log(2n/\beta)$ and applying a union bound yields
\begin{equation}
\mathbb{P}\Big(\max_{i\in[n]} |\widehat{f}(c_i) - f(c_i)| \le \epsilon/4\Big) \ge 1-\beta.
\label{eq:uniform_conc}
\end{equation}

\textbf{Step 3: Constructing an $\epsilon$-optimal action.}
Let $a^\star \in \arg\max_{a\in\Omega} f(a)$ and $c^\star \in \mathcal{C}$ its nearest cover point. On the event \eqref{eq:uniform_conc},
\begin{align}
f(a^\star) &\le f(c^\star) + \epsilon/2 \quad &&\text{(by \eqref{eq:lipschitz_cover})} \\
&\le \widehat{f}(c^\star) + 3\epsilon/4 \quad &&\text{(by \eqref{eq:uniform_conc})} \\
&\le \widehat{f}(\widehat{c}) + 3\epsilon/4 \quad &&\text{(where $\widehat{c} = \arg\max_{c_i} \widehat{f}(c_i)$)} \\
&\le f(\widehat{c}) + \epsilon \quad &&\text{(by \eqref{eq:uniform_conc})}.
\end{align}
Thus $\widehat{c}$ is $\epsilon$-optimal with probability $\ge 1-\beta$.

\textbf{Step 4: Total data requirement.}
The total number of samples required is $n \cdot m = \mathcal{N}(\Omega, \epsilon/(2L)) \cdot (32\sigma^2/\epsilon^2)\log(2\mathcal{N}(\Omega, \epsilon/(2L))/\beta)$. Absorbing dimension-independent logarithmic factors into $\tilde{\Theta}$ yields the stated scaling.

\textbf{Step 5: Geometric efficiency gain.}
For convex regions (e.g., $\mathcal{A} = [-1,1]^d$), the diameter bound $\mathcal{N}(\Omega,r) \le (1 + D/r)^d$ holds \cite{vershynin2018high}. Applying this to $\Omega_{\mathrm{global}} = \mathcal{A}$ and $\Omega_{\mathrm{res}}$ with $r = \epsilon/(2L)$ gives
\begin{align}
\mathcal{N}(\Omega_{\mathrm{global}}, r) &\le \left(1 + \frac{2L D_{\mathcal{A}}}{\epsilon}\right)^d, \\
\mathcal{N}(\Omega_{\mathrm{res}}, r) &\le \left(1 + \frac{4L \delta_\rho}{\epsilon}\right)^d.
\end{align}
The efficiency ratio satisfies
\begin{equation}
\frac{N_{\mathrm{global}}}{N_{\mathrm{spar}}} = \Theta\left(
\left(\frac{D_{\mathcal{A}}}{2\delta_\rho}\right)^d
\cdot
\frac{\log(1 + 2L D_{\mathcal{A}}/\epsilon)}{\log(1 + 4L \delta_\rho/\epsilon)}
\right).
\end{equation}
In the regime $\epsilon < L\delta_\rho < L D_{\mathcal{A}}$, the logarithmic factor is bounded by a small constant ($<2$ for D4RL tasks), so the dominant term is $(D_{\mathcal{A}}/(2\delta_\rho))^d$.
\end{proof}
\subsection{Proof of Lemma 3.2}
\label{sec:proof_lem32}
\begin{lemma}[Lemma~\ref{lem:lipschitz_local_bias}, restated]
Let $a^\mu(s) \in \arg\max_{a \in \mathrm{supp}(\mu(\cdot|s))} Q(s,a)$. Under $L$-Lipschitz continuity of $Q(s,\cdot)$,
\begin{equation}
\varepsilon_{\mathrm{loc}}(s;\delta_\rho) \le L \cdot \big[ \|a^\mu(s) - \pi_{\mathrm{base}}(s)\|_2 - \delta_\rho \big]_+.
\end{equation}
\end{lemma}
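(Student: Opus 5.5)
The plan is to fix $\varepsilon_{\mathrm{loc}}$ concretely and then use a radial projection plus Lipschitz continuity. The paper leaves $\varepsilon_{\mathrm{loc}}$ only implicitly defined through the decomposition of $\varepsilon_{\mathrm{app}}$ in \eqref{eq:qaunted_error}. The natural reading is to split
\[
\varepsilon_{\mathrm{app}}=\underbrace{\max_{a\in\mathcal{A}}Q(s,a)-Q(s,a^\mu(s))}_{\varepsilon_{\mathrm{cov}}}+\underbrace{Q(s,a^\mu(s))-\max_{\|a-\pi_{\mathrm{base}}(s)\|_2\le\delta_\rho}Q(s,a)}_{\varepsilon_{\mathrm{loc}}}.
\]
I would adopt this as the definition of the localization bias: it is the value lost by restricting the search from the best in-support action to the $\delta_\rho$-ball $B$ around $a_b\triangleq\pi_{\mathrm{base}}(s)$. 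I would also state the standing assumption that $Q(s,\cdot)$ is $L$-Lipschitz on a set containing both $a^\mu(s)$ and $B$, for example all of $\mathcal{A}$ or its convex hull, and that the maximum over $B$ is attained or read as a supremum.

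The key step is to exhibit one point $\bar a\in B$ whose value is close to $Q(s,a^\mu(s))$. Since $\max_{a\in B}Q(s,a)\ge Q(s,\bar a)$, this gives $\varepsilon_{\mathrm{loc}}\le Q(s,a^\mu)-Q(s,\bar a)\le L\|a^\mu-\bar a\|_2$. I would split into two cases.

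\emph{Case 1:} $\|a^\mu-a_b\|_2\le\delta_\rho$. Take $\bar a=a^\mu\in B$. Then $\varepsilon_{\mathrm{loc}}\le 0=L[\cdot]_+$.

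\emph{Case 2:} $\|a^\mu-a_b\|_2>\delta_\rho$. Take the radial projection
\[
\bar a=a_b+\delta_\rho\,\frac{a^\mu-a_b}{\|a^\mu-a_b\|_2},
\]
which lies on the boundary of $B$ and on the segment from $a_b$ to $a^\mu$. Collinearity then gives $\|a^\mu-\bar a\|_2=\|a^\mu-a_b\|_2-\delta_\rho$, and the Lipschitz bound yields the claim. Combining the two cases produces the positive part $[\,\cdot\,]_+$.

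The main obstacle is not the algebra but well-posedness. The projected point $\bar a$ must lie in the domain where $Q(s,\cdot)$ is defined and Lipschitz, and where the inner maximum in \eqref{eq:qaunted_error} is taken. I would handle this by assuming $\mathcal{A}$ is convex (as in Step 5 of the proof of Theorem~\ref{thm:sample_complexity}, e.g.\ $[-1,1]^d$). Then $\bar a$, being a convex combination of $a_b\in\mathcal{A}$ and $a^\mu\in\mathcal{A}$, stays in $\mathcal{A}$. Failing that, I would interpret the ball maximum as taken over $\mathbb{R}^d$ with $Q$ Lipschitz there.

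A further subtlety is that if $\varepsilon_{\mathrm{loc}}$ is negative (the ball contains actions better than $a^\mu$), the bound holds trivially. I would note this so that the decomposition need not have nonnegative parts.
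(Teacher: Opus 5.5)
Your proposal is correct and follows the paper's proof essentially step for step: the same reading $\varepsilon_{\mathrm{loc}} = Q(s,a^\mu) - \max_{\|a-\pi_{\mathrm{base}}(s)\|_2\le\delta_\rho}Q(s,a)$, the same two-case split, and the same radial projection onto the $\delta_\rho$-ball combined with Lipschitz continuity and collinearity. Your well-posedness remarks are more careful than the paper's proof, which claims $\varepsilon_{\mathrm{loc}}\ge 0$ ``by construction'' in Case 1 (this is not guaranteed and is not needed for the upper bound) and never checks that the projected point lies in $\mathcal{A}$.
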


\begin{proof}
Fix state $s$ and denote $a_{\mathrm{base}} = \pi_{\mathrm{base}}(s)$, $a^\mu = a^\mu(s)$. Consider two cases:

\textbf{Case 1:} $\|a^\mu - a_{\mathrm{base}}\|_2 \le \delta_\rho$.  
Then $a^\mu$ lies within the localized region $\{a : \|a - a_{\mathrm{base}}\|_2 \le \delta_\rho\}$, so
\begin{equation}
\max_{\|a - a_{\mathrm{base}}\|_2 \le \delta_\rho} Q(s,a) \ge Q(s,a^\mu).
\end{equation}
By definition \eqref{eq:qaunted_error}, $\varepsilon_{\mathrm{loc}}(s;\delta_\rho) \le 0$. Since $\varepsilon_{\mathrm{loc}} \ge 0$ by construction, we have $\varepsilon_{\mathrm{loc}} = 0$, matching the bound as $[\cdot]_+ = 0$.

\textbf{Case 2:} $\|a^\mu - a_{\mathrm{base}}\|_2 > \delta_\rho$.  
Define the radial projection onto the $\delta_\rho$-ball:
\begin{equation}
\widetilde{a} = a_{\mathrm{base}} + \delta_\rho \cdot \frac{a^\mu - a_{\mathrm{base}}}{\|a^\mu - a_{\mathrm{base}}\|_2}.
\end{equation}
Then $\|\widetilde{a} - a_{\mathrm{base}}\|_2 = \delta_\rho$, so $\widetilde{a}$ is feasible for the localized maximization. Therefore,
\begin{equation}
\max_{\|a - a_{\mathrm{base}}\|_2 \le \delta_\rho} Q(s,a) \ge Q(s,\widetilde{a}).
\end{equation}
Using the definition \eqref{eq:qaunted_error} and $L$-Lipschitz continuity,
\begin{align}
\varepsilon_{\mathrm{loc}}(s;\delta_\rho) &= Q(s,a^\mu) - \max_{\|a - a_{\mathrm{base}}\|_2 \le \delta_\rho} Q(s,a) \\
&\le Q(s,a^\mu) - Q(s,\widetilde{a}) \\
&\le L \|a^\mu - \widetilde{a}\|_2 \\
&= L \big( \|a^\mu - a_{\mathrm{base}}\|_2 - \delta_\rho \big),
\end{align}
where the last equality holds because $\widetilde{a}$ lies on the line segment between $a_{\mathrm{base}}$ and $a^\mu$.

Combining both cases yields the stated bound.
\end{proof}

\subsection{Proof of Proposition~\ref{prop:manifold_drift}}
\label{sec:proof_prop_34}

We first restate the proposition for completeness.

\begin{proposition}[Proposition~\ref{prop:manifold_drift}, restated]
Assume the residual manifold $\mathcal{M}_s$ is $C^2$-smooth with maximum curvature $\kappa$. Let $d(x,\mathcal{M}_s) = \inf_{y\in\mathcal{M}_s}\|x-y\|_2$ denote the Euclidean distance to $\mathcal{M}_s$.
\begin{enumerate}[leftmargin=*,itemsep=2pt]
\item[(i)] \textbf{Convex Combination.} For fixed samples $\{z_k,\Delta a_k,\omega_k\}$ with $\Delta a_k \in \mathcal{M}_s$ and $\sum_k \omega_k = 1$, the minimizer of loss \eqref{eq:loss_proj} is the convex combination $\Delta a_{\mathrm{res}}^*(s) = \sum_{k=1}^K \omega_k \Delta a_k$.
\item[(ii)] \textbf{Second-Order Safety.} When $K=2$, the chord $x_\alpha = (1-\alpha)x + \alpha y$ between $x,y \in \mathcal{M}_s$ satisfies
\begin{equation}
\sup_{\alpha \in [0,1]} d(x_\alpha, \mathcal{M}_s) \le \frac{\kappa}{8} \|y - x\|_2^2,
\end{equation}
By comparison, gradient ascent $x^+ = x + \eta v_{\mathrm{grad}}$ with $v_{\mathrm{grad}} \propto \nabla_a Q_{\mathrm{rob}}$ generically yields linear drift $d(x^+, \mathcal{M}_s) = \Theta(\eta \|v_{\mathrm{grad}}\|_2)$.
\end{enumerate}
\end{proposition}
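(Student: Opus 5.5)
For part (i), I will treat the loss \eqref{eq:loss_proj} at a fixed state as a function of a single free output vector $u = \Delta a_{\mathrm{res}}(s,\pi_{\mathrm{base}}(s),z_k;\theta)$. This is the idealized case where the student output is the same quantity across the $K$ candidates; it reflects the stop-gradient on $\omega_k,\Delta a_k$ and a sufficiently expressive network. The loss is then $F(u)=\sum_k \omega_k\|u-\Delta a_k\|_2^2$, a strictly convex quadratic because $\sum_k\omega_k=1>0$. Setting $\nabla F = 2\sum_k\omega_k(u-\Delta a_k)=0$ gives $u^*=\sum_k\omega_k\Delta a_k$. The weights $\omega_k$ come from normalizing the nonnegative $w(\tilde A_k)$, so they are nonnegative and $u^*$ is a convex combination. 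I will also record the bias–variance identity $F(u)=\|u-u^*\|^2+\sum_k\omega_k\|u^*-\Delta a_k\|^2$, which makes uniqueness explicit. If the student output depends on $z_k$, the same argument applies pointwise to the weighted barycenter target.

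For part (ii), with $K=2$, part (i) puts the minimizer at $x_\alpha=(1-\alpha)x+\alpha y$ with $\alpha=\omega_2$. To bound $d(x_\alpha,\mathcal M_s)$, I will locally parametrize a curve on $\mathcal M_s$ joining $x$ to $y$. I will take the curve $\gamma:[0,\ell]\to\mathcal M_s$ cut out by the plane through $x$, $y$ and a normal direction; alternatively, I can use a unit-speed minimizing geodesic. Its curvature is bounded by $\kappa$, so $\|\gamma''\|\le\kappa$. Let $h(t)$ be the component of $\gamma(t)-x_{\text{chord}}(t)$ orthogonal to the chord, where $x_{\text{chord}}(t)$ is the point of the chord at the matching parameter. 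Then $h(0)=h(\ell)=0$ and $\|h''\|\le\kappa$. The classical estimate for a function vanishing at both endpoints with bounded second derivative gives $\|h(t)\|\le \tfrac{\kappa}{2}t(\ell-t)\le \tfrac{\kappa}{8}\ell^2$. Each chord point lies within this distance of a curve point, so $d(x_\alpha,\mathcal M_s)\le \tfrac{\kappa}{8}\ell^2$.

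The main obstacle is replacing the curve length $\ell$ by $\|y-x\|_2$, since $\ell\ge\|y-x\|$ in general. I will handle this in one of two ways. The first is to parametrize the curve by projection onto the chord direction rather than by arclength, as a graph over the chord. This is valid when $\|y-x\|\lesssim 1/\kappa$, by a reach/injectivity argument for $C^2$ submanifolds. In graph coordinates the second derivative is bounded by $\kappa(1+|h'|^2)^{3/2}$, so the constant is $\kappa/8$ up to higher-order terms. The second is to state the bound in the local regime $\kappa\|y-x\|\ll1$, where $\ell=\|y-x\|(1+O(\kappa^2\|y-x\|^2))$, and absorb the correction. I expect to present the graph-parametrization version and add an explicit hypothesis that $x,y$ lie in a common normal-graph chart of $\mathcal M_s$.

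For the gradient-ascent comparison, I will decompose $v_{\mathrm{grad}}=v_T+v_N$ into components tangent and normal to $\mathcal M_s$ at $x$. By Taylor expansion of the distance function, which is smooth near $\mathcal M_s$ within its reach, $d(x+\eta v,\mathcal M_s)=\eta\|v_N\|+O(\kappa\eta^2\|v\|^2)$. "Generically" means $\|v_N\|\ge c\|v_{\mathrm{grad}}\|$ for some $c>0$, which holds whenever $\nabla_aQ_{\mathrm{rob}}$ is not tangent to $\mathcal M_s$. Under that condition, $d(x^+,\mathcal M_s)=\Theta(\eta\|v_{\mathrm{grad}}\|)$ for small $\eta$, which completes the contrast between linear drift and the second-order chord deviation.
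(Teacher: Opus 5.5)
The step you flag in (ii) cannot be completed, and not for lack of technique: with chord length and the constant $\kappa/8$, the inequality is false.

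\textbf{Counterexample.} Let $\mathcal{M}_s$ be a circle of radius $R$, so $\kappa=1/R$. Take $x=(R\cos\theta,R\sin\theta)$ and $y=(R\cos\theta,-R\sin\theta)$. The chord midpoint $(R\cos\theta,0)$ is at distance $R(1-\cos\theta)$ from the circle, whereas $\frac{\kappa}{8}\|y-x\|_2^2=\frac{R}{2}\sin^2\theta$. The excess is $\frac{R}{2}(1-\cos\theta)^2>0$ for every $\theta\in(0,\pi)$; at $\theta=\pi/2$ it is $R$ against $R/2$. So a factor larger than one, such as your $\sup(1+|h'|^2)^{3/2}$, is unavoidable, and ``absorbing the correction'' means proving a weaker statement.

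Worse, a curvature bound alone gives no bound of order $\kappa\|y-x\|_2^2$ at all. A planar curve of curvature at most $\kappa$ can loop around and come back parallel to itself at distance $\epsilon\ll 1/\kappa$. For $x,y$ facing each other on the two passes, the chord midpoint is at distance $\epsilon/2\gg\kappa\epsilon^2/8$. The chart (or reach) hypothesis you propose to add is therefore necessary, not cosmetic.

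The paper's proof breaks at the same two places. Its intermediate bound $\|(I-P_x)(y-x)\|_2\le\frac{\kappa}{2}\|y-x\|_2^2$ is a reach inequality, which the looping curve violates. Its unspecified ``Taylor expansion along the chord'' producing $\kappa/8$ is exactly what the circle refutes. Your route, a curve in $\mathcal{M}_s$ plus a one-dimensional two-point estimate, is the sounder of the two.

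\textbf{What your argument establishes, after two repairs.}
\emph{First}, use the unit-speed geodesic rather than the plane section. A plane containing the normal at $x$ is generally not normal to $\mathcal{M}_s$ at the other points of the section, and there, by Meusnier's theorem, the section's curvature exceeds the normal curvature. For example, sections of the unit cylinder by planes through a surface normal, tilted by $\beta$ from the horizontal, are ellipses with curvature up to $1/\cos\beta>1=\kappa$. A geodesic has $\gamma''=\mathrm{II}(\gamma',\gamma')$, hence $\|\gamma''\|_2\le\kappa$.

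\emph{Second}, ``each chord point lies within this distance of a curve point'' needs an intermediate-value step. With $e=(y-x)/\|y-x\|_2$, the map $t\mapsto\langle\gamma(t)-x,e\rangle$ runs continuously from $0$ to $\|y-x\|_2$. So some $t$ has the same projection as $x_\alpha$, and then $\gamma(t)-x_\alpha=h(t)$. Pairing points by a matched arclength parameter instead leaves an uncontrolled component along the chord.

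With these repairs you get $\sup_\alpha d(x_\alpha,\mathcal{M}_s)\le\frac{\kappa}{8}\,d_{\mathcal{M}_s}(x,y)^2$, where $d_{\mathcal{M}_s}$ is the geodesic distance. On the circle with $\theta\le\pi/2$ this reads $R(1-\cos\theta)\le R\theta^2/2$. The bound is sharp to leading order and suffices for the quadratic-versus-linear contrast.

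If you want chord length, assume reach $\tau$ instead. Let $p$ be the nearest point to $x_\alpha$. The open ball of radius $\tau$ tangent at $p$, with center $c$ on the ray from $p$ through $x_\alpha$, misses $\mathcal{M}_s$, so $\|x-c\|_2,\|y-c\|_2\ge\tau$. The identity $\|x_\alpha-c\|_2^2=(1-\alpha)\|x-c\|_2^2+\alpha\|y-c\|_2^2-\alpha(1-\alpha)\|y-x\|_2^2$ then gives $\sup_\alpha d(x_\alpha,\mathcal{M}_s)\le\tau-\sqrt{\tau^2-\|y-x\|_2^2/4}\le\|y-x\|_2^2/(4\tau)$ for $\|y-x\|_2<2\tau$. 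The first bound is attained by a circle of radius $\tau$.

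\textbf{Remaining parts.} Part (i) and the gradient-ascent comparison follow the paper's argument. Two small corrections:
\begin{itemize}
\item The closing remark of your (i) is wrong. If the student output genuinely depends on $z_k$, the loss decouples over $k$ and is minimized by matching each $\Delta a_k$ separately. The barycenter conclusion therefore rests entirely on the $z$-independent idealization, which the paper's proof also adopts silently by minimizing over a single $\Delta a$.
\item $d(\cdot,\mathcal{M}_s)$ is not differentiable on $\mathcal{M}_s$. Obtain $d(x+\eta v,\mathcal{M}_s)=\eta\|v_N\|_2+O(\eta^2)$ from the squared distance or a direct two-sided estimate, not from a Taylor expansion of $d$ itself.
\end{itemize}
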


\begin{proof}
\textbf{Part (i): Convex combination minimizer.}
Fix samples $\{z_k,\Delta a_k,\omega_k\}$ and consider the pointwise quadratic objective
\begin{equation}
\mathcal{L}(\Delta a) = \sum_{k=1}^K \omega_k \|\Delta a - \Delta a_k\|_2^2.
\end{equation}
Taking the gradient with respect to $\Delta a$ and setting to zero yields
\begin{equation}
\nabla_{\Delta a} \mathcal{L} = 2\sum_{k=1}^K \omega_k (\Delta a - \Delta a_k) = 0
\;\Longrightarrow\;
\Delta a^* = \sum_{k=1}^K \omega_k \Delta a_k,
\end{equation}
where we used $\sum_k \omega_k = 1$. Since each $\Delta a_k \in \mathcal{M}_s$ and $\omega_k \ge 0$, the solution is a convex combination of valid manifold points.

\textbf{Part (ii): Second-order chord deviation.}
Let $x,y \in \mathcal{M}_s$ and define the chord $x_\alpha = (1-\alpha)x + \alpha y$ for $\alpha \in [0,1]$. By the $C^2$-smoothness of $\mathcal{M}_s$ and the uniform curvature bound $\kappa$, the second fundamental form satisfies $\|II_u(v,v)\|_2 \le \kappa \|v\|_2^2$ for all tangent vectors $v$ \cite{do1992riemannian}. Consequently, the normal component of the chord vector is bounded by \cite{absil2012projection}

\begin{equation}
\|(I-P_x)(y-x)\|_2 \le \frac{\kappa}{2} \|y-x\|_2^2,
\end{equation}
where $P_x$ projects onto the tangent space $T_x\mathcal{M}_s$. Applying Taylor expansion along the chord and using the curvature bound yields the uniform deviation bound
\begin{equation}
\sup_{\alpha \in [0,1]} d(x_\alpha, \mathcal{M}_s) \le \frac{\kappa}{8} \|y - x\|_2^2.
\end{equation}
This establishes that chord-based updates incur only \emph{quadratic} off-manifold drift controlled by curvature.

\textbf{Gradient ascent drift.}
For a gradient update $x^+ = x + \eta v_{\mathrm{grad}}$ with $v_{\mathrm{grad}} \propto \nabla_a Q_{\mathrm{rob}}$, decompose $v_{\mathrm{grad}} = v_{\mathrm{grad}}^\top + v_{\mathrm{grad}}^\perp$ into tangent and normal components relative to $T_x\mathcal{M}_s$. Since the value gradient is generically not tangent to the manifold (i.e., $\|v_{\mathrm{grad}}^\perp\|_2 > 0$), the distance after update satisfies
\begin{equation}
d(x^+, \mathcal{M}_s) = \eta \|v_{\mathrm{grad}}^\perp\|_2 + o(\eta) = \Theta(\eta \|v_{\mathrm{grad}}\|_2),
\end{equation}
which is \emph{linear} in the stepsize $\eta$. This contrasts sharply with the quadratic chord deviation.

\textbf{Geometric implication.}
When the chord length $\|y-x\|_2$ is comparable to a small gradient step ($\|y-x\|_2 = O(\eta)$), latent self-imitation confines updates to an $O(\eta^2)$-tube around $\mathcal{M}_s$. Gradient ascent induces $O(\eta)$ drift. This quadratic suppression of off-manifold deviation underpins the support-preserving property of SPAR-PROJ.
\end{proof}

\begin{figure*}[t]
  \centering
  \begin{subfigure}{0.19\textwidth}
    \includegraphics[width=\linewidth]{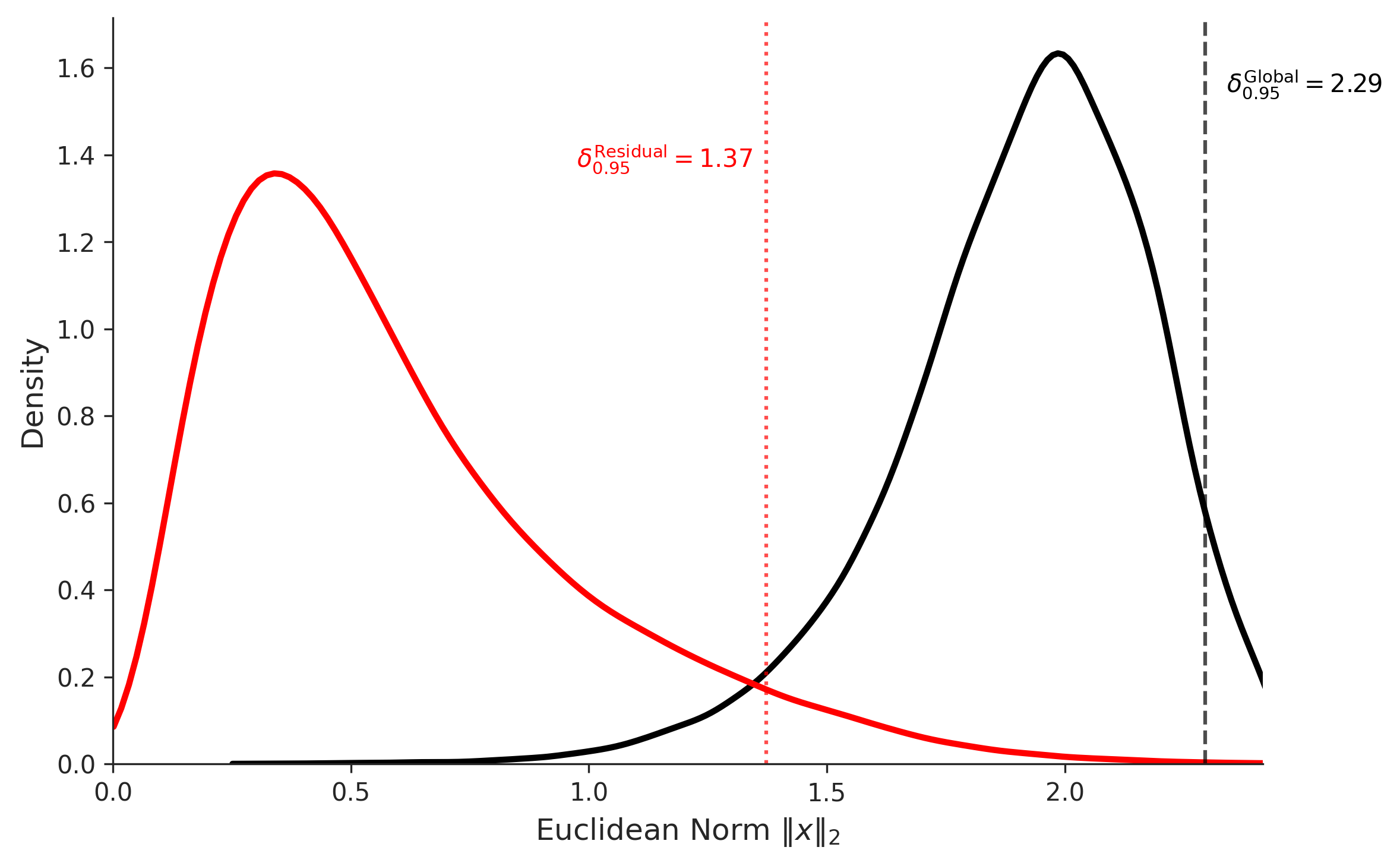}
    \caption{HC-MR}
    \label{fig:distance_hcmr}
  \end{subfigure}
  \hfill
  \begin{subfigure}{0.19\textwidth}
    \includegraphics[width=\linewidth]{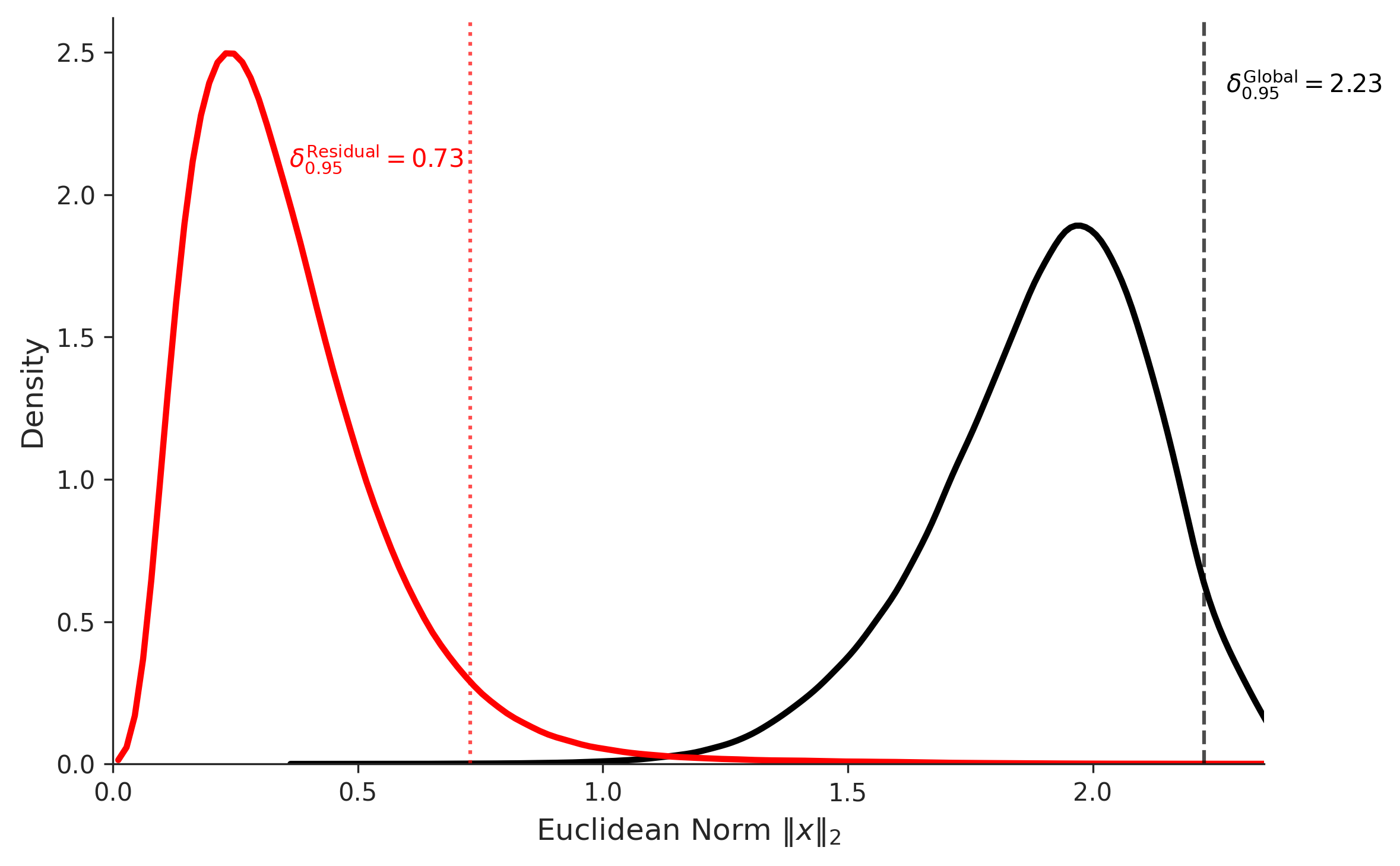}
    \caption{HC-ME}
    \label{fig:distance_hcme}
  \end{subfigure}
  \hfill
  \begin{subfigure}{0.19\textwidth}
    \includegraphics[width=\linewidth]{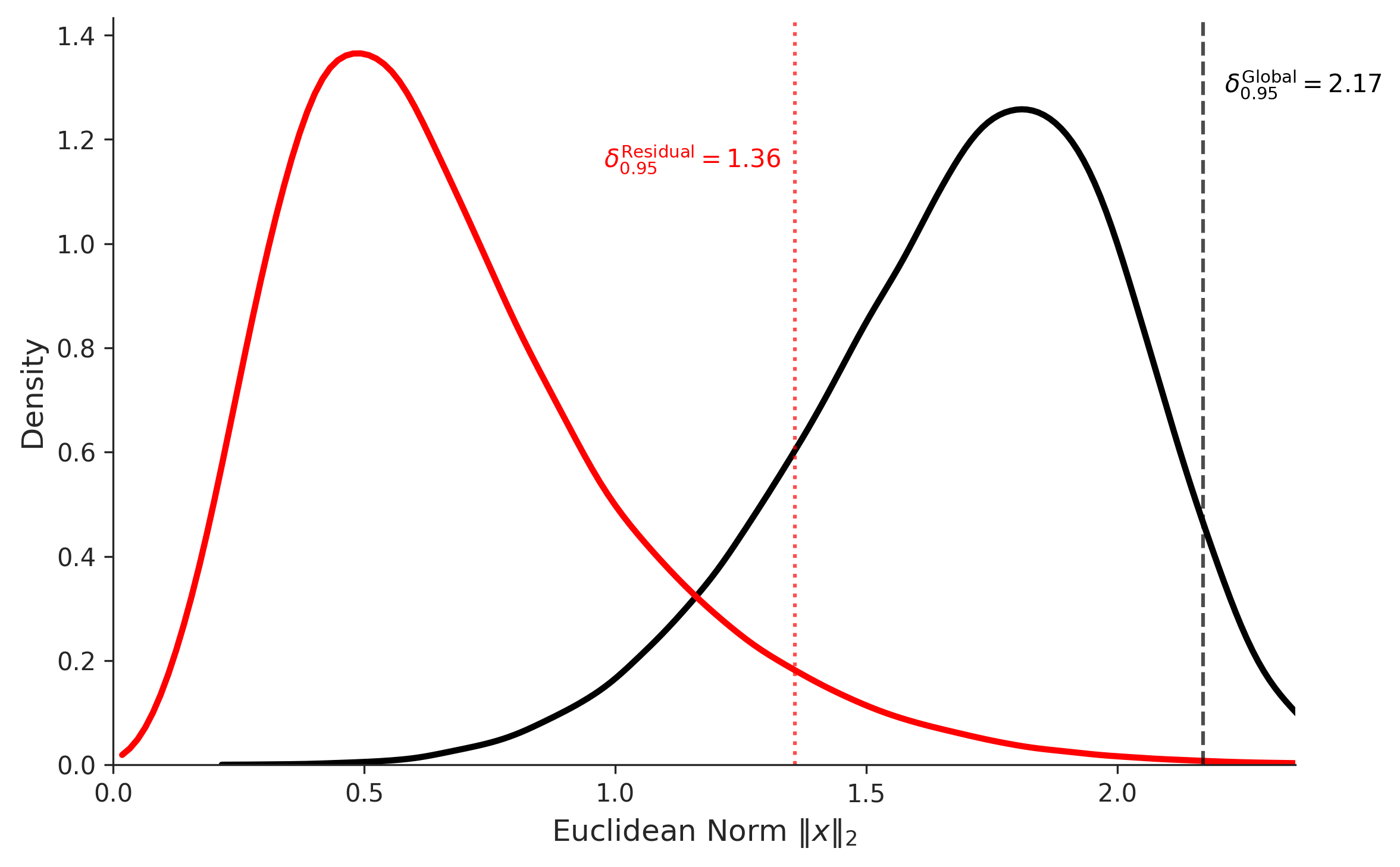}
    \caption{WK-MR}
    \label{fig:distance_wkmr}
  \end{subfigure}
  \hfill
  \begin{subfigure}{0.19\textwidth}
    \includegraphics[width=\linewidth]{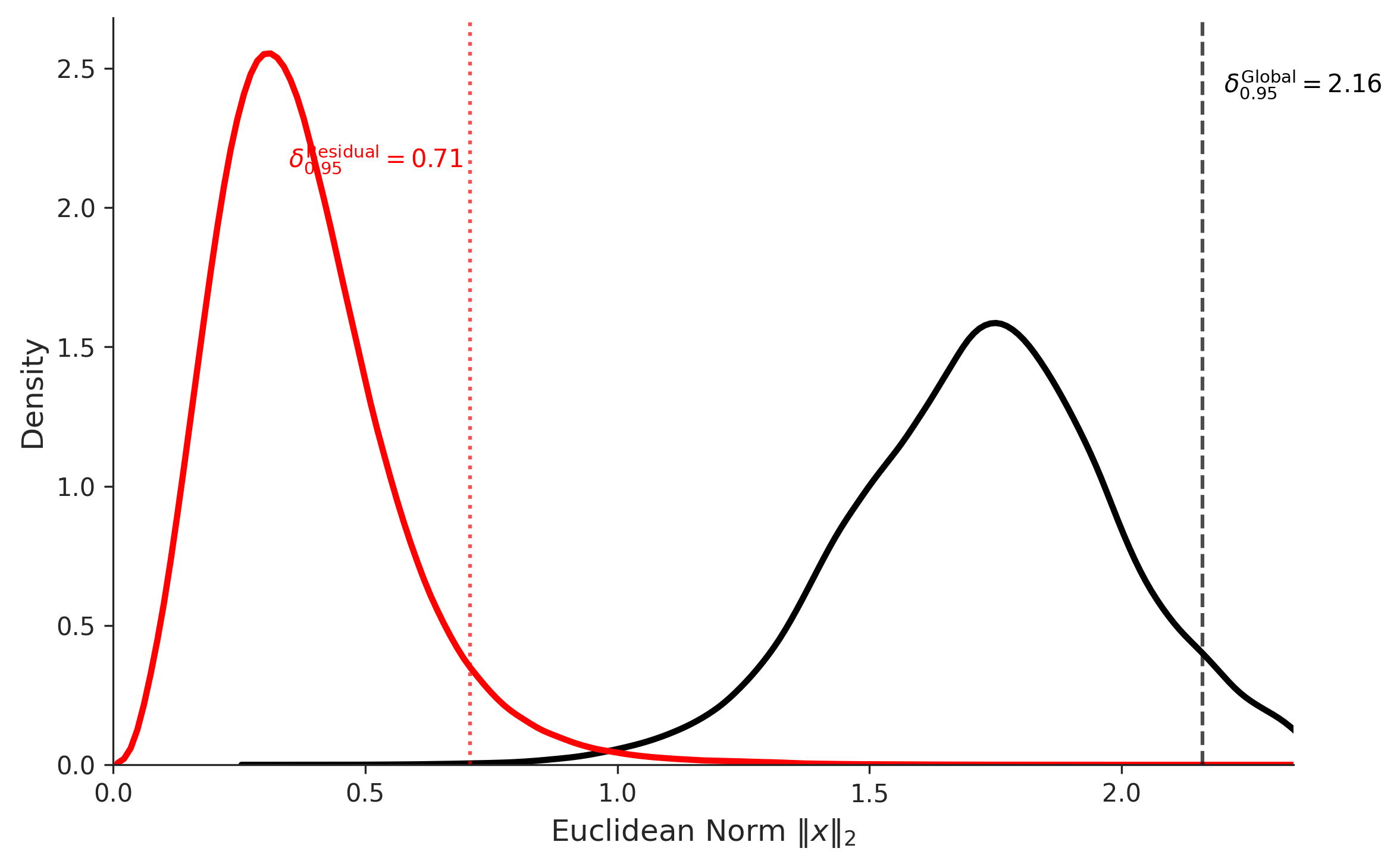}
    \caption{WK-ME}
    \label{fig:distance_wkme}
  \end{subfigure}
  \hfill
  \begin{subfigure}{0.19\textwidth}
    \includegraphics[width=\linewidth]{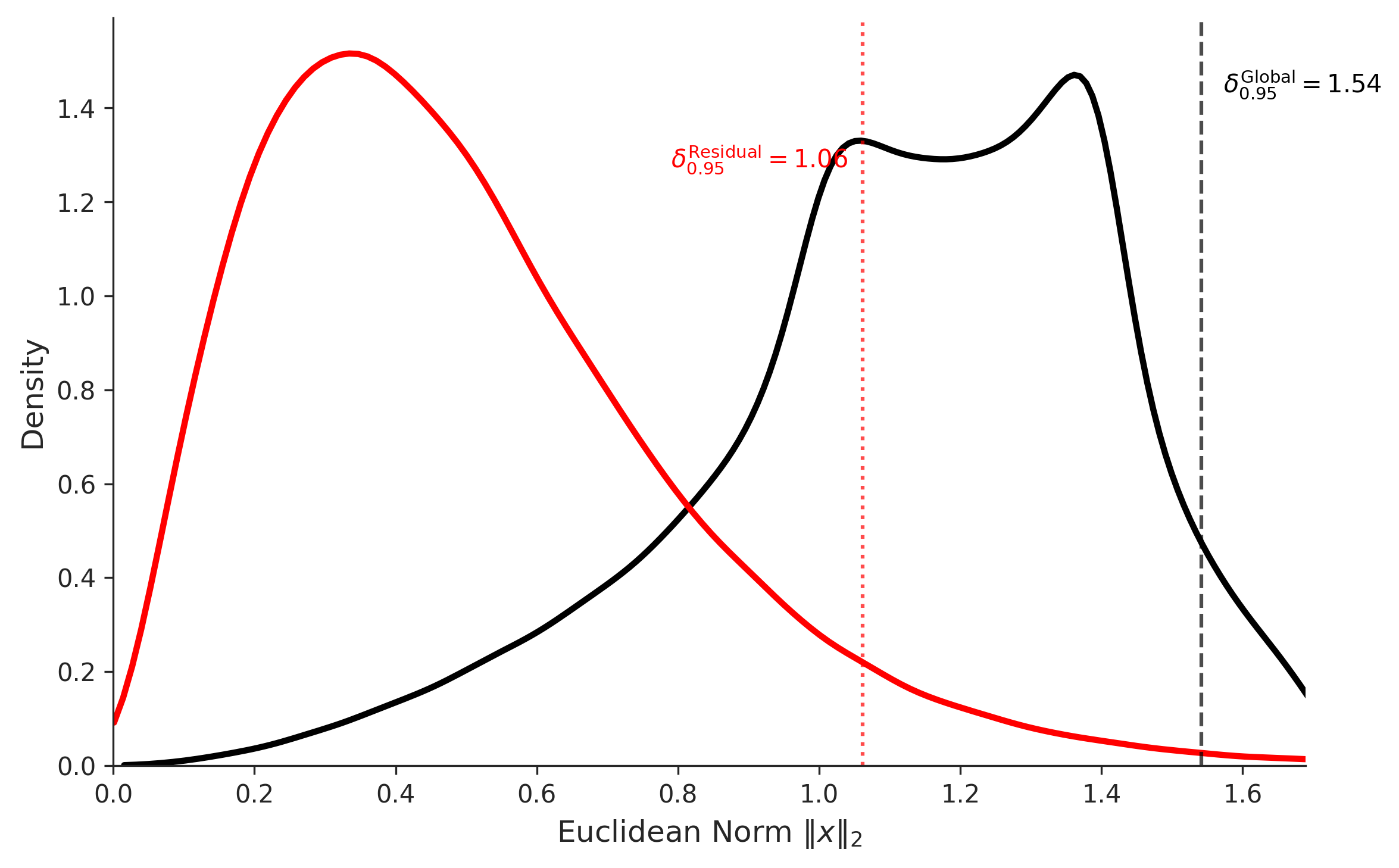}
    \caption{HP-MR}
    \label{fig:distance_hpmr}
  \end{subfigure}
  
  \vspace{0.5em} 
  
  \begin{subfigure}{0.19\textwidth}
    \includegraphics[width=\linewidth]{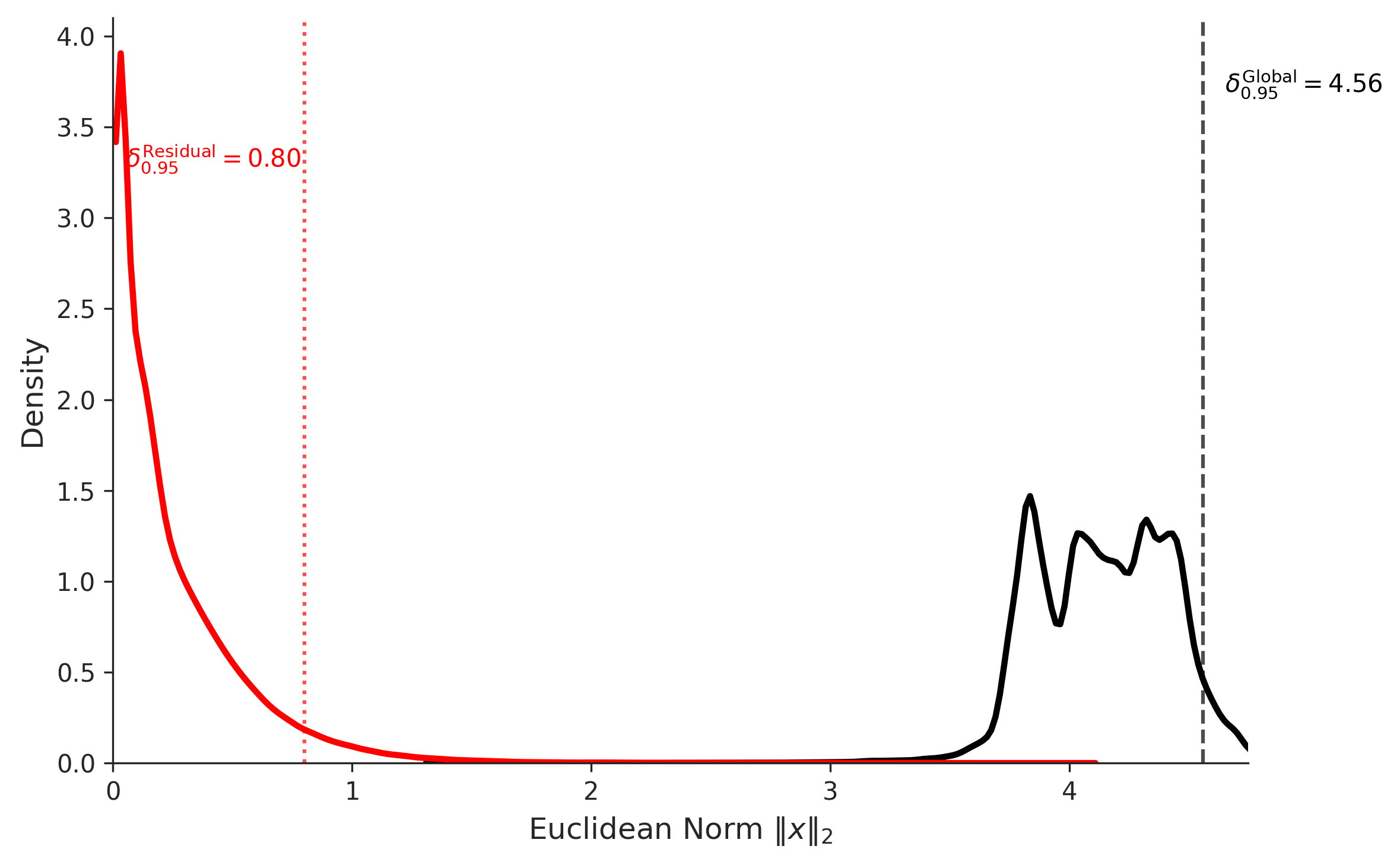}
    \caption{Pen-CL}
    \label{fig:distance_pencl}
  \end{subfigure}
  \hfill
  \begin{subfigure}{0.19\textwidth}
    \includegraphics[width=\linewidth]{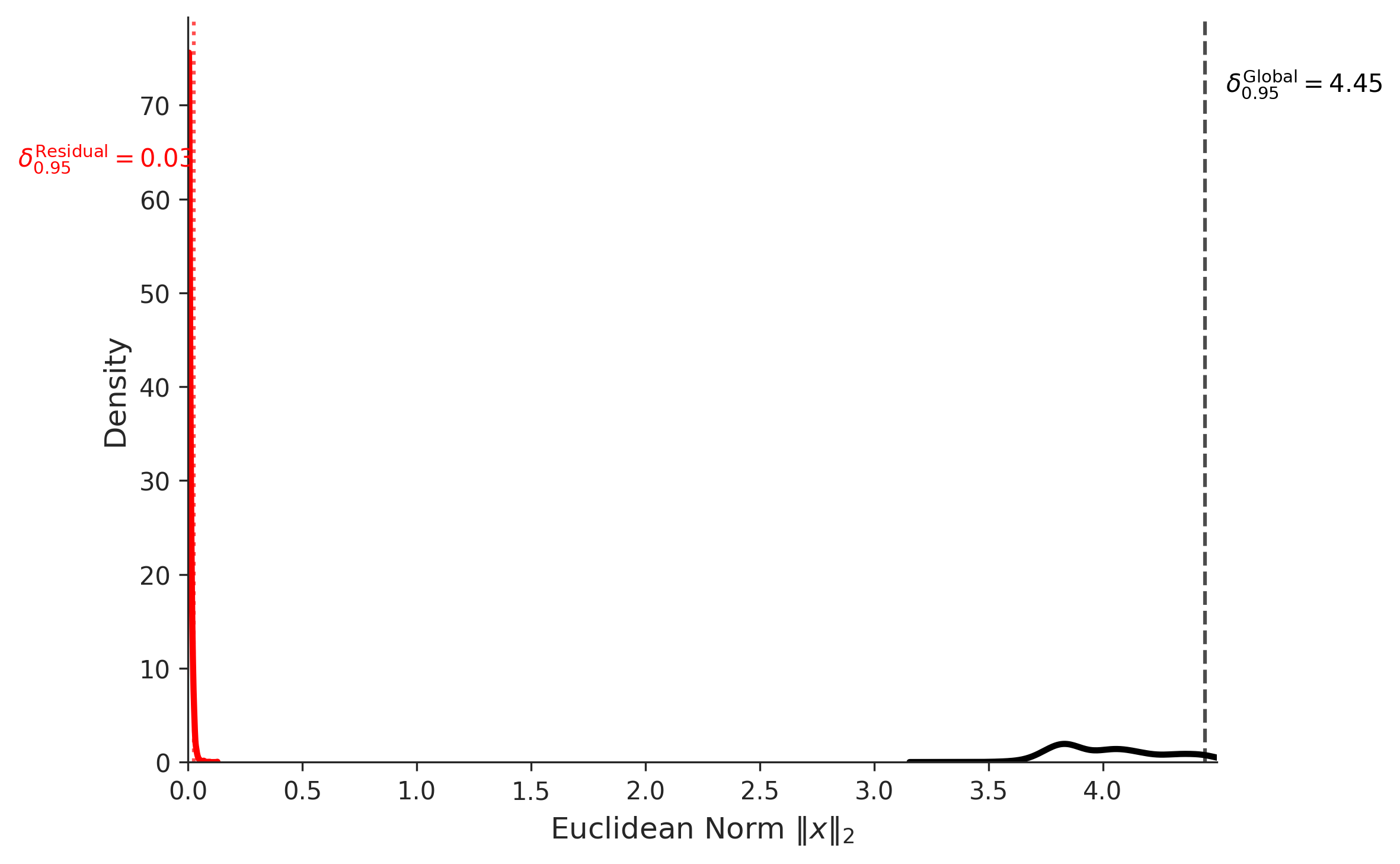}
    \caption{Pen-HM}
    \label{fig:distance_penhm}
  \end{subfigure}
  \hfill
  \begin{subfigure}{0.19\textwidth}
    \includegraphics[width=\linewidth]{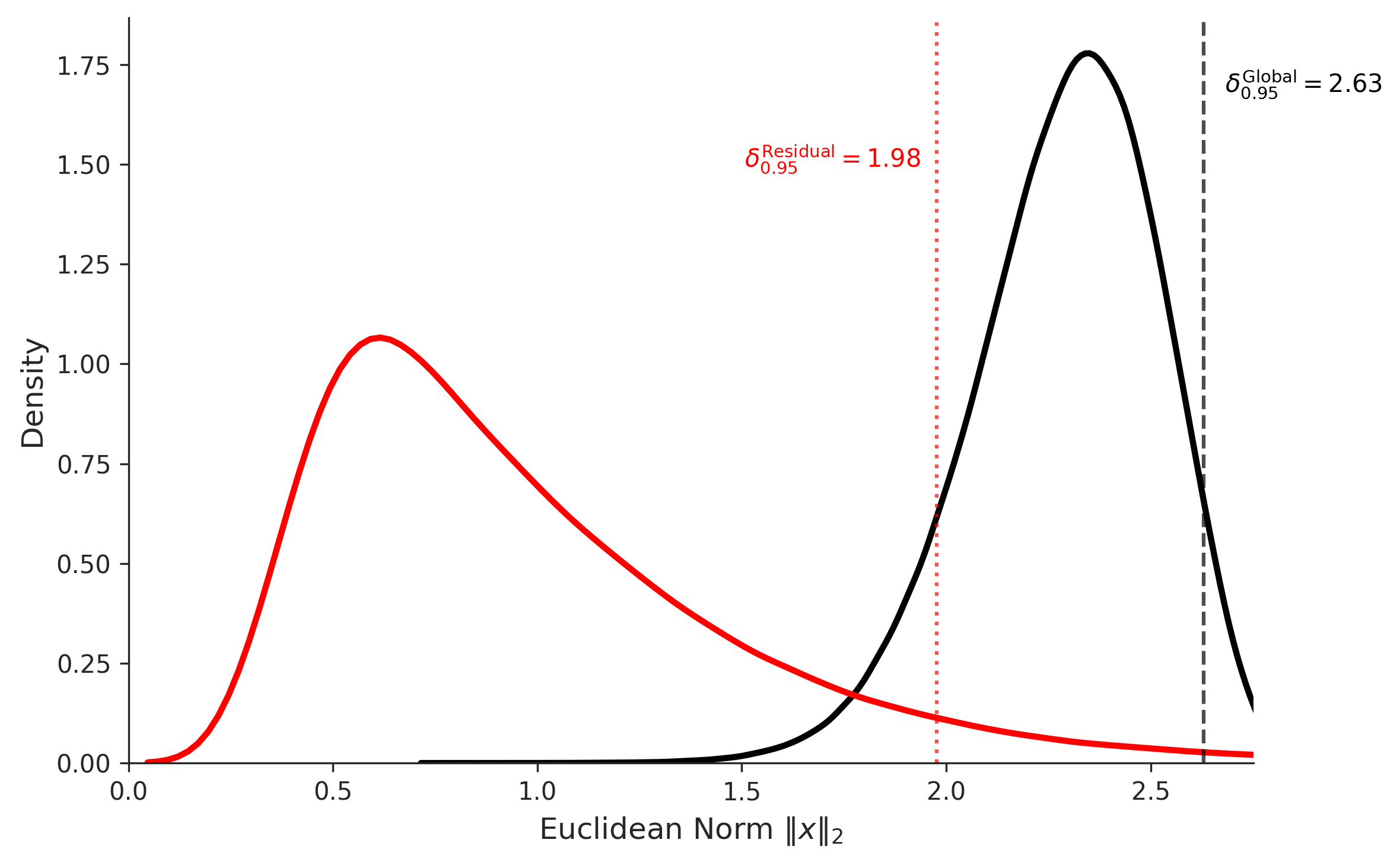}
    \caption{AM-UD}
    \label{fig:distance_amud}
  \end{subfigure}
  \hfill
  \begin{subfigure}{0.19\textwidth}
    \includegraphics[width=\linewidth]{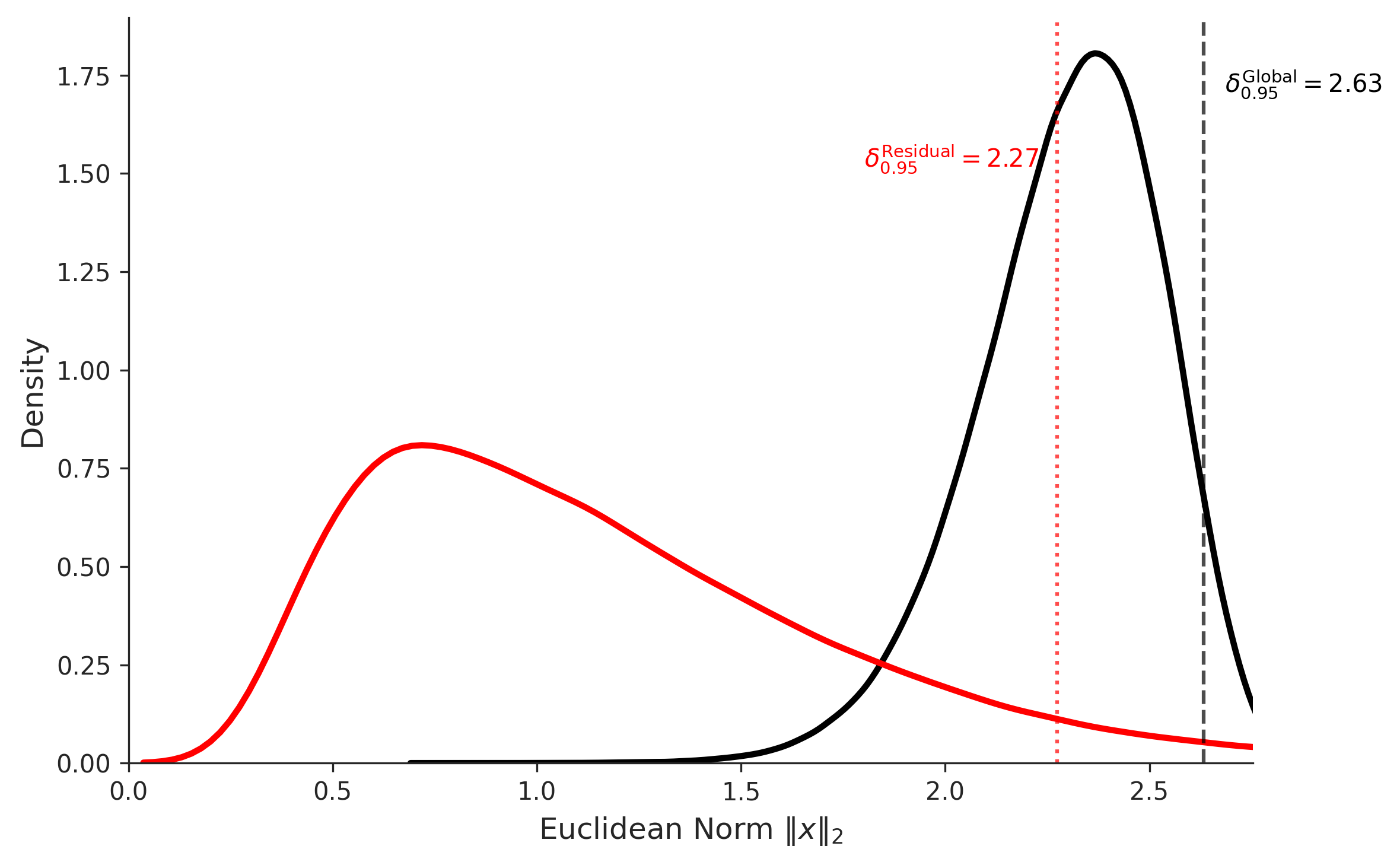}
    \caption{AM-LD}
    \label{fig:distance_amld}
  \end{subfigure}
  \hfill
  \begin{subfigure}{0.19\textwidth}
    \includegraphics[width=\linewidth]{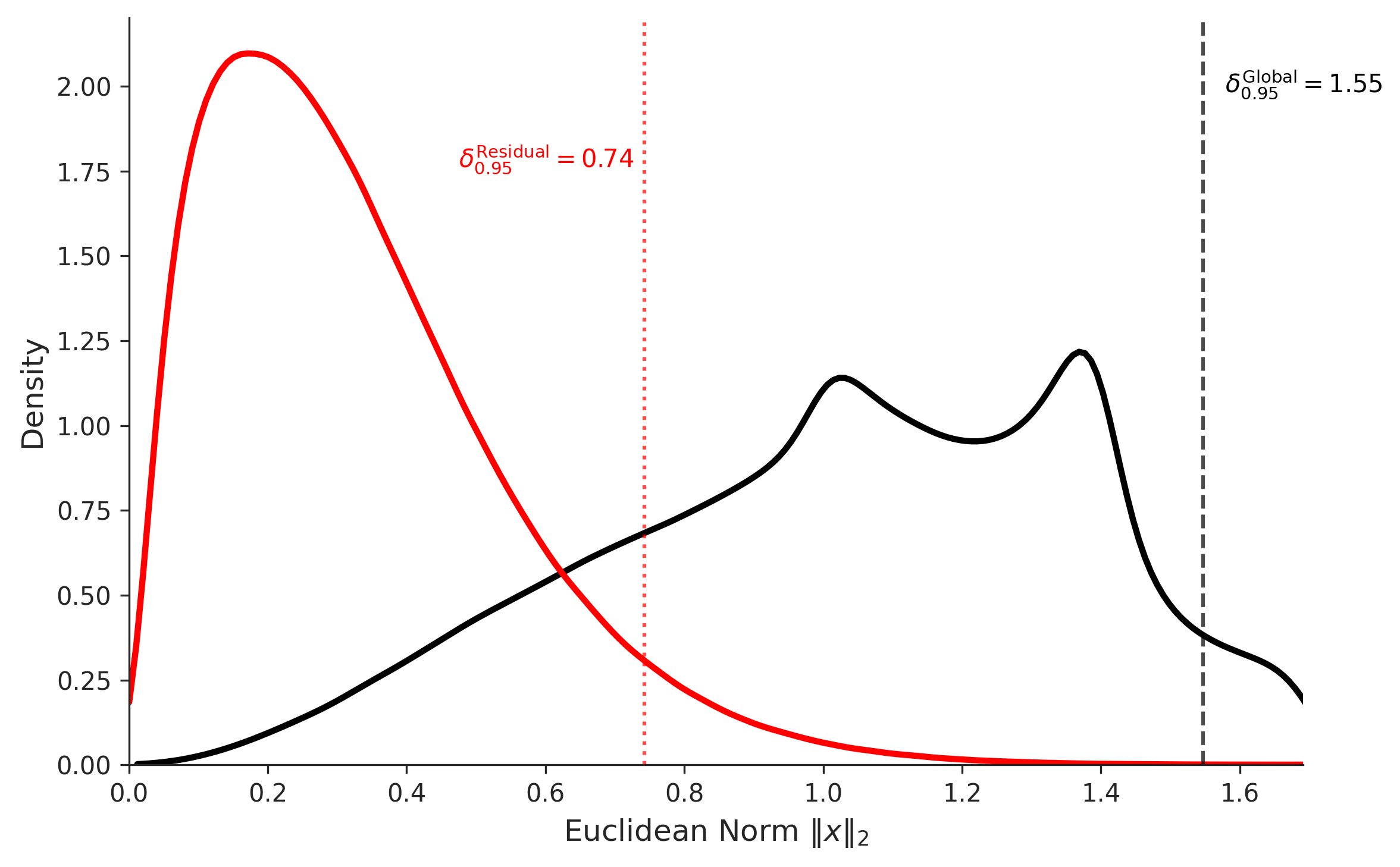}
    \caption{HP-ME}
    \label{fig:distance_hpme}
  \end{subfigure}
  
  \vspace{-0.5em}
  \caption{Visualization of the size of the action space. The red curve represents the distribution of the residual action space size, while the black curve represents the distribution of the global action space size of the dataset. To mitigate the impact of long-tail distributions, we adopt the 95th percentile of $D_a$ as the effective distance threshold.}
  \label{fig:action_space_size}
  \vspace{-1em}
\end{figure*}

\raggedbottom
\section{Algorithm Pseudocode}
\label{sec:pseudocode}

We present the complete SPAR pipeline in three stages. Stage I trains a conservative anchor policy and critic ensemble. Stage II learns a residual policy with topology-adaptive architectures: SPAR-MLP for unimodal residuals, SPAR-PLAS for gradient-guided latent policies, and SPAR-PROJ for projection-based latent self-imitation. Stage III applies value-gated rectification at inference.

\begin{center}
\begin{minipage}{0.85\textwidth}
\begin{algorithm}[H]
\caption{Stage I}
\label{alg:stage1}
\small
\begin{algorithmic}[1]
\REQUIRE Dataset $\mathcal{D}$, expectile $\tau\!=\!0.5$, uncertainty weight $\lambda_u$
\ENSURE Base policy $\pi_{\mathrm{base}}$, robust critic $Q_{\mathrm{rob}}$
\STATE Initialize BC policy $\pi_{\mathrm{base}}$ and critic ensemble $\{Q_{\theta_i}\}_{i=1}^M$
\FOR{each gradient step}
    \STATE Sample batch $\{(s,a,r,s')\} \sim \mathcal{D}$
    \STATE $\mathcal{L}_{\mathrm{BC}} \gets \mathbb{E}[\|\pi_{\mathrm{base}}(s) - a\|_2^2]$ \COMMENT{Pure BC}
    \STATE Update $\pi_{\mathrm{base}}$ via $\nabla_{\phi} \mathcal{L}_{\mathrm{BC}}$
    \STATE Compute target: $y \gets r + \gamma \cdot \frac{1}{M}\sum_i Q_{\theta_i}(s', \pi_{\mathrm{base}}(s'))$
    \STATE $\mathcal{L}_Q \gets \mathbb{E}[\rho_\tau(y - Q_{\theta_i}(s,a))]$ \COMMENT{Expectile loss}
    \STATE Update critics via $\nabla_{\theta_i} \mathcal{L}_Q$
\ENDFOR
\STATE Freeze $\pi_{\mathrm{base}}$; compute $Q_{\mathrm{rob}}(s,a) \gets \mu_Q(s,a) - \lambda_u \sigma_Q(s,a)$
\STATE \textbf{return} $\pi_{\mathrm{base}}, Q_{\mathrm{rob}}$
\end{algorithmic}
\end{algorithm}
\end{minipage}
\end{center}

\begin{center}
\begin{minipage}{0.85\textwidth}
\begin{algorithm}[H]
\caption{SPAR-MLP}
\label{alg:mlp}
\small
\begin{algorithmic}[1]
\REQUIRE $\pi_{\mathrm{base}}$, $Q_{\mathrm{rob}}$, $\mathcal{D}$, temperature $T$, guide weight $\lambda_g$
\ENSURE Deterministic residual $\pi_{\mathrm{res}}$
\STATE Initialize MLP $\pi_{\mathrm{res}}: \mathcal{S} \times \mathcal{A} \to \mathbb{R}^d$
\FOR{each gradient step}
    \STATE Sample batch $\{(s,a)\} \sim \mathcal{D}$
    \STATE $a_{\mathrm{base}} \gets \pi_{\mathrm{base}}(s)$, $\Delta a \gets a - a_{\mathrm{base}}$
    \STATE $\tilde{A} \gets \big(Q_{\mathrm{rob}}(s,a) - Q_{\mathrm{rob}}(s,a_{\mathrm{base}})\big) / \sigma_Q(s)$
    \STATE $w \gets \exp(\tilde{A}/T)$
    \STATE $\mathcal{L}_{\mathrm{fit}} \gets \mathbb{E}\big[ w \cdot \|\pi_{\mathrm{res}}(s, a_{\mathrm{base}}) - \Delta a\|_2^2 \big]$
    \STATE $a_{\mathrm{synth}} \gets a_{\mathrm{base}} + \pi_{\mathrm{res}}(s, a_{\mathrm{base}})$
    \STATE $\mathcal{L}_{\mathrm{guide}} \gets -\lambda_g \cdot \mathbb{E}\big[ Q_{\mathrm{rob}}(s, a_{\mathrm{synth}}) \big]$ \COMMENT{Gradient ascent}
    \STATE Update $\pi_{\mathrm{res}}$ via $\nabla_{\psi} \big( \mathcal{L}_{\mathrm{fit}} + \mathcal{L}_{\mathrm{guide}} \big)$
\ENDFOR
\STATE \textbf{return} $\pi_{\mathrm{res}}$
\end{algorithmic}
\end{algorithm}
\end{minipage}
\end{center}

\begin{center}
\begin{minipage}{0.85\textwidth}
\begin{algorithm}[H]
\caption{SPAR-PLAS}
\label{alg:plas}
\small
\begin{algorithmic}[1]
\REQUIRE $\pi_{\mathrm{base}}$, dataset $\mathcal{D}$, critics $\{Q_{\theta_i}\}$
\ENSURE Latent policy $\pi_z$ and frozen CVAE decoder $\pi_{\mathrm{res}}$

\STATE \textit{// Phase 1: Pre-train CVAE via weighted ELBO}
\STATE Initialize CVAE encoder $q_\phi(z|s,a,\Delta a)$ and decoder $p_\psi(\Delta a|s,a,z)$
\FOR{each gradient step in Phase 1}
    \STATE Sample $\{(s,a)\} \sim \mathcal{D}$, compute $a_{\mathrm{base}} = \pi_{\mathrm{base}}(s)$, $\Delta a_{\mathrm{gt}} = a - a_{\mathrm{base}}$
    \STATE Compute weight $w = \exp(\tilde{A}/T) \cdot \mathbb{I}\{\tilde{A} > 0\}$ with $\tilde{A} = (Q_{\mathrm{rob}}(s,a) - Q_{\mathrm{rob}}(s,a_{\mathrm{base}}))/\sigma_Q$
    \STATE Update CVAE via weighted ELBO: $\mathcal{L}_{\mathrm{fit}} = \mathbb{E}\big[ w \cdot \|\pi_{\mathrm{res}}(s, a_{\mathrm{base}}, z) - \Delta a_{\mathrm{gt}}\|_2^2 \big] + \beta \cdot \mathrm{KL}(q_\phi \| p(z))$
\ENDFOR
\STATE \textbf{Freeze} CVAE decoder parameters (stop-gradient)

\STATE \textit{// Phase 2: Train latent policy via gradient guidance (frozen CVAE)}
\STATE Initialize deterministic latent policy $\pi_z: \mathcal{S} \times \mathcal{A} \to \mathcal{Z}$
\FOR{each gradient step in Phase 2}
    \STATE Sample $\{s\} \sim \mathcal{D}$, $a_{\mathrm{base}} \gets \pi_{\mathrm{base}}(s)$
    \STATE $z \gets \pi_z(s, a_{\mathrm{base}})$ \COMMENT{Deterministic latent}
    \STATE $\Delta a \gets \pi_{\mathrm{res}}(s, a_{\mathrm{base}}, z)$ \COMMENT{Frozen decoder}
    \STATE $a_{\mathrm{synth}} \gets \mathrm{clip}(a_{\mathrm{base}} + \Delta a, -1, 1)$
    \STATE $\mathcal{L}_{\mathrm{guide}} \gets -\lambda_g \cdot Q_{\mathrm{rob}}(s, a_{\mathrm{synth}})$ \COMMENT{Gradient ascent}
    \STATE Update $\pi_z$ via $\nabla_z \mathcal{L}_{\mathrm{guide}}$ \COMMENT{CVAE remains frozen}
\ENDFOR
\STATE \textbf{return} $\pi_z$, $\pi_{\mathrm{res}}$
\end{algorithmic}
\end{algorithm}
\end{minipage}
\end{center}

\begin{center}
\begin{minipage}{0.85\textwidth}
\begin{algorithm}[H]
\caption{SPAR-PROJ}
\label{alg:proj}
\small
\begin{algorithmic}[1]
\REQUIRE $\pi_{\mathrm{base}}$, CVAE $\pi_{\mathrm{res}}$, target CVAE $\pi_{\mathrm{res}}'$, critics
\ENSURE Projection-guided residual policy
\STATE Initialize $\pi_{\mathrm{res}}$, $\pi_{\mathrm{res}}' \gets \pi_{\mathrm{res}}$
\FOR{each gradient step}
    \STATE Sample batch $\{s\} \sim \mathcal{D}$, $a_{\mathrm{base}} \gets \pi_{\mathrm{base}}(s)$
    \STATE \textit{// Step 1: Value-weighted sampling}
    \STATE Sample $\{z_k\}_{k=1}^K \sim \mathcal{N}(0,I)$
    \STATE $\Delta a_k \gets \pi_{\mathrm{res}}'(s, a_{\mathrm{base}}, z_k)$ \COMMENT{Target decoder}
    \STATE $a_k \gets a_{\mathrm{base}} + \Delta a_k$
    \STATE $\tilde{A}_k \gets \big(Q_{\mathrm{rob}}(s,a_k) - Q_{\mathrm{rob}}(s,a_{\mathrm{base}})\big) / \sigma_Q(s)$
    \STATE $\omega_k \gets \frac{\exp(\tilde{A}_k / T) }{\sum_j \exp(\tilde{A}_j / T) }$
    \STATE \textit{// Step 2: Projection update}
    \STATE $\mathcal{L}_{\mathrm{fit}} \gets \mathbb{E}\big[ \sum_k \omega_k \|\pi_{\mathrm{res}}(s, a_{\mathrm{base}}, z_k) - \Delta a_k\|_2^2 \big] + \beta \cdot \mathrm{KL}(q_\phi \| p(z))$ \COMMENT{Weighted ELBO}
\STATE $\mathcal{L}_{\mathrm{guide}} \gets \mathbb{E}\big[ \sum_k \omega_k \|\pi_{\mathrm{res}}(s, a_{\mathrm{base}}, z_k) - \Delta a_k\|_2^2 \big]$ \COMMENT{Value-weighted regression}
\STATE Update $\pi_{\mathrm{res}}$ via $\nabla_{\psi} \big( \mathcal{L}_{\mathrm{fit}} + \mathcal{L}_{\mathrm{guide}} \big)$
    \STATE $\pi_{\mathrm{res}}' \gets (1-\tau)\pi_{\mathrm{res}}' + \tau \pi_{\mathrm{res}}$ \COMMENT{Polyak averaging}
\ENDFOR
\STATE \textbf{return} $\pi_{\mathrm{res}}$
\end{algorithmic}
\end{algorithm}
\end{minipage}
\end{center}

\begin{center}
\begin{minipage}[t]{0.85\textwidth} 
\begin{algorithm}[H]
\caption{Stage III: Deployment-Time Rectification}
\label{alg:stage3}
\small
\begin{algorithmic}[1]
\REQUIRE State $s$, $\pi_{\mathrm{base}}$, $\pi_{\mathrm{res}}$, $\eta_{\mathrm{abs}}=10^{-4}$, $\eta_{\mathrm{rel}}=0.01$
\ENSURE Action $a_{\mathrm{spar}}$
\STATE $a_{\mathrm{base}} \gets \pi_{\mathrm{base}}(s)$
\STATE $Q_{\mathrm{base}} \gets Q_{\mathrm{rob}}(s,a_{\mathrm{base}})$
\FOR{$k = 1$ to $K$}
    \STATE Get $\Delta a_k$ (det. or sampled $z_k$)
    \STATE $a_k \gets a_{\mathrm{base}} + \Delta a_k$
    \STATE $\Delta Q_k \gets Q_{\mathrm{rob}}(s,a_k) - Q_{\mathrm{base}}$
    \STATE $R_k \gets \Delta Q_k / (|Q_{\mathrm{base}}|+\epsilon)$
\ENDFOR
\STATE $k^* \gets \arg\max_k \Delta Q_k$
\IF{$\Delta Q_{k^*} > \eta_{\mathrm{abs}}$ \textbf{and} $R_{k^*} > \eta_{\mathrm{rel}}$}
    \STATE \textbf{return} $a_{k^*}$
\ELSE
    \STATE \textbf{return} $a_{\mathrm{base}}$
\ENDIF
\end{algorithmic}
\end{algorithm}
\end{minipage}
\end{center}

\section{Experiments}
\subsection{Hyperparameter Configuration}
\label{app:hyperparams}

This section provides a comprehensive specification of hyperparameters used in SPAR's three-stage pipeline. All experiments use JAX/Flax implementation with Adam optimizer and fixed random seeds $\{0, 42, 123\}$.

\subsubsection{Global Hyperparameters}
\label{app:global_hparams}

Table~\ref{tab:global_hparams} summarizes architecture-agnostic settings shared across all stages and environments.

\begin{table}[h]
\centering
\caption{Global hyperparameters shared across all D4RL benchmarks.}
\label{tab:global_hparams}
\begin{tabular}{ll}
\toprule
\textbf{Component} & \textbf{Value} \\
\midrule
\textit{Optimization} & \\
Batch size & 256 \\
Discount factor ($\gamma$) & 0.99 \\
Optimizer & Adam \\
Gradient clipping & 1.0 (global norm) \\
Max consecutive errors & 10 \\
\midrule
\textit{Network Architecture} & \\
Hidden dimension (MLP/CVAE) & 256 \\
Number of hidden layers & 3 (encoder/decoder) \\
CVAE latent dimension ($d_z$) & 16 \\
CVAE KL weight ($\beta_{\text{KL}}$) & 0.5 \\
CVAE reconstruction weight & 1.0 \\
Latent policy hidden dims & (256, 256) \\
MLP residual hidden dims & (256, 256) \\
Actor max action & 1.0 (normalized action space) \\
\midrule
\textit{Ensemble \& Uncertainty} & \\
Total critics  & 10 \\
Data critics subset ($M_{\text{data}}$) & 4 \\
Guide critics subset ($M_{\text{guide}}$) & 4 \\
Rectification critics ($M_{\text{rect}}$) & 4 \\
Uncertainty penalty ($\lambda_u$) & Task-dependent \\
\midrule
\textit{Training Schedule} & \\
Stage I steps & $1 \times 10^6$ \\
Stage II steps & $1 \times 10^6$ \\
Evaluation frequency & 50,000 steps \\
Logging frequency & 1,000 steps \\
\midrule
\textit{Normalization} & \\
State normalization & Enabled (zero-mean, unit-var) \\
Reward normalization & Disabled (except antmaze) \\
\bottomrule
\end{tabular}
\end{table}

\subsubsection{Stage I: Conservative Anchor}
\label{app:stage1_hparams}

Stage I trains a pure behavior cloning policy $\pi_{\text{base}}$ and conservative critic ensemble using IQL. Critical configuration differences:

\begin{itemize}
    \item \textbf{Expectile parameter $\tau$}: Set to 0.5 for most environments to approximate conditional mean of behavior policy. For sparse-reward AntMaze tasks, increased to 0.9 to restore critic discriminability (as noted in Sec.~4.2 of main text).
    \item \textbf{Policy type}: Deterministic for MuJoCo locomotion ($\texttt{base\_policy\_deterministic=True}$), stochastic Gaussian for Adroit/AntMaze ($\texttt{False}$) with no advantage scaling at all.
    \item \textbf{IQL-specific}: $\beta=3.0$ (advantage scaling), $\texttt{iql\_tau}=0.7$ (value loss expectile).
\end{itemize}

Learning rates: $3 \times 10^{-4}$ for all networks (actor, critic, value function). Target network update rate $\tau_{\text{polyak}} = 0.005$.

\subsubsection{Stage II: Residual Policy Learning}
\label{app:stage2_hparams}

Architecture selection follows residual topology analysis (Sec.~4.2):

\begin{itemize}
    \item \textbf{SPAR-MLP}: Used for unimodal residuals (MuJoCo Medium-Replay). Direct regression with gradient-based guidance:
    \begin{equation}
        \mathcal{L}_{\text{total}} = \mathbb{E}[w(s,a) \cdot \|\pi_{\text{res}} - \Delta a\|_2^2] - \lambda_g \mathbb{E}[Q_{\text{rob}}(s, a_{\text{synth}})]
    \end{equation}
    
    \item \textbf{SPAR-PROJ}: Used for multimodal/narrow residuals (MuJoCo Medium-Expert, Adroit, AntMaze). CVAE with latent self-imitation (Algorithm~4). Key parameters:
    \begin{itemize}
        \item Projection period: 10 steps (How often to perform latent self-imitation)
        \item Target network EMA: $\tau_{\text{target}} = 0.005$
        \item Candidate samples: $K=64$ 
        \item Relative gain computation: enabled 
    \end{itemize}
    
\end{itemize}
Table~\ref{tab:hyperparams_final} summarizes SPAR's best settings of dynamic hyperparameters across D4RL benchmarks.
\begin{table}[H]
\centering
\caption{Hyperparameters for Best SPAR across 10 D4RL benchmarks. Organized by the three-stage structure.}
\label{tab:hyperparams_final}

\small 
\setlength{\tabcolsep}{10pt}

\begin{tabular}{ll|c|ccccc|c}
\toprule
\multirow{2}{*}{\textbf{Domain}} & \multirow{2}{*}{\textbf{Task}} & \textbf{Stage I} & \multicolumn{5}{c|}{\textbf{Stage II (Residual Learning)}} & \textbf{Stage III} \\
 & & $\tau$ & \textbf{Arch} & $\lambda_u$ & $\lambda_g$ & \textbf{Filter} & $T$ & $\eta$ \\
\midrule
\multirow{3}{*}{\shortstack[l]{MuJoCo\\MR}}
& HalfCheetah & 0.5 & MLP & 0.5 & 0.5 & Hard & 1.0 & 0.01 \\
& Hopper      & 0.5 & MLP & 0.5 & 0.5 & Hard & 1.0 & 0.01 \\
& Walker2d    & 0.5 & MLP & 0.5 & 2.0 & Hard & 1.0 & 0.01 \\
\midrule
\multirow{3}{*}{\shortstack[l]{MuJoCo\\ME}}
& HalfCheetah & 0.5 & PROJ & 0.5 & 0.5 & Hard & 0.3 & 0.01 \\
& Hopper      & 0.5 & PROJ & 2 & 0.5 & Hard & 0.3 & 0.01 \\
& Walker2d    & 0.5 & PROJ & 0.5 & 2.0 & Hard & 0.3 & 0.01 \\
\midrule
\multirow{2}{*}{Adroit}
& Pen-Cloned & 0.5 & PROJ & 0.5 & 0.5 & Soft & 0.3 & 0.01 \\
& Pen-Human  & 0.5 & PROJ & 0.5 & 0.5 & Soft & 0.3 & 0.01 \\
\midrule
\multirow{2}{*}{AntMaze}
& Umaze-Div & 0.9 & PROJ & 0.5 & 2.0 & Soft & 0.3 & 0.01 \\
& Large-Div & 0.9 & PROJ & 0.5 & 0.5 & Soft & 0.3 & 0.01 \\
\bottomrule
\end{tabular}

\end{table}
\subsubsection{Stage III: Deployment-Time Rectification}
\label{app:stage3_hparams}

Value-gated action selection with dual-threshold mechanism:

\begin{itemize}
    \item Candidate sampling: $K=10$ latent samples 
    \item Absolute threshold: $\eta_{\text{abs}} = 10^{-4}$ 
    \item Relative threshold: $\eta_{\text{rel}} = 0.01$ 
    \item Dual-threshold logic: Accept residual iff $\Delta Q > \eta_{\text{abs}} \land \frac{\Delta Q}{|Q_{\text{base}}| + \epsilon} > \eta_{\text{rel}}$
\end{itemize}
These thresholds are shared across tasks and serve as a conservative deployment-time safety filter with fixed settings.

\subsection{Additional Experiments}
\label{app:additional_experiments}

This section reports additional baseline comparisons and diagnostic analyses under the same D4RL evaluation protocol and score normalization as the main experiments.

\subsubsection{Recent Baselines}
Table~\ref{tab:recent_baselines} compares SPAR with recent diffusion-, flow-, and in-support baselines on representative tasks.

\begin{table}[H]
\centering
\caption{Comparison with recent offline RL baselines. Higher is better.}
\label{tab:recent_baselines}
\small
\begin{tabular}{l|cccc}
\toprule
\textbf{Task} & \textbf{SPAR} & \textbf{DAC} & \textbf{FQL} & \textbf{QIPO} \\
\midrule
HP-MR  & \textbf{101.9} & 99.8 & 64.8 & 101.2 \\
HC-ME  & 97.0 & 47.6 & \textbf{106.1} & 94.0 \\
Pen-CL & 76.2 & \textbf{78.7} & 52.7 & 35.0 \\
AM-LD  & \textbf{71.0} & 60.0 & 40.0 & 40.0 \\
\bottomrule
\end{tabular}
\end{table}

\subsubsection{Stage III Threshold Sensitivity}
Table~\ref{tab:stage3_thresholds} evaluates the two deployment-time gating thresholds. The shared default $(\eta_{\mathrm{abs}}, \eta_{\mathrm{rel}})=(10^{-4},0.01)$ performs best overall, while extreme settings predictably degrade performance.

\begin{table}[H]
\centering
\caption{Sensitivity of Stage III thresholds.}
\label{tab:stage3_thresholds}
\small
\begin{tabular}{cc|cccc}
\toprule
$\eta_{\mathrm{abs}}$ & $\eta_{\mathrm{rel}}$ & \textbf{Pen-CL} & \textbf{HP-MR} & \textbf{HC-ME} & \textbf{AM-LD} \\
\midrule
$10^9$ & $10^9$ & 50.4 & 42.9 & 63.1 & 20.0 \\
$-10^9$ & $-10^9$ & 61.6 & 49.5 & 96.1 & 60.0 \\
$-10^9$ & 0.01 & 68.1 & 95.2 & 92.0 & 55.0 \\
$10^{-4}$ & $-10^9$ & 67.2 & 78.2 & 96.4 & 55.0 \\
$10^{-4}$ & 0.01 & \textbf{76.2} & \textbf{101.9} & \textbf{97.0} & \textbf{71.0} \\
\bottomrule
\end{tabular}
\end{table}

\subsubsection{Inference Latency and Memory}
Table~\ref{tab:latency_memory} reports average inference latency and peak memory usage measured with batch size 1 and averaged over 1,500 inference steps. SPAR-PROJ requires one residual generation pass and a small candidate set at Stage III, avoiding iterative denoising.

\begin{table}[H]
\centering
\caption{Inference latency and peak memory usage.}
\label{tab:latency_memory}
\small
\begin{tabular}{l|cc|cc}
\toprule
\textbf{Task} & \textbf{SPAR-PROJ ms} & \textbf{SPAR-PROJ MB} & \textbf{DAC ms} & \textbf{DAC MB} \\
\midrule
Pen-CL & 0.394 & 1412.52 & 0.577 & 1608.66 \\
HP-MR  & 0.366 & 1389.48 & 0.589 & 1590.12 \\
HC-ME  & 0.409 & 1402.66 & 0.616 & 1592.66 \\
AM-LD  & 0.403 & 1396.74 & 0.535 & 1592.79 \\
\bottomrule
\end{tabular}
\end{table}

\subsubsection{Flow-Based Generator Variants}
Table~\ref{tab:flow_generator_variants} tests whether simply increasing generator expressiveness explains SPAR's gains. The results suggest that the anchored residual formulation and support-preserving improvement mechanism are the primary contributors.

\begin{table}[H]
\centering
\caption{Flow-based base and residual variants under matched settings.}
\label{tab:flow_generator_variants}
\small
\setlength{\tabcolsep}{4pt}
\begin{tabular}{l|ccccc}
\toprule
\textbf{Task} & \textbf{MLP base} & \textbf{MLP base + flow} & \textbf{Flow base} & \textbf{Flow base + flow} & \textbf{SPAR} \\
\midrule
HP-MR  & 42.9 & 97.5 & 31.4 & 96.2 & \textbf{101.9} \\
HC-ME  & 63.1 & 75.5 & 69.3 & 77.8 & \textbf{97.0} \\
Pen-CL & 50.4 & 68.2 & 64.6 & 68.5 & \textbf{76.2} \\
AM-LD  & 20.0 & 5.0  & 10.0 & 10.0 & \textbf{71.0} \\
\bottomrule
\end{tabular}
\end{table}

\begin{table}[H]
\centering
\caption{SPAR residual learning on top of a flow-matching base policy.}
\label{tab:flow_base_residual}
\small
\begin{tabular}{l|ccc}
\toprule
\textbf{Task} & \textbf{Flow-base Score} & \textbf{Residual Policy} & \textbf{SPAR Score} \\
\midrule
HP-MR  & 31.4 & SPAR-MLP  & 76.5 \\
HC-ME  & 69.3 & SPAR-PROJ & 94.9 \\
Pen-CL & 64.6 & SPAR-PROJ & 65.3 \\
AM-LD  & 10.0 & SPAR-PROJ & 40.0 \\
\bottomrule
\end{tabular}
\end{table}

\subsubsection{MuJoCo Medium Datasets}
Table~\ref{tab:mujoco_medium} reports additional MuJoCo medium results. SPAR-MLP is effective on these more nearly unimodal residual distributions.

\begin{table}[H]
\centering
\caption{Additional MuJoCo medium results.}
\label{tab:mujoco_medium}
\small
\begin{tabular}{l|ccc}
\toprule
\textbf{Task} & \textbf{MLP-BASE} & \textbf{SPAR-MLP} & \textbf{SPAR-PROJ} \\
\midrule
HP-M & 53.6 & \textbf{91.2} & 87.4 \\
HC-M & 42.3 & \textbf{55.0} & 47.8 \\
WK-M & 76.8 & \textbf{88.1} & 82.9 \\
\bottomrule
\end{tabular}
\end{table}

\subsubsection{Architecture Selection}
For architecture selection, we compute empirical residuals $\Delta a = a-\pi_{\mathrm{base}}(s)$ on the dataset. A compact, roughly unimodal residual distribution suggests that SPAR-MLP is sufficient, while fragmented, multi-cluster, narrow, or sparse residual geometry suggests SPAR-PROJ. A lightweight diagnostic can use covariance-spectrum concentration, nearest-neighbor tail distances, and density-based clustering that leaves the number of clusters open. A concentrated spectrum with one dense cluster indicates compact residuals; disconnected clusters, high tail distances, or many small clusters indicate fragmented support. These diagnostics are computed in residual/action space, and low-dimensional plots are used only for visualization. In practice, SPAR-PROJ is a robust default because its generative residual model can represent both unimodal and multimodal residuals.

\subsection{Full Visualization of Residuals}
Fig.~\ref{fig:residual_geometry_2x5} shows the visualization of the $(s,a)$ distribution across 10 tasks.
\begin{figure*}[t]
  \centering

  \begin{subfigure}{0.19\textwidth}
    \includegraphics[width=\linewidth]{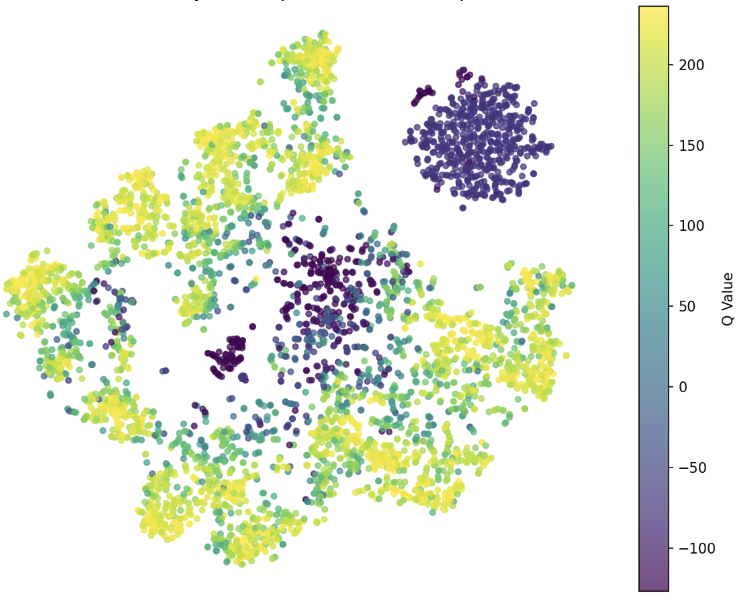}
    \caption{HC-MR}
    \label{fig:residual_hcmr}
  \end{subfigure}
  \hfill
  \begin{subfigure}{0.19\textwidth}
    \includegraphics[width=\linewidth]{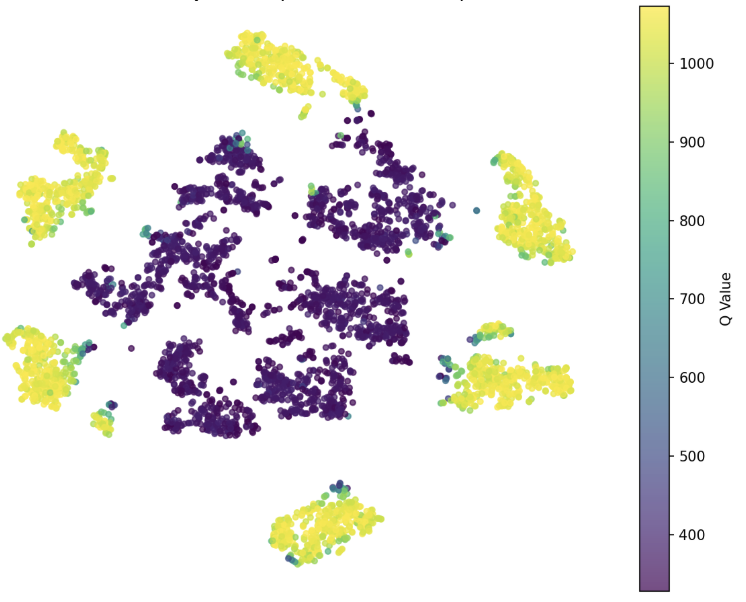}
    \caption{HC-ME}
    \label{fig:residual_hcme}
  \end{subfigure}
  \hfill
  \begin{subfigure}{0.19\textwidth}
    \includegraphics[width=\linewidth]{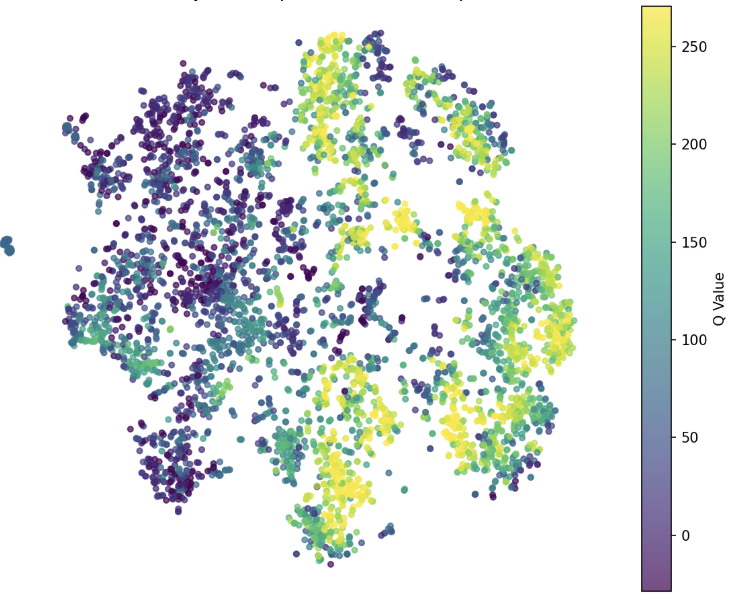}
    \caption{WK-MR}
    \label{fig:residual_wkmr}
  \end{subfigure}
  \hfill
  \begin{subfigure}{0.19\textwidth}
    \includegraphics[width=\linewidth]{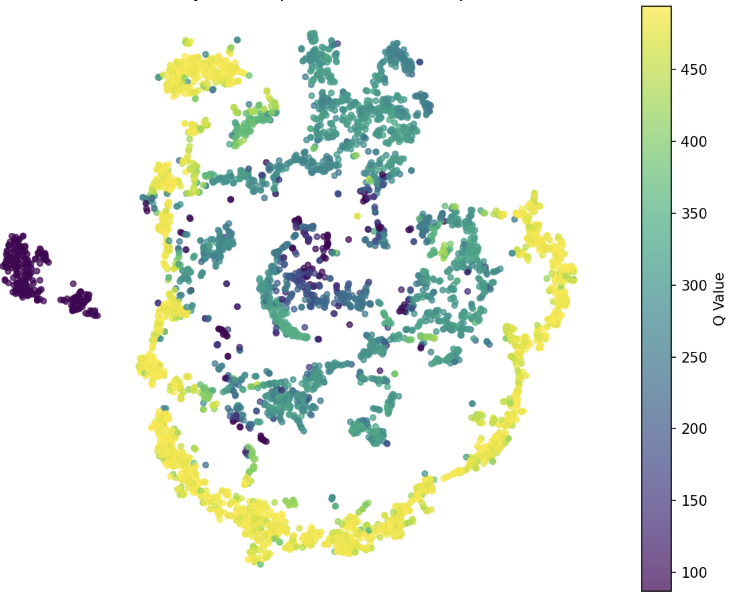}
    \caption{WK-ME}
    \label{fig:residual_wkme}
  \end{subfigure}
  \hfill
  \begin{subfigure}{0.19\textwidth}
    \includegraphics[width=\linewidth]{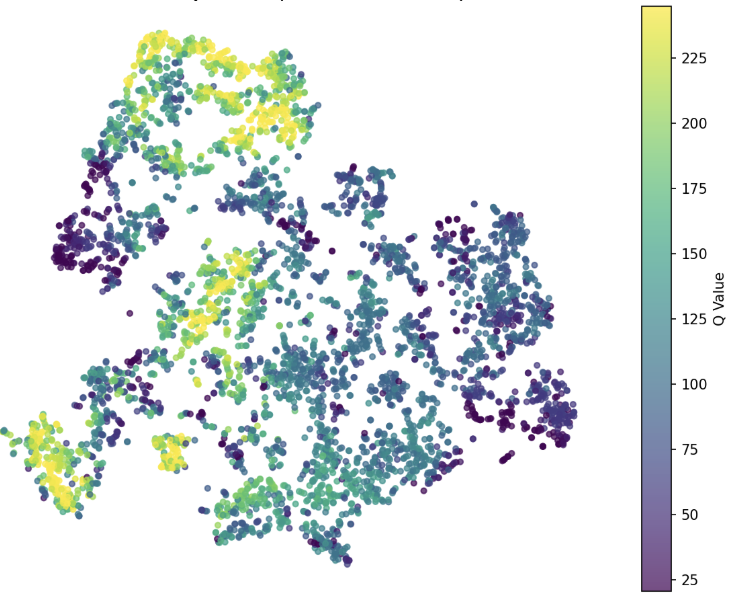}
    \caption{HP-MR}
    \label{fig:residual_hpmr}
  \end{subfigure}
  
  \vspace{0.5em} 
  
  \begin{subfigure}{0.19\textwidth}
    \includegraphics[width=\linewidth]{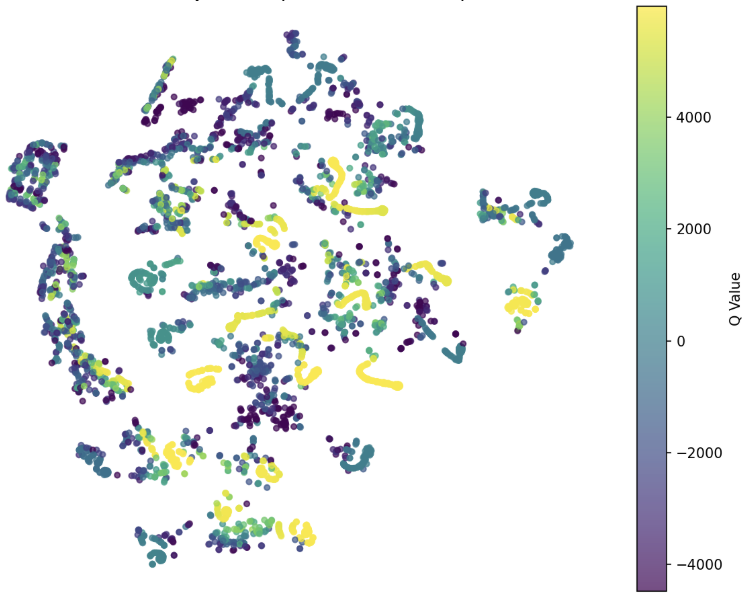}
    \caption{Pen-CL}
    \label{fig:residual_pencl}
  \end{subfigure}
  \hfill
  \begin{subfigure}{0.19\textwidth}
    \includegraphics[width=\linewidth]{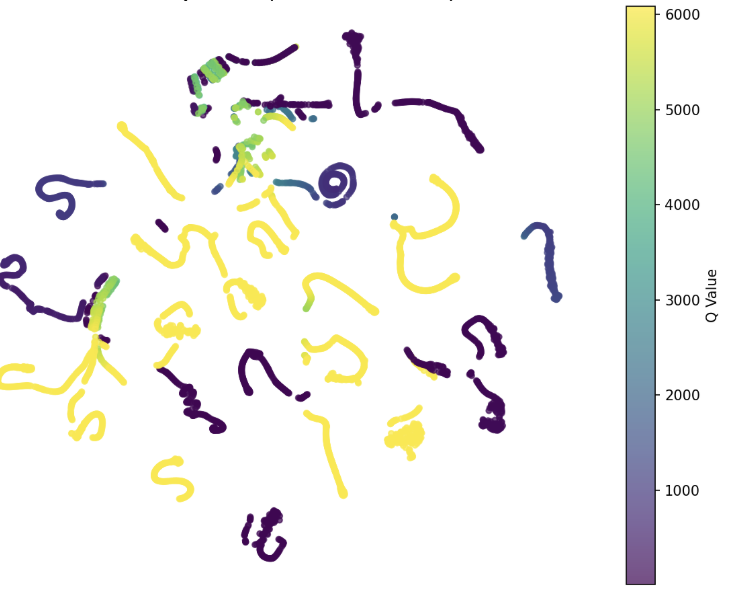}
    \caption{Pen-HM}
    \label{fig:residual_penhm}
  \end{subfigure}
  \hfill
  \begin{subfigure}{0.19\textwidth}
    \includegraphics[width=\linewidth]{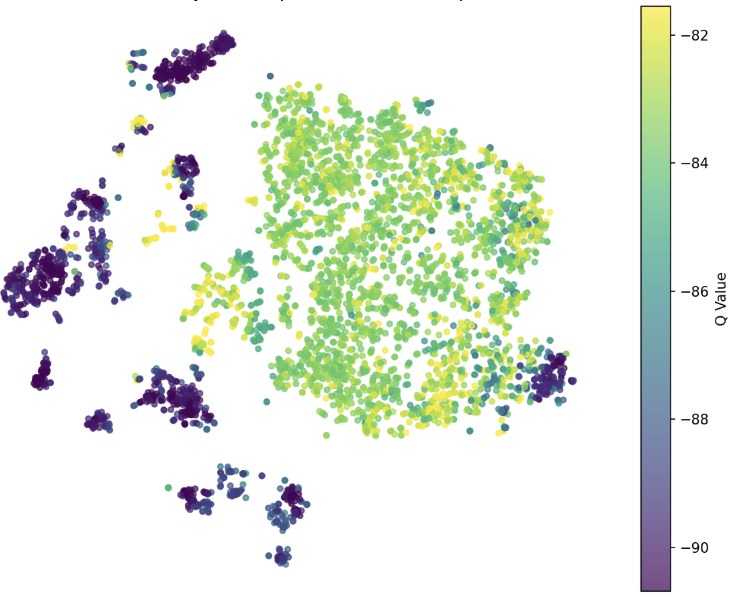}
    \caption{AM-UD}
    \label{fig:residual_amud}
  \end{subfigure}
  \hfill
  \begin{subfigure}{0.19\textwidth}
    \includegraphics[width=\linewidth]{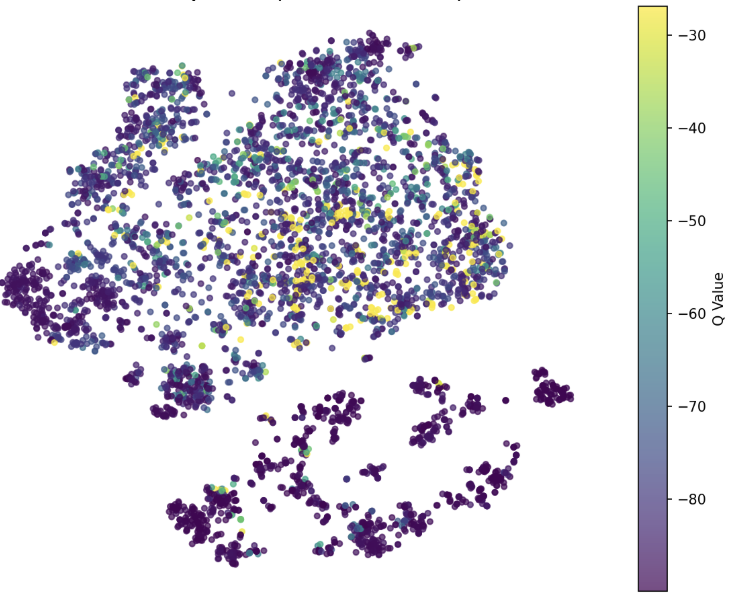}
    \caption{AM-LD}
    \label{fig:residual_amld}
  \end{subfigure}
  \hfill
  \begin{subfigure}{0.19\textwidth}
    \includegraphics[width=\linewidth]{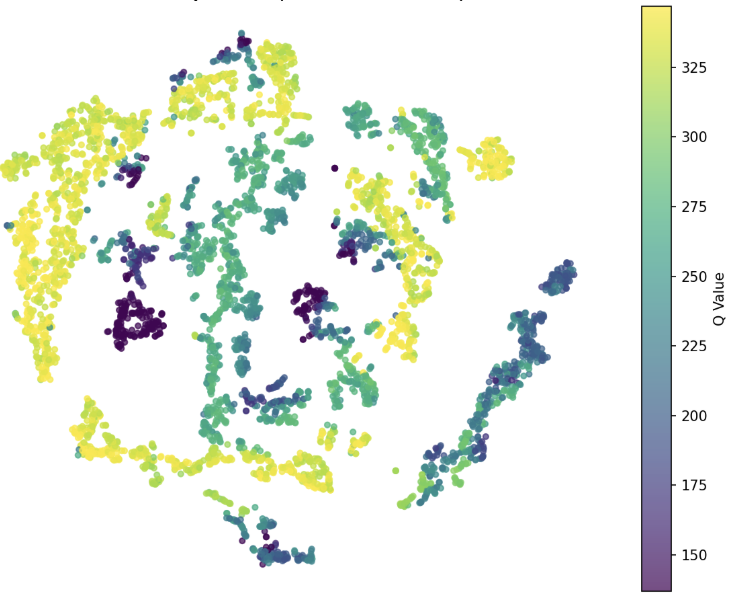}
    \caption{HP-ME}
    \label{fig:residual_hpme}
  \end{subfigure}
  
  \vspace{-0.5em}
  \caption{Visualization of the $(s,a)$ distribution across 10 tasks.}
  \label{fig:residual_geometry_2x5}
  \vspace{-1em}
\end{figure*}

\subsection{Full Visualization of Training Curves}
Fig.~\ref{fig:curves} shows training curves for the highest-scoring settings across 10 tasks, averaged over three random seeds (0, 42, 123).
\begin{figure*}[t]
  \centering

  \begin{subfigure}{0.19\textwidth}
    \includegraphics[width=\linewidth]{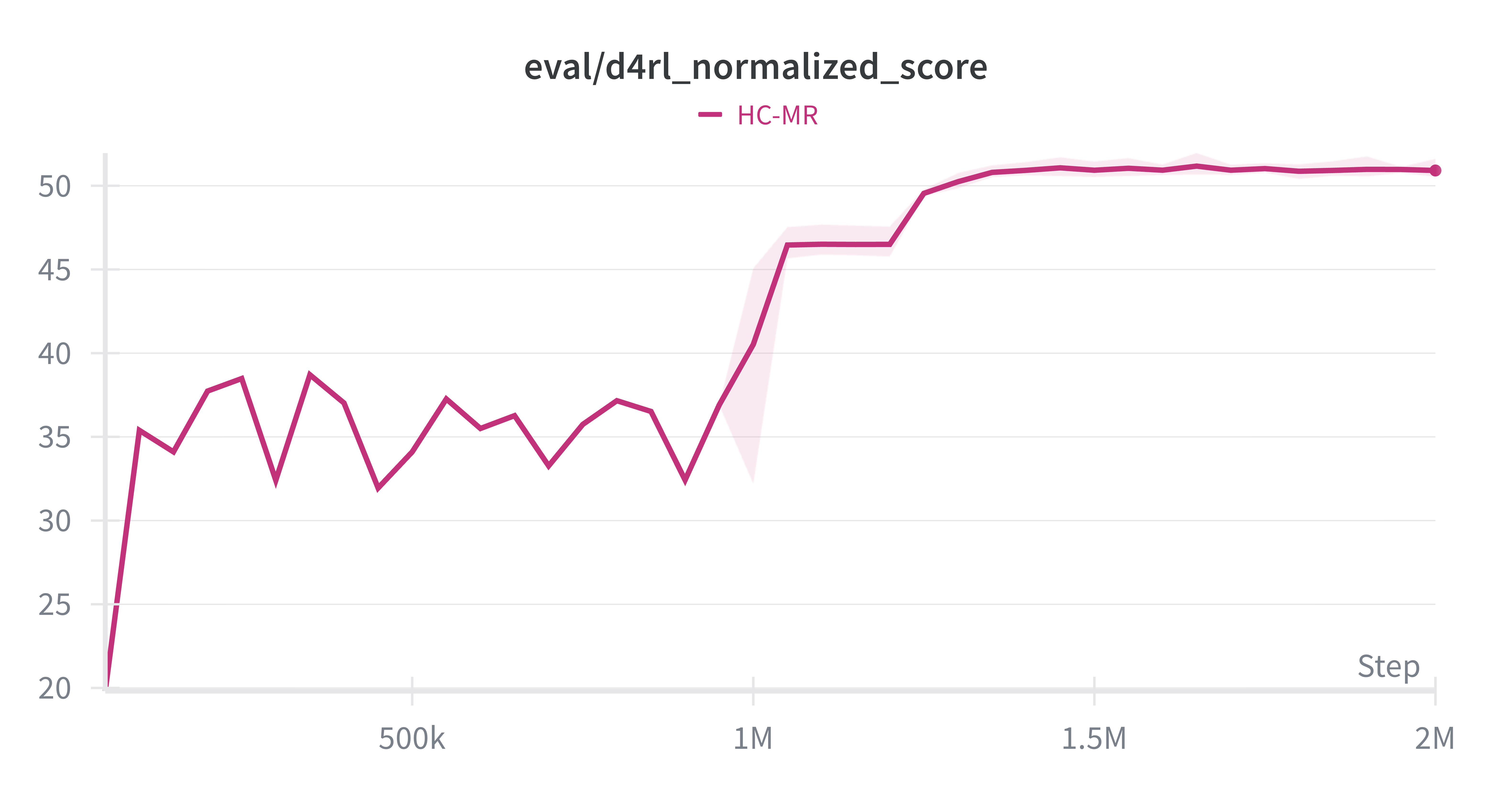}
    \caption{HC-MR}
    \label{fig:curve_hcmr}
  \end{subfigure}
  \hfill
  \begin{subfigure}{0.19\textwidth}
    \includegraphics[width=\linewidth]{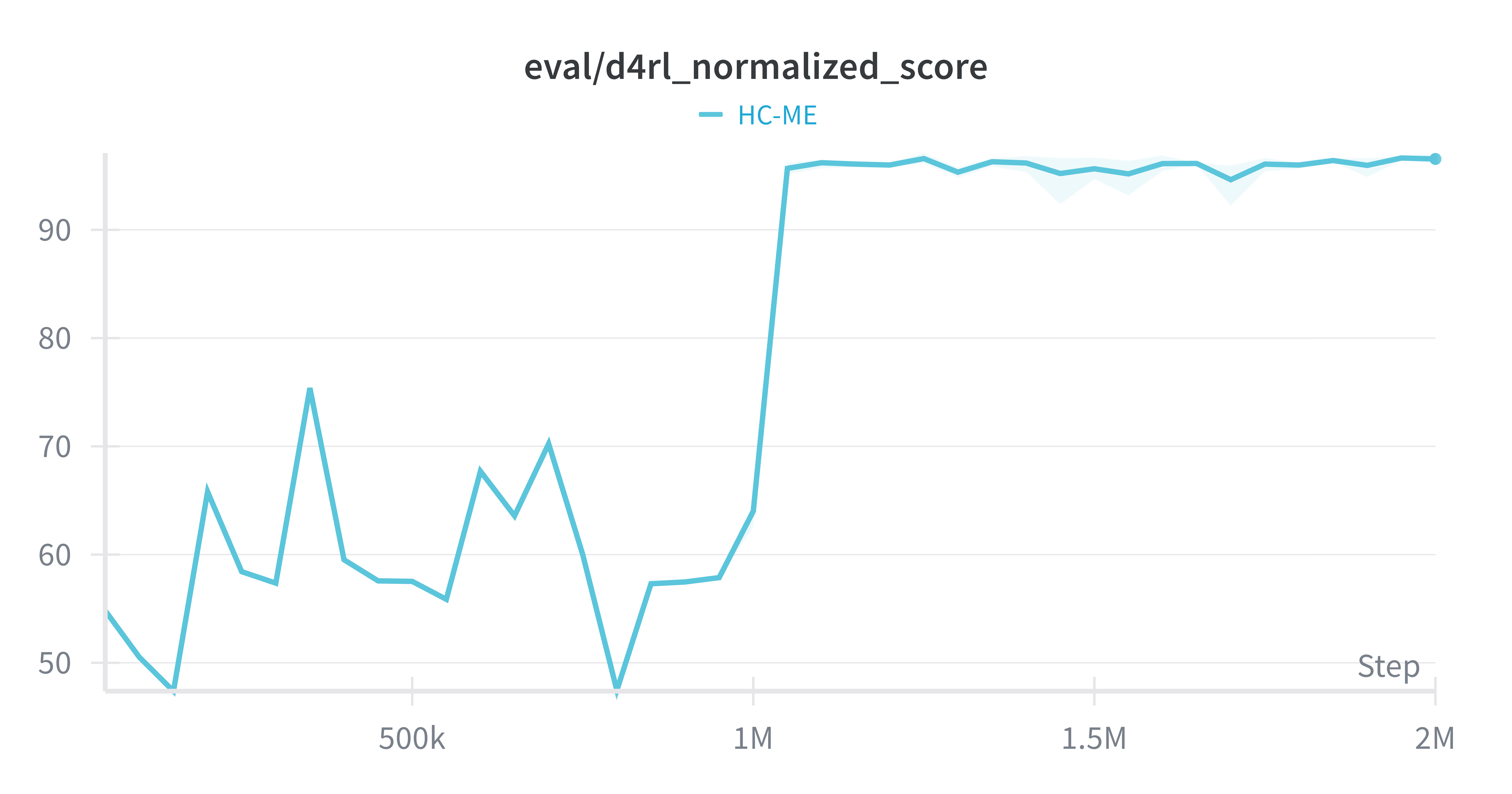}
    \caption{HC-ME}
    \label{fig:curve_hcme}
  \end{subfigure}
  \hfill
  \begin{subfigure}{0.19\textwidth}
    \includegraphics[width=\linewidth]{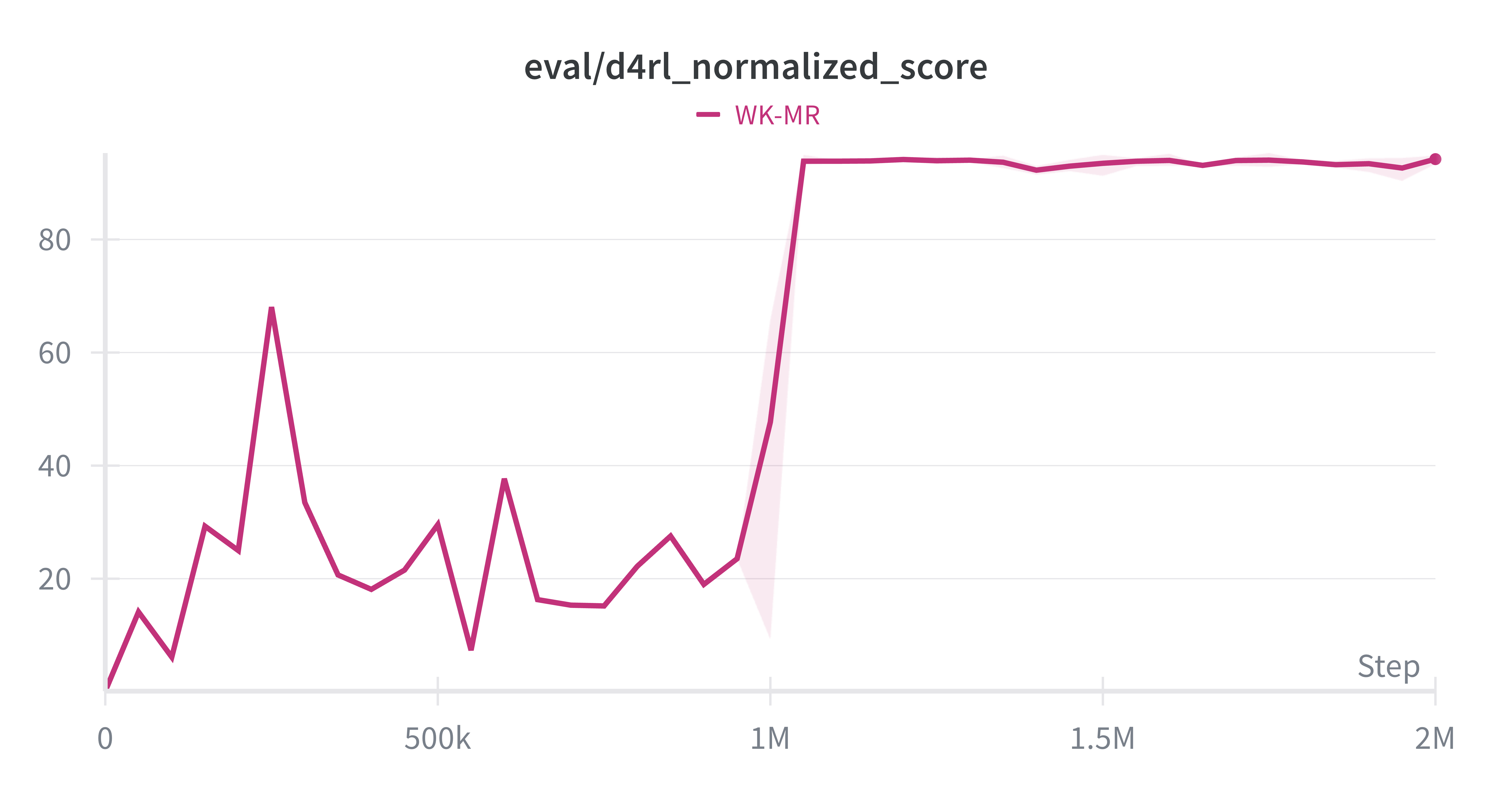}
    \caption{WK-MR}
    \label{fig:curve_wkmr}
  \end{subfigure}
  \hfill
  \begin{subfigure}{0.19\textwidth}
    \includegraphics[width=\linewidth]{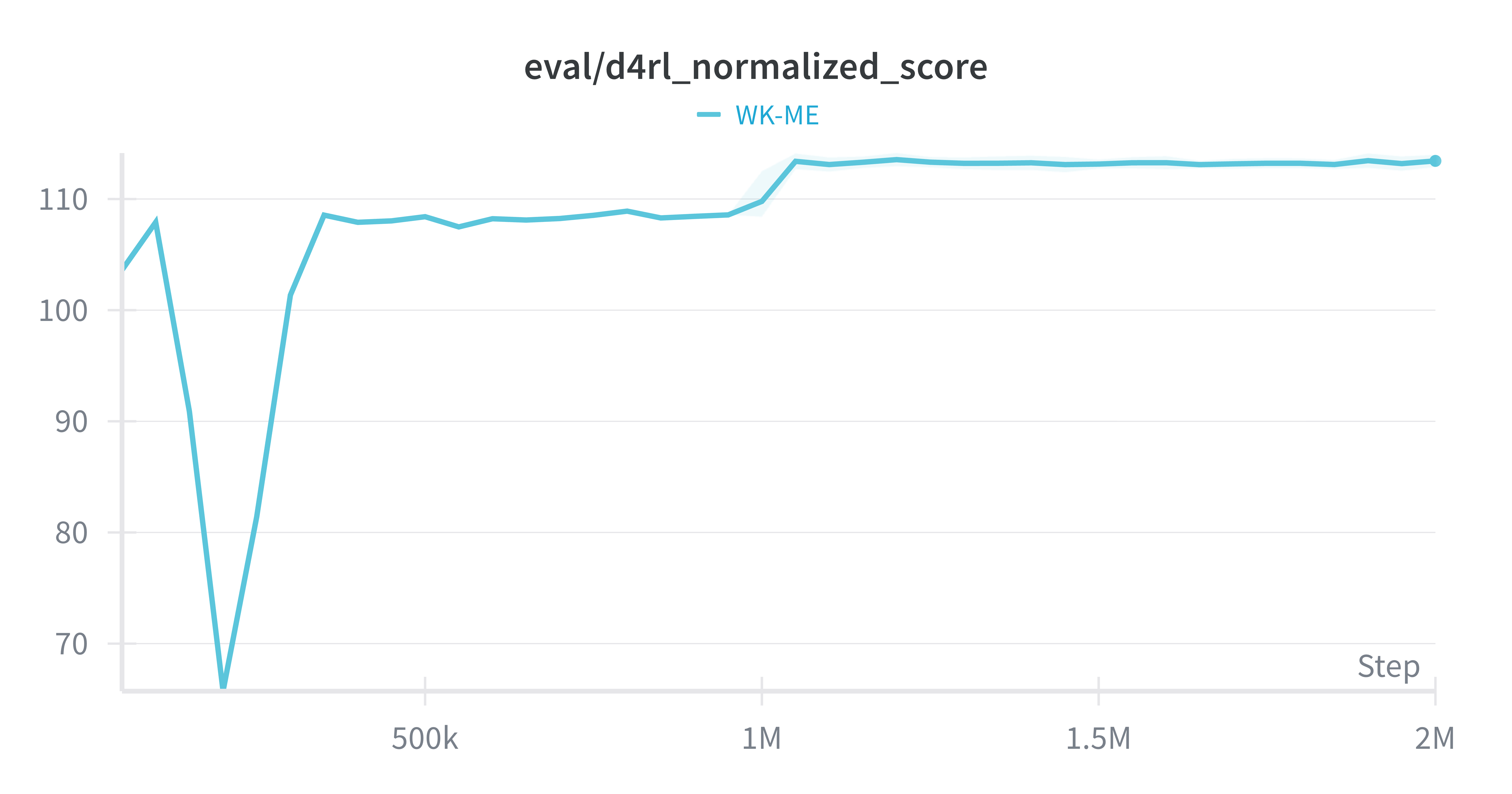}
    \caption{WK-ME}
    \label{fig:curve_wkme}
  \end{subfigure}
  \hfill
  \begin{subfigure}{0.19\textwidth}
    \includegraphics[width=\linewidth]{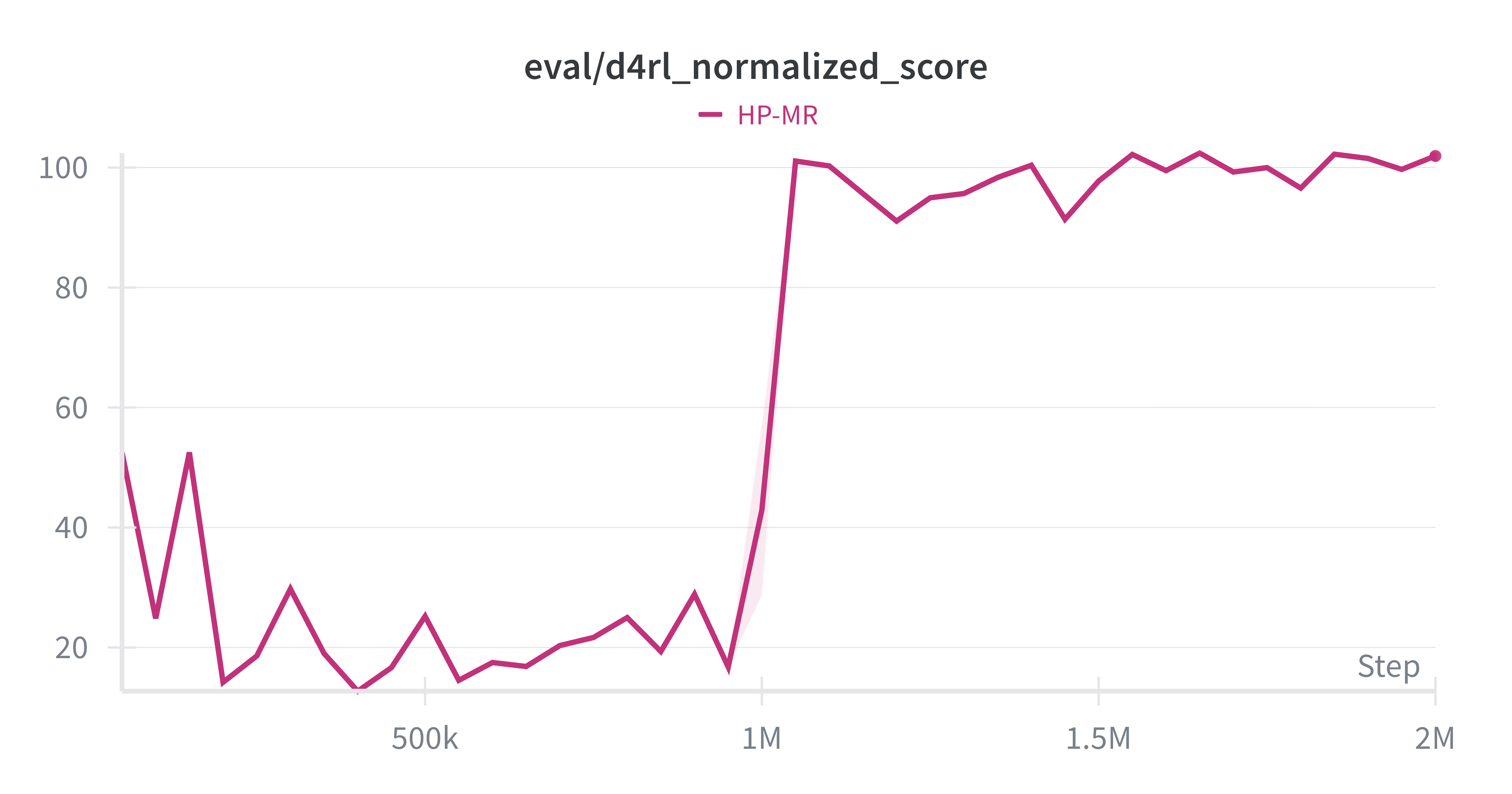}
    \caption{HP-MR}
    \label{fig:curve_hpmr}
  \end{subfigure}
  
  \vspace{0.5em} 
  
  \begin{subfigure}{0.19\textwidth}
    \includegraphics[width=\linewidth]{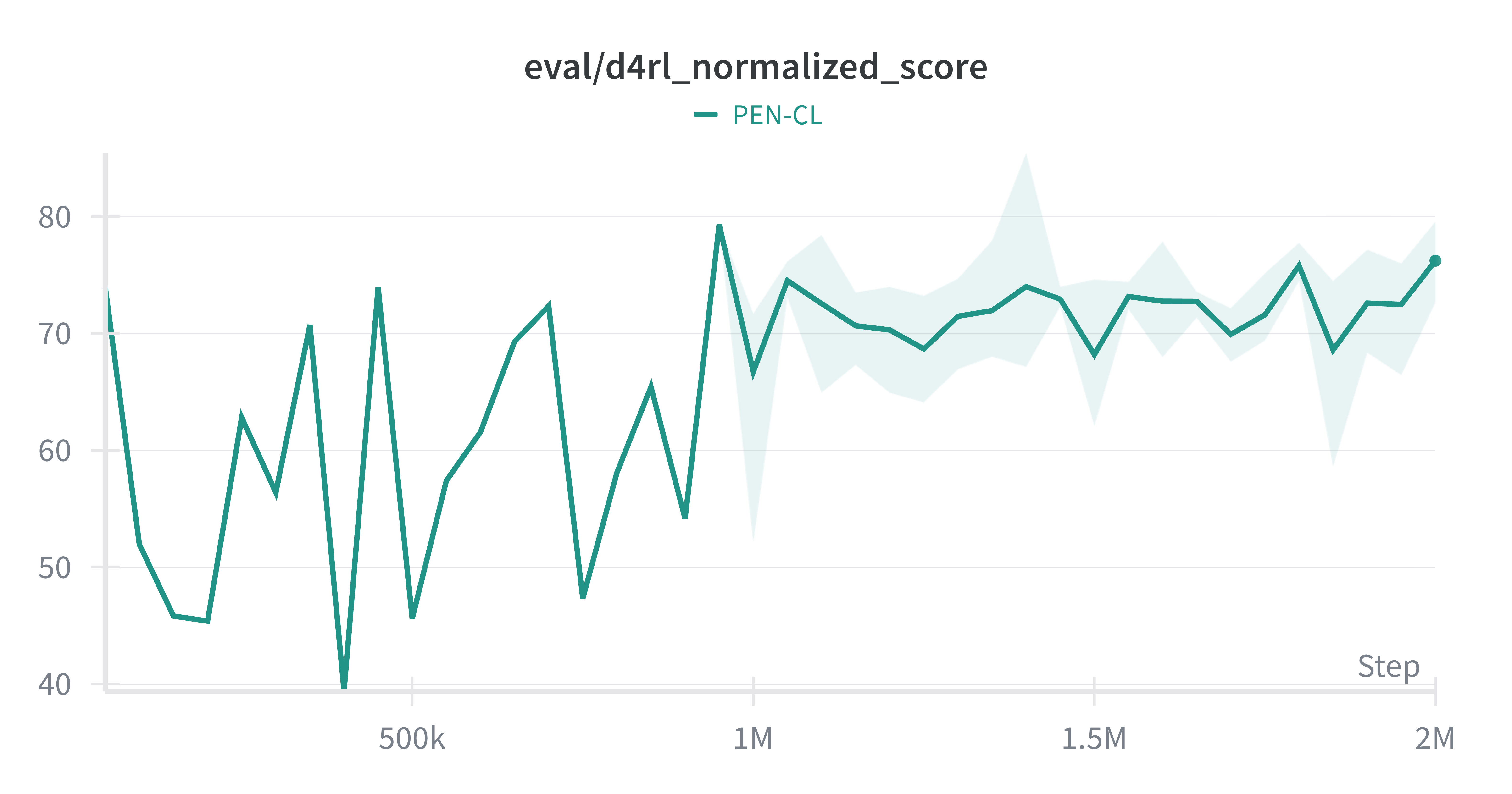}
    \caption{Pen-CL}
    \label{fig:curve_pencl}
  \end{subfigure}
  \hfill
  \begin{subfigure}{0.19\textwidth}
    \includegraphics[width=\linewidth]{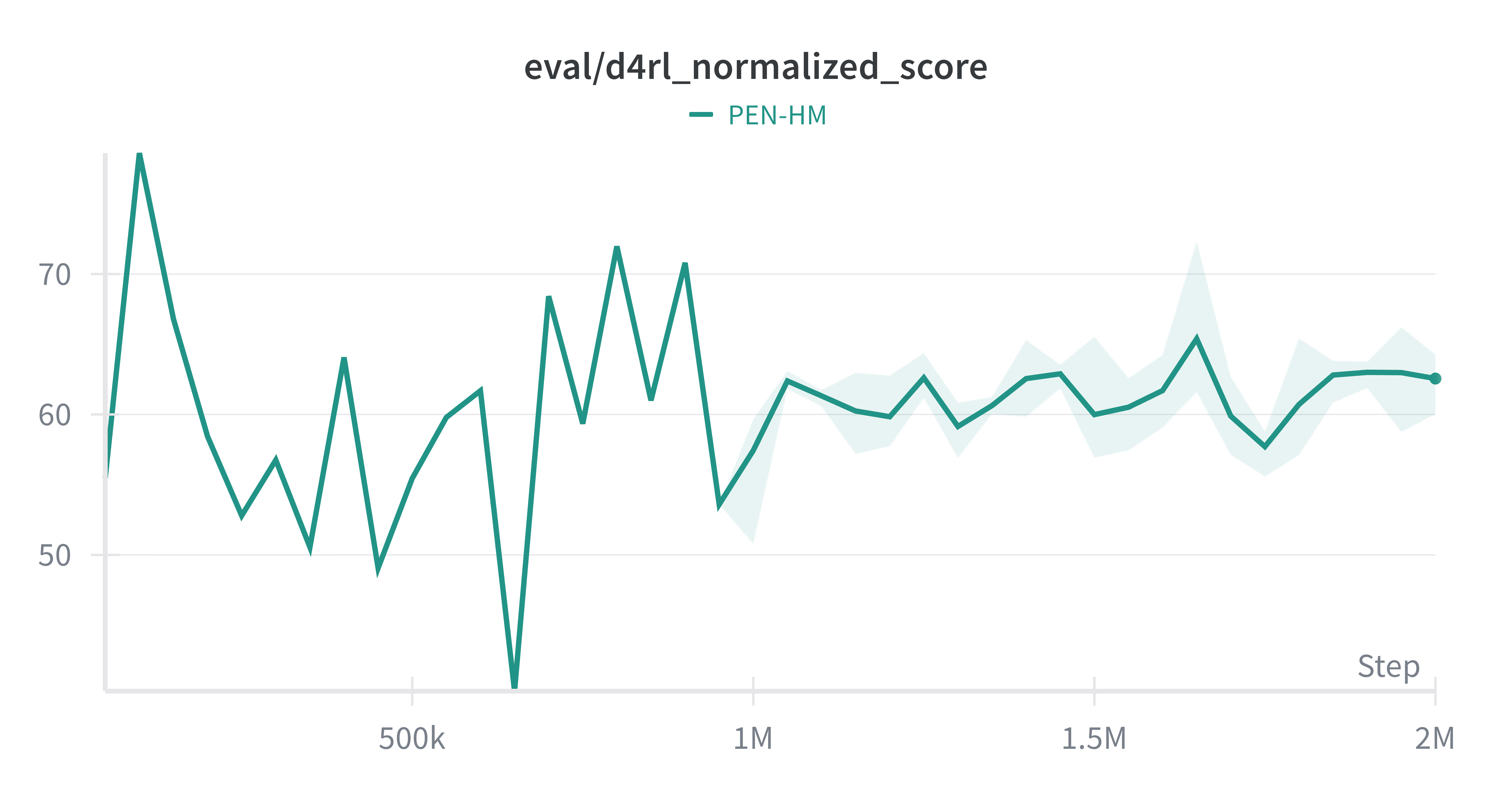}
    \caption{Pen-HM}
    \label{fig:curve_penhm}
  \end{subfigure}
  \hfill
  \begin{subfigure}{0.19\textwidth}
    \includegraphics[width=\linewidth]{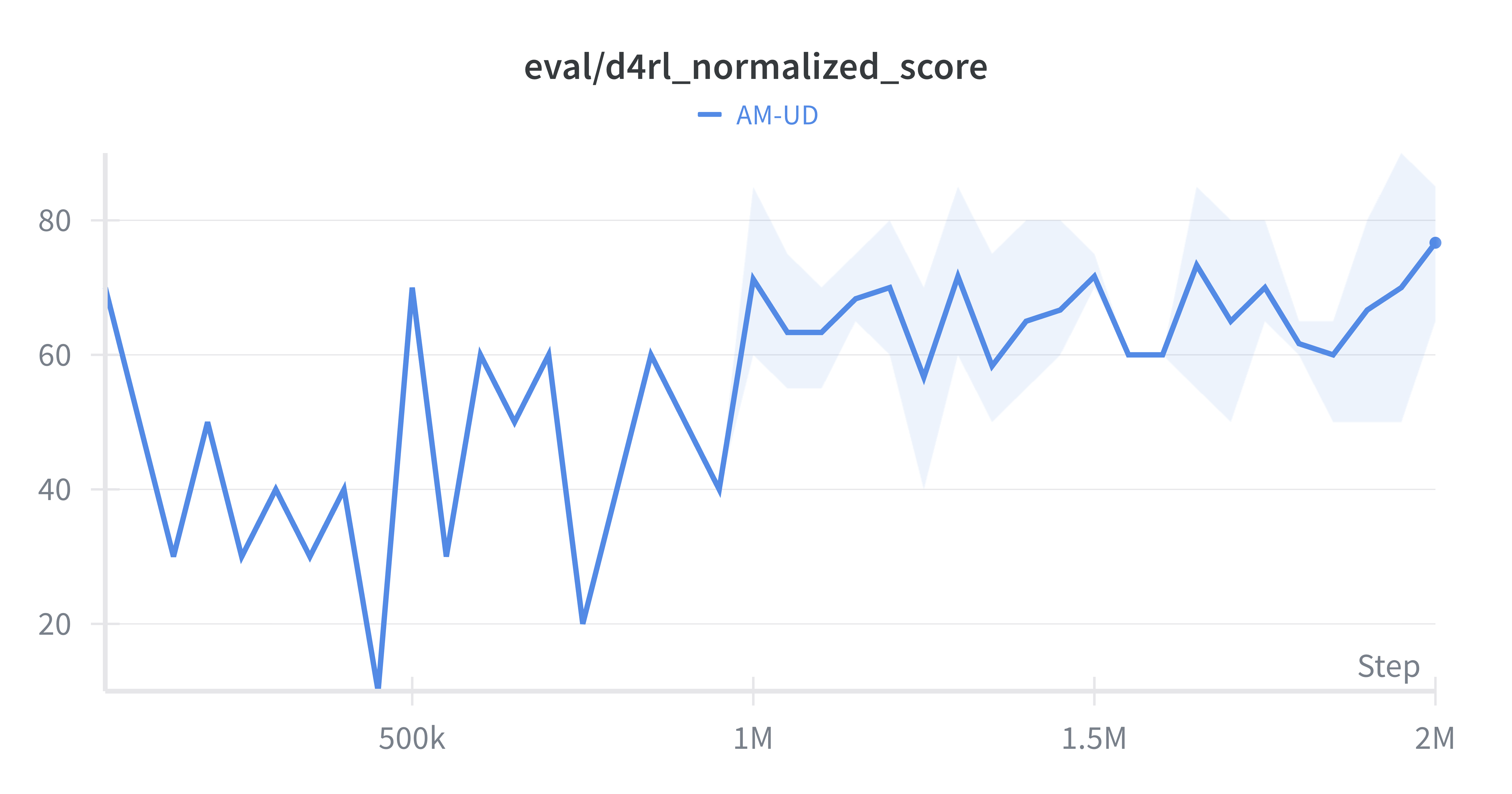}
    \caption{AM-UD}
    \label{fig:curve_amud}
  \end{subfigure}
  \hfill
  \begin{subfigure}{0.19\textwidth}
    \includegraphics[width=\linewidth]{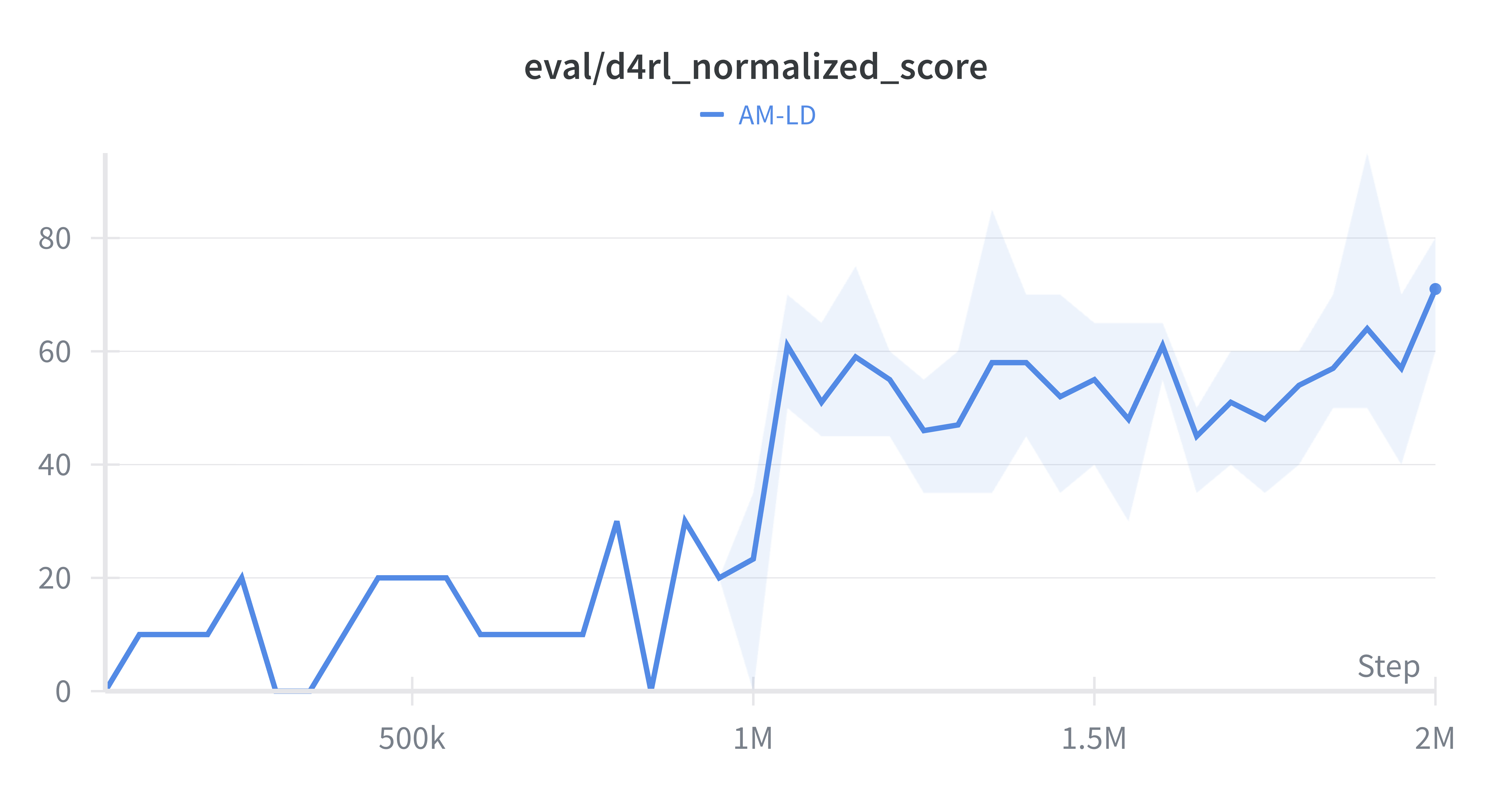}
    \caption{AM-LD}
    \label{fig:curve_amld}
  \end{subfigure}
  \hfill
  \begin{subfigure}{0.19\textwidth}
    \includegraphics[width=\linewidth]{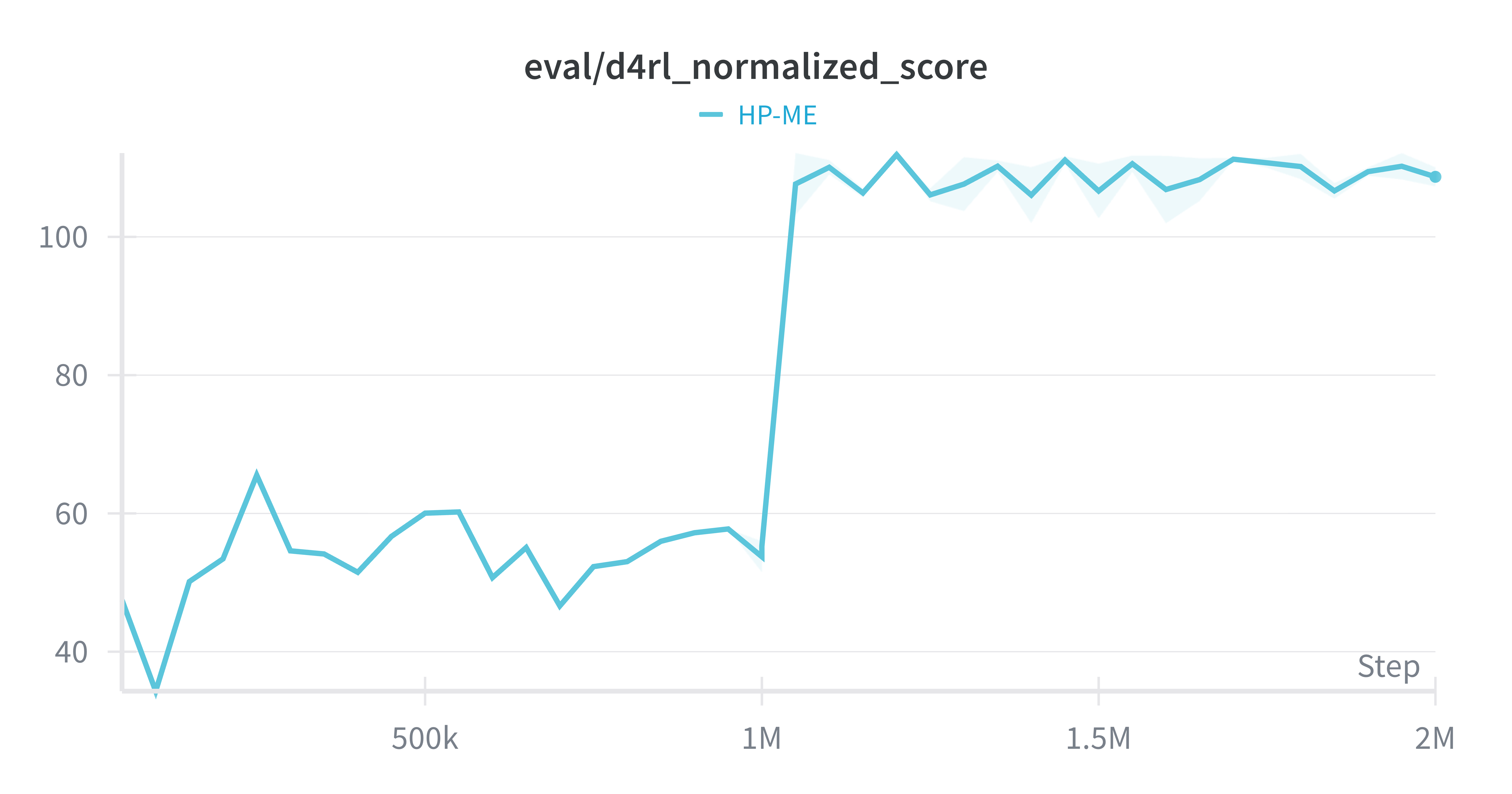}
    \caption{HP-ME}
    \label{fig:curve_hpme}
  \end{subfigure}
  
  \vspace{-0.5em}
  \caption{Visualization of the training curves for the highest-scoring settings across 10 tasks, averaged over three random seeds (0, 42, 123). The training step length for Stage I is 1M, and the training step length for Stage II is also 1M, starting from step = 1M.}
  \label{fig:curves}
  \vspace{-1em}
\end{figure*}

\end{document}